\documentclass{article}

\pdfoutput=1

\input{glyphtounicode}
\PassOptionsToPackage{numbers,compress}{natbib}

\usepackage[preprint]{neurips_2026}

\usepackage[utf8]{inputenc} 
\usepackage[T1]{fontenc}    
\usepackage{hyperref}       
\usepackage{url}            
\usepackage{booktabs}       
\usepackage{amsfonts}       
\usepackage{nicefrac}       
\usepackage{microtype}      
\usepackage[table]{xcolor}  

\usepackage{graphicx}
\usepackage{amsmath}
\usepackage{amssymb}
\usepackage{mathtools}
\usepackage{amsthm}

\usepackage[capitalize,noabbrev]{cleveref}

\theoremstyle{plain}
\newtheorem{theorem}{Theorem}[section]
\newtheorem{proposition}[theorem]{Proposition}
\newtheorem{lemma}[theorem]{Lemma}
\newtheorem{corollary}[theorem]{Corollary}
\theoremstyle{definition}

\theoremstyle{remark}

\usepackage{latexsym}

\usepackage{algorithm}


\usepackage{algpseudocode} 
\algrenewcommand\alglinenumber[1]{} 

\usepackage{multirow}
\usepackage{balance}
\usepackage{enumitem}
\usepackage{xspace}

\newcommand{\model}{TR-RAG}
\providecommand{\KL}{\mathrm{KL}}

\newcommand{\stitle}[1]{\vspace{1mm} \noindent {\bf #1}}

\newcommand{\wrt}{w.r.t. }

\newcommand{\RightComment}[1]{\hfill$\triangleright$~#1}

\usepackage{caption}
\usepackage{type1cm} 
\usepackage{lettrine}

\usepackage{paracol} 
\usepackage{wrapfig}

\usepackage[disable,textsize=tiny]{todonotes}

\usepackage{pifont} 

\title{Distill Where the Student Goes: Teacher-Regularized RL for English-Evidence Cross-Lingual RAG}

\author{%
  Haotian Zhou$^{1,2,\ast\ddagger}$ \quad
  Weiran Huang$^{1,3,\ast}$ \quad
  Siqi Liu$^{2,4,\ddagger}$ \quad
  Xiting Wang$^{4}$ \\
  \bfseries Xin Zhang$^{2}$ \quad
  Zhihao Wen$^{2,\dagger}$ \\
  \vspace{1mm}\\
  \textsuperscript{1}School of Computer Science, Shanghai Jiao Tong University \quad
  \textsuperscript{2}Ant International, Ant Group \\
  \textsuperscript{3}Shanghai Innovation Institute \quad
  \textsuperscript{4}Gaoling School of Artificial Intelligence, Renmin University of China
}

\begin{document}

\maketitle

{\renewcommand{\thefootnote}{\fnsymbol{footnote}}%
  \footnotetext[1]{Equal contribution. Contact: \texttt{haotianzhou@sjtu.edu.cn}, \texttt{weiran.huang@outlook.com}.}%
  \footnotetext[2]{Corresponding author. Contact: \texttt{z.wen@ant-intl.com}.}%
  \footnotetext[3]{This work was done during the authors' internship at Ant International, Ant Group.}%
}
\setcounter{footnote}{0}

\begin{abstract}
Cross-lingual retrieval-augmented generation (RAG) is often deployed in an \emph{English-evidence} regime, where users query in diverse languages but retrieved passages remain English.
In this setting, generation can fail despite strong base models: English evidence induces \emph{language drift} (English or code-switching outputs) and models use evidence unreliably when producing non-English answers.
We attribute these failures to two post-training challenges: (i) errors are \emph{prefix-dependent}, so fixed-trajectory supervision suffers from \emph{prefix mismatch}; and (ii) sequence-level (partly discrete / judge-based) rewards yield noisy credit assignment and high-variance updates.
We propose \textbf{\model}, a teacher-regularized RL recipe that couples reward optimization with \emph{on-policy distillation} on student-visited prefixes.
A compact student samples on-policy answers, while a stronger \emph{frozen} teacher is queried only on those prefixes and provides a prefix-wise student-to-teacher reverse-KL anchor.
We further introduce a reward decomposition for English-evidence multilingual generation, combining language consistency, character 3-gram recall, and an LLM-judge score for evidence-grounded correctness.
Across three benchmarks (BioASQ-ENKB5, Hotpot-ENKB5, and naturally multilingual MKQA) and two backbones, \model\ improves the composite of language adherence and evidence-grounded correctness over strong baselines. Crucially, the teacher anchor acts as a \emph{safety net}: on in-domain languages it prevents the large language-consistency collapses (up to $\sim$27 percentage points) that reward-only RL can suffer by drifting below even the base model, while on distant out-of-distribution languages, where reward-only RL stalls at the base model's ceiling, it still improves evidence grounding; and on character 3-gram recall the compact student sometimes surpasses its 70B teacher.
\end{abstract}

\section{Introduction}
\label{sec:intro}

Retrieval-augmented generation (RAG) grounds LLMs in external evidence and enables rapid knowledge updates
\citep{DBLP:conf/nips/LewisPPPKGKLYR020,Gao2023RetrievalAugmentedGF}.
Many deployments are cross-lingual: users query and expect answers in diverse languages, while knowledge bases remain predominantly English
\citep{DBLP:journals/tacl/ClarkPNCGCK20,asai2021xor,zhang-etal-2023-miracl}.
We study this practical regime (\emph{English-evidence cross-lingual RAG}), where queries and answers are non-English but retrieved passages are English.
When retrieval is reasonably strong, the main bottleneck shifts to the \textbf{generation side} \citep{Qi2025OnTC}: the model must use
\textit{English} evidence while producing a fluent, correct \emph{non-English} response: selecting, aggregating, and realizing evidence in the target language within a single decoding trajectory.
We therefore hold retrieval fixed and focus on generator post-training.

\stitle{Industrial motivation.}
This setting is a recurring production bottleneck. In cross-border merchant assistants and global customer-service systems,
documentation and policy text are often English-only while users query in their native languages \citep{DBLP:journals/corr/abs-2505-10089, DBLP:journals/corr/abs-2407-01463}, so the retriever returns
English passages and the generator must respond fluently in the user's language. A common failure is \emph{language drift}: the model switches output language even when the underlying answer is correct \citep{li2025language, Park2025InvestigatingLP}. This motivates our problem formulation, reward design, and prefix-wise teacher anchor.

\stitle{Two generation-stage challenges and their shared optimization bottleneck.}
With English passages injected into context, LLMs face two salient issues:
\textbf{\ding{182} Language drift under English evidence.}
Models often answer in English or code-switch under query--evidence language mismatch \citep{DBLP:journals/corr/abs-2505-10089};
English evidence shifts next-token probabilities toward English continuations, a pull that reasoning-style decoding can further amplify \citep{wei2022chain,li2025language}.
\textbf{\ding{183} Cross-lingual evidence use with non-English answers.}
Models must identify relevant spans, aggregate across documents, and express conclusions faithfully in the target language; in practice they may attend to irrelevant evidence, aggregate it unreliably, or introduce unsupported details \citep{DBLP:journals/corr/abs-2407-01463}.
Both failures are strongly \emph{prefix-dependent} (an early wrong-language token or partial mistake compounds later errors), exposing two training bottlenecks:
\textbf{(i) Prefix mismatch}: SFT/offline KD provides gradients only on reference trajectories, so once the student deviates it visits failure prefixes with no direct supervision (\emph{exposure bias}) \citep{bengio2015scheduled,ranzato2015sequence,DBLP:conf/iclr/AgarwalVZSGGB24}.
\textbf{(ii) Noisy credit assignment}: language adherence and evidence correctness are judged at the sequence level (partly discrete / judge-based), yielding high-variance policy-gradient updates \citep{williams1992simple,sutton1999policy,ouyang2022training,kazemnejad2024vineppo}.

\stitle{Our approach.}
We propose \textbf{\model}, a teacher-regularized on-policy post-training recipe that directly targets (i)--(ii).
A compact student generates answers on-policy, while a stronger \emph{frozen} teacher is queried on
\emph{student-visited prefixes} to provide dense token-level guidance (no teacher rollouts) via a prefix-wise reverse-KL
anchor $\mathrm{KL}(\pi_\theta(\cdot|s_t)\|\pi_T(\cdot|s_t))$ \citep{DBLP:conf/iclr/AgarwalVZSGGB24}.
This discourages unlikely continuations, mitigating prefix mismatch and stabilizing reward optimization.
On top of this anchor, we optimize a cross-lingual task reward combining language consistency
\citep{DBLP:journals/corr/abs-2505-10089,li2025language}, character 3-gram recall
\citep{popovic-2015-chrf,DBLP:conf/bea/KulmizevBBNNPW17}, and an LLM-judge score
\citep{DBLP:conf/nips/ZhengC00WZL0LXZ23}.
Our main contributions:
\begin{itemize}[topsep=-1pt,itemsep=1.5pt,parsep=0pt,partopsep=0pt]
    \item We study \emph{English-evidence cross-lingual RAG} from a generation-centric perspective and identify two prefix-dependent failure modes: language drift and brittle evidence use.
    \item We introduce \model, which combines sequence-level rewards with \emph{on-policy distillation} via a dense prefix-wise reverse-KL anchor on student-visited prefixes.
    \item Across three benchmarks (BioASQ-ENKB5, Hotpot-ENKB5, and naturally multilingual MKQA) and two backbones, \model\ attains the best composite score and, unlike reward-only RL, acts as a \emph{safety net}: it prevents the large (up to $\sim$27pp) in-domain language-consistency collapses that reward-only RL can suffer by drifting below the base model, and still improves grounding on distant out-of-distribution languages (Norwegian, Khmer) where reward-only RL stalls at the base ceiling, while sometimes surpassing its 70B teacher on character 3-gram recall.
\end{itemize}

\begin{figure*}[tbp]
   \centering
   \includegraphics[width=0.95\linewidth]{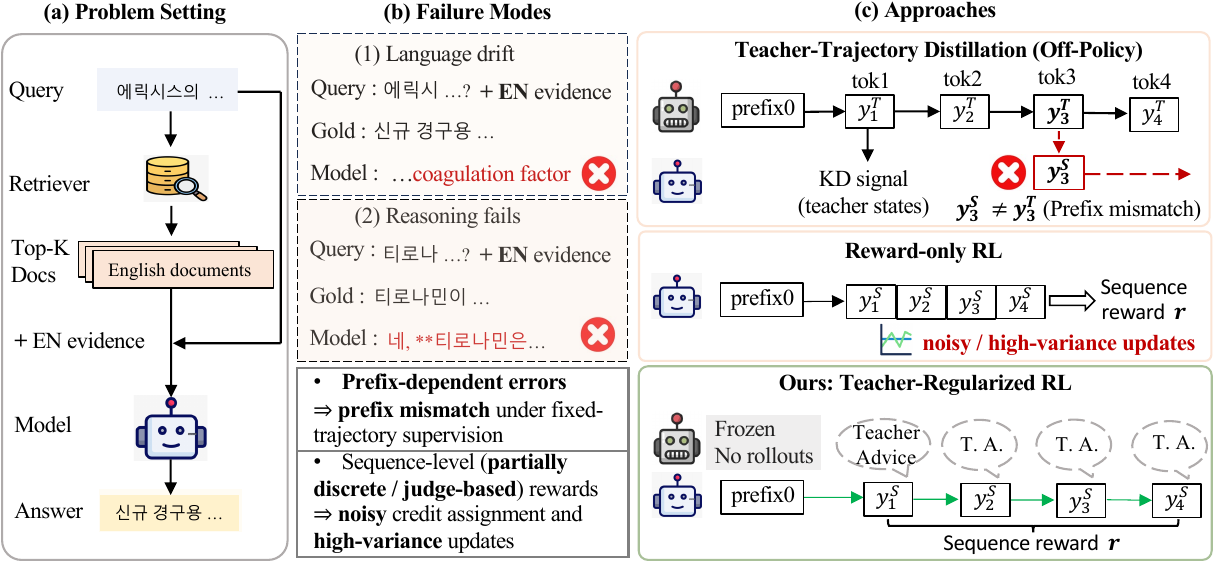}
   \setlength{\belowcaptionskip}{-3pt}
\caption{\textbf{Overview.}
\textbf{(a)} \emph{English-evidence cross-lingual RAG}: queries/answers are in a target language while retrieved passages are English. (The panel labels the retrieved documents ``Top-$K$'' in the generic retrieval sense; throughout the text their count is denoted $M$, as $K$ is reserved for the number of on-policy rollouts per input.)
\textbf{(b)} Two prefix-dependent failure modes (\emph{language drift} and \emph{evidence-use failures}) induce two training obstacles: \emph{prefix mismatch} under fixed-trajectory supervision and high-variance optimization under \emph{sequence-level} rewards.
\textbf{(c)} \model\ couples on-policy RL with on-policy distillation: a frozen teacher is queried on student-visited prefixes to produce a dense reverse-KL anchor $\mathrm{KL}(\pi_\theta\|\pi_T)$ (no teacher rollouts). ``T.A.'' denotes this \emph{teacher anchor}.
}
   \label{fig:motivation}
\end{figure*}
\section{Preliminaries}
\label{sec:prelim}

\stitle{Task: English-evidence cross-lingual RAG.}
We study a practical cross-lingual RAG regime where the knowledge base is English-only.
Given a user query $q$ written in a target language $\ell$,
a retriever returns a set of top-$M$ English passages $D=\{d_k\}_{k=1}^{M}$.
The generator must produce an answer $y=(y_1,\ldots,y_L)$ of length $L$ in the same target language $\ell$,
grounded in $D$.

\stitle{Policies and decoding states.}
Let $x=(q,D)$ denote the conditioning input.
We denote the student/policy model by $\pi_\theta(y\mid x)$ and a stronger teacher model by $\pi_T(y\mid x)$.
Generation is autoregressive:
\begin{align}
\pi_\theta(y\mid x)=\prod_{t=1}^{L}\pi_\theta(y_t\mid x,y_{<t}).
\end{align}
We use $s_t=(x,y_{<t})=(q,D,y_{<t})$ to denote the decoding state at step $t$.

\stitle{On-policy RL fine-tuning.}
We consider on-policy fine-tuning of $\pi_\theta$ with a sequence-level task reward $r(x,y)$,
where $y\sim \pi_\theta(\cdot\mid x)$ is a sampled rollout.
When using policy-gradient updates, a standard objective is
\begin{align}
\max_{\theta}\;\; \mathbb E_{x}\,\mathbb E_{y\sim\pi_\theta(\cdot\mid x)}\big[r(x,y)\big],
\label{eq:pg_prelim}
\end{align}
typically implemented with a variance-reduction baseline.
Our implementation adopts a GRPO-style \citep{shao2024deepseekmath, DBLP:journals/corr/abs-2501-12948} on-policy update, which estimates advantages using $K$ rollouts per input
and a group mean baseline.

\stitle{Positioning \wrt related work.}
\model\ couples this reward optimization with \emph{on-policy distillation} \citep{DBLP:conf/iclr/AgarwalVZSGGB24}. Two contrasts locate our contribution. (i)~Unlike pure on-policy distillation (GKD/OPD), which uses the teacher anchor as the \emph{sole} training signal, we combine the anchor with sequence-level task rewards in a single objective, and show the combination strictly dominates either alone on the composite metric (Section~\ref{sec:exp:In-depth}). (ii)~Unlike concurrent RL recipes for cross-lingual RAG (LcRL~\citep{qi2026lcrl}, CroSearch-R1~\citep{qi2026crosearch}) that optimize the \emph{retrieval} side over multilingual corpora, we hold retrieval fixed and target \emph{generation} under English-only evidence, with a cross-lingual reward decomposition (Section~\ref{sec:method:reward}). A comprehensive discussion is deferred to Appendix~\ref{sec:app:related}.

\section{Methodology}
\label{sec:method}

\subsection{Overview}
\label{sec:method:overview}
We study English-evidence cross-lingual RAG generation, where retrieved evidence $D$ is English but the required answer
must follow the user language $\ell$.
As shown in Fig.~\ref{fig:framework}, given inputs $(q,D)$ we (i) sample on-policy answers and compute task rewards,
(ii) query a stronger teacher on the student-visited prefix states, and (iii) update the student by maximizing a reward--regularizer objective.
Additional theoretical notes appear in Appendix~\ref{app:proofs}.

\subsection{Reward Decomposition}
\label{sec:method:reward}

In the English-evidence cross-lingual RAG setting, we need training signals that jointly encourage (i) strict adherence to the target language, (ii) high answer accuracy relative to reference answers, and (iii) faithful grounding in retrieved evidence. To this end, we decompose the task reward into three complementary components, each designed to capture a distinct aspect of generation quality in multilingual contexts:
\begin{equation}
r(q,D,y)=\lambda_{\text{lang}}\, r_{\text{lang}}(y,a^\star)+\lambda_{\text{3g}}\, r_{\text{3g}}(y,a^\star)+\lambda_{\text{judge}}\, r_{\text{judge}}(q,D,a^\star,y),
\label{eq:reward_total}
\end{equation}
where $a^\star$ is the reference answer (only available during training) and $\lambda_{\text{lang}},\lambda_{\text{3g}},\lambda_{\text{judge}}\ge0$ control the relative importance of each signal.\footnote{We fix $\lambda_{\text{lang}}{=}0.5,\ \lambda_{\text{3g}}{=}\lambda_{\text{judge}}{=}1$ in all runs. The deployed reward also carries two implementation details described in Appendix~\ref{app:reward_weights}: a bounded length penalty $r_{\text{len}}$ folded into the language term to discourage runaway-length outputs, and a rule that skips the LID check for strings shorter than 10 characters, on which LID is unreliable. Because these terms leave $r$ bounded (albeit no longer in $[0,1]$), our stability bounds hold up to the reward range (Appendix~\ref{app:stability}).}

\stitle{Language-consistency reward.}
A key failure mode in multilingual RAG is \emph{language drift}: once English evidence is injected, models may output English or mixed-language text even when the query is in another language \citep{DBLP:journals/corr/abs-2505-10089, li2025language}. To directly counteract this, we include an explicit language-consistency signal:
\begin{equation}
r_{\text{lang}}(y,a^\star)=\mathbb{I}\!\left[\mathrm{LID}(y)=\mathrm{LID}(a^\star)\right],
\label{eq:reward_lang}
\end{equation}
where $\mathrm{LID}(\cdot)$ is a lightweight language identification (LID) classifier (the fastText \texttt{lid.176} model \citep{joulin2017bag, grave2018learning}) and $a^\star$ is the reference answer. Since we ensure that the reference answer language matches the query language during dataset construction (Appendix~\ref{sec:app:exp:datasets}), this is equivalent to checking $\mathrm{LID}(y)=\ell(q)$ in practice, up to LID errors on very short reference strings (see Appendix~\ref{sec:app:exp:experiment:main}).

\stitle{Character 3-gram recall reward.}
Quantifying answer-level accuracy in multilingual contexts is challenging due to morphological variation and tokenization issues. Prior work in machine translation and multilingual evaluation demonstrates that character-level n-gram metrics such as chrF/chrF++ correlate better with human judgments than token-level scores \citep{popovic-2015-chrf, popovic-2017-chrf}, especially across diverse languages. This is especially relevant for our target languages: Thai lacks whitespace word boundaries, Korean is agglutinative with rich morphological inflection, and Vietnamese carries tone-bearing diacritics; in all three, token-level F1/EM is fragile to tokenization choices, while character n-grams are not. Inspired by this, we include a character 3-gram recall term as a lightweight lexical proxy:
\begin{equation}
r_{\text{3g}}(y,a^\star)=\frac{\left|\mathcal{G}_3(y)\cap \mathcal{G}_3(a^\star)\right|}{\left|\mathcal{G}_3(a^\star)\right|},
\label{eq:reward_3gram}
\end{equation}
where $\mathcal{G}_3(\cdot)$ denotes the \emph{multiset} of character 3-grams. For multisets,
$\left|\mathcal{G}_3(y)\cap \mathcal{G}_3(a^\star)\right|
=\sum_{g}\min\{c_y(g),c_{a^\star}(g)\}$, where $c_y(g)$ is the count of 3-gram $g$ in $y$.
This reward captures partial surface overlap and morphological variants that are common in multilingual generation, providing a continuous signal that complements strict match metrics without requiring language-specific tokenization.\footnote{When the reference answer has fewer than 3 characters, $|\mathcal{G}_3(a^\star)|=0$; we define $r_{\text{3g}}=0$ in this edge case to avoid division by zero.}

\stitle{LLM-judge reward.}
Lexical proxies capture surface overlap but cannot fully reflect semantic correctness or evidence grounding.
Recent work shows that LLMs can act as effective context-conditioned evaluators \citep{DBLP:conf/nips/ZhengC00WZL0LXZ23}.
We therefore introduce an LLM-as-a-judge signal:
\begin{align}
r_{\text{judge}}(q,D,a^\star,y)\in[0,1],
\label{eq:reward_judge}
\end{align}
where the evaluator scores how well $y$ (i) answers the query, (ii) is supported by evidence $D$, and (iii) matches the reference $a^\star$ in meaning and completeness.
This continuous signal provides nuanced feedback when exact matches are rare in multilingual QA.
We use $r_{\text{judge}}$ only as one term in our reward decomposition and verify its reliability via an alternative judge on a sampled subset (Appendix~\ref{sec:app:exp:metrics:judge_reliability}).
Together, these complementary rewards encourage \model\ to generate responses that are linguistically correct, evidence-grounded, and semantically accurate across languages.

\begin{figure*}[t]
   \centering
   \includegraphics[width=0.95\linewidth]{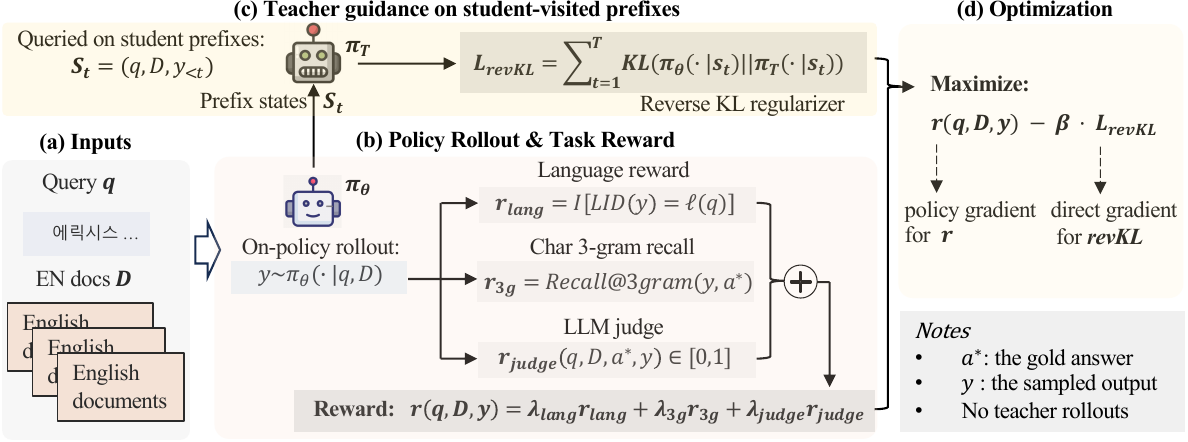}
\caption{
\textbf{Framework.}
(a) Inputs: a multilingual query $q$ and English-only evidence $D$.
(b) The student samples on-policy rollouts and receives the task reward.
(c) The teacher is queried only on student prefixes to compute a KL anchor (no teacher rollouts).
(d) The student is updated with policy gradients and a frozen-teacher reverse-KL anchor.
}
   \label{fig:framework}
\end{figure*}

\subsection{RL Fine-tuning with On-policy Distillation}
\label{sec:method:opd}

\noindent\textbf{(Fig.~\ref{fig:framework}c--d)}
We combine reward-driven RL (Fig.~\ref{fig:framework}b) with prefix-wise teacher regularization (Fig.~\ref{fig:framework}c),
and optimize the objective in Fig.~\ref{fig:framework}(d).
In English-evidence cross-lingual RAG, errors are highly \emph{prefix-dependent}: an early wrong-language token or
incorrect partial claim can push later states $(x,y_{<t})$ away from the support of standard supervision.
We therefore regularize the student on the prefixes it actually visits, directly addressing the prefix mismatch in
Fig.~\ref{fig:framework}(c).
Because our reward is sequence-level and partly discrete (e.g., LID), teacher guidance on student prefixes also provides
a dense, low-variance token-level stabilizer.

\stitle{Student rollouts (on-policy trajectories).}
\noindent\textbf{(Fig.~\ref{fig:framework}b)}
For each training instance $x=(q,D)$, we sample $K$ on-policy student rollouts:
\begin{align}
y^{(i)} \sim \pi_\theta(\cdot\mid x),\qquad i=1,\ldots,K.
\label{eq:rollout_group}
\end{align}
Both task reward and teacher regularization are computed on these trajectories, so training matches the prefix
distribution encountered at test time.

\stitle{Teacher guidance on student-visited prefixes.}
\noindent\textbf{(Fig.~\ref{fig:framework}c)}
Let $y^{(i)}=(y^{(i)}_1,\ldots,y^{(i)}_{L_i})$ and define the student prefix state as
$s^{(i)}_t=(x,y^{(i)}_{<t})=(q,D,y^{(i)}_{<t})$.
Instead of asking the teacher to generate its own rollouts, we query it only on the states actually visited by the student:
\begin{align}
\pi_T(\cdot \mid s^{(i)}_t)\quad \text{for } i=1,\ldots,K,\ \ t=1,\ldots,L_i.
\end{align}
This directly addresses \emph{prefix mismatch}: even after an early deviation (e.g., language drift), the teacher still
provides a corrective distribution on that state, where offline KD would offer no supervision.\footnote{
The teacher is never used for rollouts; it only provides token distributions on student-generated prefixes.
}
These teacher queries also provide a low-variance token-level learning signal that complements the higher-variance reward.

\stitle{``Reverse KL'' as a teacher anchor.}
\noindent\textbf{(Fig.~\ref{fig:framework}c)}
We anchor the student to the teacher on student-visited prefixes with a length-normalized student-to-teacher KL:
\begin{align}
\mathcal{L}_{\text{revKL}}(x,y^{(i)})
=
\frac{1}{L_i}
\sum_{t=1}^{L_i}
\mathrm{KL}\!\Big(\pi_\theta(\cdot\mid s^{(i)}_t)\ \big\|\ \pi_T(\cdot\mid s^{(i)}_t)\Big),
\label{eq:token_revkl}
\end{align}
where $L_i$ is the number of generated tokens in $y^{(i)}$, including the terminal EOS token.
Relative to the standard distillation form $\mathrm{KL}(\pi_T(\cdot|s)\|\pi_\theta(\cdot|s))$, we refer to this as the
``reverse'' direction.
It penalizes probability mass on tokens the teacher deems unlikely under the same prefix, acting as a \emph{local trust
region} that stabilizes reward-driven updates while preserving exploration.
This loss requires a shared tokenizer and vocabulary, which holds in our experiments because student and teacher are paired
within the same family (e.g., Qwen-4B and Qwen-30B).

\stitle{Final objective (instantiation of \Cref{sec:method:reward}).}
Let $\mathcal{B}$ denote the training distribution over $(x,a^\star)$.
We optimize a teacher-regularized objective that trades off task reward and teacher anchoring:
\begin{equation}
\max_{\theta}\; J(\theta)
=
\mathbb{E}_{(x,a^\star)\sim \mathcal{B}}\,\mathbb{E}_{y\sim \pi_\theta(\cdot\mid x)}
\big[(1-\alpha)\, r(x,y) - \alpha\, \mathcal{L}_{\text{revKL}}(x,y)\big],
\label{eq:final_objective_alpha}
\end{equation}
where $\alpha\in[0,1]$ trades off task reward (Section~\ref{sec:method:reward}) and teacher anchoring. For $\alpha\in[0,1)$,
this is equivalent up to a positive constant to:
\begin{equation}
\max_{\theta}\; \tilde{J}(\theta)
=
\mathbb{E}_{(x,a^\star)\sim \mathcal{B}}\,\mathbb{E}_{y\sim \pi_\theta(\cdot\mid x)}
\big[r(x,y) - \beta\, \mathcal{L}_{\text{revKL}}(x,y)\big],
\label{eq:final_objective_beta}
\end{equation}
where $\beta=\alpha/(1-\alpha)\ge0$.

\stitle{Optimization.}
\noindent\textbf{(Fig.~\ref{fig:framework}d)}
We optimize Eq.~\eqref{eq:final_objective_beta} with GRPO-style on-policy updates: for each input $x$, we sample $K$
rollouts $\{y^{(i)}\}_{i=1}^K$ (we use $K{=}8$), compute rewards $r^{(i)} = r(x,y^{(i)})$, and form group-wise advantages
$A^{(i)} = ({r^{(i)}-\bar r})/({\sigma+\epsilon})$ (with $\epsilon{=}10^{-6}$).
The teacher anchor $\mathcal L_{\text{revKL}}$ enters as a differentiable loss with a frozen teacher and direct gradients
with respect to $\theta$; in implementation, the per-token KL is estimated with a low-variance sampled-token estimator
rather than a full-vocabulary summation (Appendix~\ref{app:gradients}).
The full procedure is in Algorithm~\ref{alg:trrag} (Appendix~\ref{sec:app:pseudocode}).

\stitle{Efficiency analysis.}
Relative to standard RL, \model's only extra computation is a single teacher forward pass over student-visited prefixes; crucially, we never sample teacher rollouts.
The overhead therefore scales with the teacher's forward cost: it is small for compact or MoE teachers (e.g., \texttt{Qwen3-30B-A3B}, matching Naive-RL/OPD in wall-clock) and more noticeable, though still a single forward pass, for a large dense teacher (e.g., \texttt{Llama-3.3-70B}).
We report measured per-setting training times in \Cref{sec:exp:In-depth} (\Cref{fig:qwen_2_time,fig:qwen_llama_time}).

\section{Experiments}


We evaluate \model\footnote{Anonymous code and data are available at \textcolor{blue}{\url{https://anonymous.4open.science/r/TR-RAG}} for review.} on English-evidence cross-lingual RAG, comparing against strong baselines and then making in-depth model analysis.
More details are in Appendix~\ref{sec:app:exp}.

\subsection{Experimental Setup}
\label{sec:exp:setup}

\stitle{Datasets.}
Our setting requires an \emph{English-only} knowledge base, which existing multilingual QA benchmarks do not directly satisfy: MKQA~\citep{DBLP:journals/tacl/LongpreLD21} pairs with multilingual Wikipedia, XOR-QA~\citep{asai2021xor} includes non-English evidence, and TyDi~QA~\citep{DBLP:journals/tacl/ClarkPNCGCK20} is monolingual per language.
We therefore adopt two English QA benchmarks, rag-mini-bioasq~\citep{tsatsaronis2015overview} (40k biomedical passages, 4.7k single-hop questions, 80/20 split) and HotpotQA~\citep{yang2018hotpotqa} (multi-hop Wikipedia; 4.4k/1.1k bridge\footnote{We keep only \emph{bridge}-type questions (excluding \emph{comparison} questions): bridge questions require genuine multi-hop entity bridging and give a cleaner, more homogeneous multi-hop generation signal.} train/test), and machine-translate only the questions and short reference answers into five non-English languages (\textit{id}/\textit{ko}/\textit{th}/\textit{vi}/\textit{ja}) using \texttt{Qwen3-235B-A22B-Instruct-2507}~\citep{yang2025qwen3}, while leaving evidence passages untouched in English; this is the most controlled construction at scale for our English-evidence regime. We refer to these multilingual variants as \textit{BioASQ-ENKB5} and \textit{Hotpot-ENKB5} (train on \textit{id}/\textit{ko}/\textit{th}/\textit{vi}; held-out \textit{ja} and the original \textit{en}).
To verify that gains are not artifacts of translationese, we additionally evaluate on \textbf{MKQA} (\Cref{sec:exp:mkqa}), whose questions are authored by native speakers rather than machine-translated~\citep{DBLP:journals/tacl/LongpreLD21}; we pair them with English Wikipedia as the sole evidence source, preserving the same English-evidence regime (train on \textit{ko}/\textit{th}/\textit{vi};\footnote{We drop Indonesian for MKQA because MKQA has no Indonesian split: it covers Malay (\textit{ms}) instead, so \mbox{ENKB-RAG-5}'s \textit{id} has no MKQA counterpart.} OOD \textit{en}/\textit{ja}/\textit{no}, with Norwegian as a deliberately distant stress test).
Additional details are in Appendix~\ref{sec:app:exp:datasets}.

\stitle{Metrics.}
Because English-evidence cross-lingual RAG can fail by switching language, missing key content across scripts, or producing
fluent but weakly grounded answers, we evaluate with three complementary normalized metrics in $[0,1]$ rather than a single
score:
\emph{language consistency} $\mathbb{I}[\mathrm{LID}(y)=\mathrm{LID}(a^\star)]$ from the fastText \texttt{lid.176} model \citep{joulin2017bag, grave2018learning} checks whether the
response stays in the target language;
\emph{char 3-gram recall} (Eq.~\eqref{eq:reward_3gram}) provides a tokenization-robust lexical overlap measure for
multilingual free-form answers; unlike token-level exact match or F1, it remains stable across scripts, word-segmentation
conventions, inflection, and minor orthographic variation, making it especially necessary for English-evidence cross-lingual
RAG where correctness often hinges on preserving translated or transliterated entities from English passages;
and a graded \emph{LLM-judge} score following the LLM-as-a-judge protocol \citep{DBLP:conf/nips/ZhengC00WZL0LXZ23}
measures evidence-grounded semantic correctness beyond surface overlap.
Because our \emph{primary} judge (\texttt{Qwen3-Next-80B}) is the same model that supplies the training reward $r_{\text{judge}}$, we guard against within-family self-preference by additionally reporting a cross-size (\texttt{Qwen3-235B}) and a cross-family (\texttt{Llama-3.3-70B}) judge (Appendices~\ref{sec:app:exp:metrics:judge_reliability} and~\ref{sec:app:exp:metrics:cross_family_judge}); the other two metrics (LC, char 3-gram) are judge-independent by construction.
We also report \emph{Composite} to summarize the overall trade-off: for \mbox{ENKB-RAG-5}, it is the mean of the three metrics; for MKQA, where we report two judges, it is the mean of all four signals.
Appendix~\ref{sec:app:exp:metrics} gives full details.

\stitle{Models and baselines.}
We study two compact backbones, \texttt{Qwen3-4B-Instruct-2507} \citep{yang2025qwen3} and \texttt{Llama-3.2-3B-Instruct} \citep{dubey2024llama}, and pair them with larger teachers, \texttt{Qwen3-30B-A3B-Instruct-2507} and \texttt{Llama-3.3-70B-Instruct} \citep{yang2025qwen3,dubey2024llama}.
Baselines include \textbf{Translate} \citep{asai2021xor}, \textbf{Prompt-Control} \cite{DBLP:journals/corr/abs-2504-18428}, discourse-cue prompting (\textbf{DIT} \citep{luo2025mmath}), supervised fine-tuning (\textbf{SFT}), \textbf{Knowledge Distillation} (KD) \citep{DBLP:journals/corr/HintonVD15}, self-distillation fine-tuning (\textbf{SDFT} \citep{yang-etal-2024-self}), \textbf{Naive-RL} \citep{DBLP:journals/corr/abs-2501-12948}, and pure \textbf{On-Policy Distillation} (OPD) \citep{DBLP:conf/iclr/AgarwalVZSGGB24}.
More details are in Appendix~\ref{sec:app:exp:baselines}.

\begin{table*}[t]
\centering
\scriptsize
\renewcommand{\arraystretch}{0.95} 
\setlength{\tabcolsep}{6pt} 
\caption{Performance comparison in BioASQ-ENKB5, with \texttt{Llama-3.2-3B-Instruct} as the backbone, and \texttt{Llama-3.3-70B-Instruct} as the teacher. \textbf{Bold} is the best, and \underline{underline} is the runner-up. The teacher (\texttt{Llama-3.3-70B-Instruct}) and base (\texttt{Llama-3.2-3B-Instruct}) rows are reference-only and are \emph{not} included in the best/runner-up ranking.}
\begin{tabular}{l|rrrrr|rrr|r}
\hline
\multirow{2}{*}{Methods} &
\multicolumn{5}{c|}{In-Domain Languages} &
\multicolumn{3}{c|}{Out-of-Domain Languages} &
\multirow{2}{*}{ALL-AVG} \\
\cline{2-9}
& id & ko & th & vi & ID-AVG
& en & ja & OOD-AVG & \\
\hline
\multicolumn{10}{c}{\textit{Metric: language consistency (LC, \%)}} \\
\hline
\texttt{Llama-3.3-70B-Instruct}   &90.78&98.51&99.25&98.41&96.74 &99.78&99.36&99.57 &97.68 \\
\hline\rowcolor{gray!8}
\texttt{Llama-3.2-3B-Instruct}   &87.71&95.76&98.72&97.66&94.96 &\textbf{99.78}&98.41&99.10 &96.34 \\
Translate &87.92&91.10&97.67&97.88&93.64 &\textbf{99.78}&93.54&96.66 &94.65 \\
Prompt-Control    &88.77&96.39&99.04&\underline{98.30}&95.63 &\textbf{99.78}&98.83&99.31 &96.85 \\
DIT          &\underline{90.25}&89.83&\textbf{99.25}&97.66&94.25 &\textbf{99.78}&98.94&99.36 &95.95 \\
SFT                  &84.22&95.66&97.03&95.44&93.09 &\underline{99.68}&97.35&98.52 &94.90 \\
Knowledge distillation                        &89.08&97.35&98.62&97.66&95.68 &99.47&99.04&99.26 &96.87 \\
SDFT         &89.61&\textbf{97.77}&98.41&98.19&96.00 &99.57&99.15&99.36 &97.12 \\
Naive-RL      &89.83&96.61&99.15&98.20&95.95 &99.58&98.73&99.16 &97.02 \\
On-Policy Distillation   &\underline{90.25}&\underline{97.56}&98.20&98.09&\underline{96.03} &99.58&\textbf{99.26}&\underline{99.42} &\underline{97.16} \\
\rowcolor{gray!15}
\model &\textbf{90.99}&97.03&\underline{99.15}&\textbf{98.41}&\textbf{96.40} &\underline{99.68}&\underline{99.25}&\textbf{99.47} &\textbf{97.42} \\

\hline
\multicolumn{10}{c}{\textit{Metric: char 3-gram recall (\%)}} \\
\hline
\texttt{Llama-3.3-70B-Instruct}   &69.66&38.13&42.61&65.62&54.01 &76.58&35.28&55.93 &54.65 \\
\hline\rowcolor{gray!8}
\texttt{Llama-3.2-3B-Instruct}   &56.16&26.42&29.62&52.83&41.26 &70.56&25.68&48.12 &43.55 \\
Translate &61.40&21.12&29.38&55.68&41.90 &70.56&20.43&45.50 &43.10 \\
Prompt-Control    &60.78&28.61&33.85&58.41&45.41 &74.31&\underline{28.23}&51.27 &47.37 \\
DIT          &62.95&26.24&31.55&59.41&45.04 &75.78&27.05&51.42 &47.16 \\
SFT                 &37.16&21.62&26.44&36.35&30.39 &47.09&22.29&34.69 &31.83 \\
Knowledge distillation                        &62.22&28.73&35.30&57.41&45.92 &71.40&26.86&49.13 &46.99 \\
SDFT         &62.84&29.10&35.73&57.60&46.32 &72.13&27.10&49.62 &47.42 \\
Naive-RL       &\underline{75.62}&22.66&\underline{39.82}&\underline{69.47}&\underline{51.89} &\underline{79.44}&23.63&\underline{51.54} &\underline{51.77} \\
On-Policy Distillation   &63.84&\underline{29.51}&35.72&59.84&47.23 &72.45&26.70&49.58 &48.01 \\
\rowcolor{gray!15}
\model &\textbf{80.16}&\textbf{29.55}&\textbf{48.69}&\textbf{73.54}&\textbf{57.99} &\textbf{85.14}&\textbf{31.46}&\textbf{58.30} &\textbf{58.09} \\
\hline

\multicolumn{10}{c}{\textit{Metric: LLM-judge (\%)}} \\
\hline
\texttt{Llama-3.3-70B-Instruct}   &76.83&73.98&75.27&76.57&75.66 &78.78&76.42&77.60 &76.31 \\
\hline\rowcolor{gray!8}
\texttt{Llama-3.2-3B-Instruct}   &63.79&47.34&54.52&63.11&57.19 &72.61&48.94&60.78 &58.39 \\
Translate &63.11&34.06&40.06&56.52&48.44 &72.61&41.18&56.90 &51.26 \\
Prompt-Control    &63.60&41.95&52.23&61.77&54.89 &71.79&49.41&60.60 &56.79 \\
DIT          &60.83&42.44&45.83&57.46&51.64 &67.91&47.79&57.85 &53.71 \\
SFT                  &66.10&54.45&57.84&60.70&59.77 &\underline{75.32}&61.86&68.59 &62.71 \\
Knowledge distillation                        &66.70&50.97&57.61&64.77&60.01 &74.76&58.66&66.71 &62.25 \\
SDFT         &67.27&53.77&58.06&65.98&61.27 &74.50&58.13&66.32 &62.95 \\
Naive-RL       &\underline{71.50}&\textbf{70.11}&60.38&\underline{68.04}&\textbf{69.57} &74.94&\textbf{69.03}&\textbf{71.99} &\textbf{70.38} \\
On-Policy Distillation   &68.66&55.75&\underline{61.00}&67.17&63.15 &\textbf{75.74}&60.80&68.27 &64.85 \\
\rowcolor{gray!15}
\model &\textbf{71.94}&\underline{66.44}&\textbf{66.36}&\textbf{70.35}&\underline{68.77} &75.08&\underline{63.97}&\underline{69.53} &\underline{69.02} \\
\hline

\multicolumn{10}{c}{\textit{Metric: Composite (\%)}} \\
\hline
\texttt{Llama-3.3-70B-Instruct}   &79.09&70.21&72.38&80.20&75.47 &85.05&70.35&77.70 &76.21 \\
\hline\rowcolor{gray!8}
\texttt{Llama-3.2-3B-Instruct}   &69.22&56.51&60.95&71.20&64.47 &80.98&57.68&69.33 &66.09 \\
Translate &70.81&48.76&55.70&70.03&61.33 &80.98&51.72&66.35 &63.00 \\
Prompt-Control    &71.05&55.65&61.71&72.83&65.31 &81.96&58.82&70.39 &67.00 \\
DIT          &71.34&52.84&58.88&71.51&63.64 &81.16&57.93&69.55 &65.61 \\
SFT                  &62.49&57.24&60.44&64.16&61.08 &74.03&60.50&67.27 &63.14 \\
Knowledge distillation                        &72.67&59.02&63.84&73.28&67.20 &81.88&61.52&71.70 &68.70 \\
SDFT         &73.24&60.21&64.07&73.92&67.86 &82.07&61.46&71.77 &69.16 \\
Naive-RL       &\underline{78.98}&\underline{63.13}&\underline{69.00}&\underline{78.77}&\underline{72.47} &\underline{84.65}&\underline{63.80}&\underline{74.23} &\underline{73.06} \\
On-Policy Distillation   &74.25&60.94&64.97&75.03&68.80 &82.59&62.25&72.42 &70.01 \\
\rowcolor{gray!15}
\model &\textbf{81.03}&\textbf{64.34}&\textbf{71.40}&\textbf{80.77}&\textbf{74.38} &\textbf{86.63}&\textbf{64.89}&\textbf{75.76} &\textbf{74.84} \\
\hline

\end{tabular}
\label{tab:llama-bioasq}
\end{table*}

\subsection{Main Results}
\label{sec:exp:main_results}

We use \Cref{tab:llama-bioasq} (BioASQ-ENKB5 with \texttt{Llama-3.2-3B-Instruct} as the student) as the representative example in the main text; results for the Qwen-4B backbone on BioASQ-ENKB5 and for both backbones on Hotpot-ENKB5 (\Cref{tab:qwen-bioasq,tab:llama-hotpotqa,tab:qwen-hotpotqa}, Appendix~\ref{sec:app:exp:experiment:main}) show the same trend that \model\ consistently outperforms strong baselines.

\textbf{Language Consistency.}
Across settings, \model\ ranks among the top methods on language consistency, substantially mitigating response-language drift under English evidence. The gain over SFT-style fine-tuning is clearest on in-domain languages (e.g., $+3.3$pp ID-AVG on BioASQ-ENKB5, \Cref{tab:llama-bioasq}, where offline supervision can even push LC \emph{below} the base model), indicating that \model's on-policy teacher regularization preserves language adherence more robustly than offline supervision.\\
\textbf{Character 3-gram Recall.}
\model\ is the top performer on char 3-gram recall, sometimes even surpassing the 70B teacher, giving direct, \emph{judge-independent} evidence that it transfers the right lexical content across languages with fewer dropped, mistranslated, or corrupted strings (we motivate this tokenization-robust metric in the Metrics paragraph above). This matches our design: the char 3-gram reward applies dense pressure against missing entities and keywords, while the teacher anchor stabilizes decoding under early prefix deviations, so source-supported content is preserved even when generation begins to drift.\\
\textbf{LLM-judge.}
On the evidence-grounded LLM-judge metric, \model\ achieves the second-best performance overall, closely trailing Naive-RL. This suggests strong evidence-grounded reasoning beyond surface-form overlap, while highlighting that our main gains arise from balanced improvements across complementary objectives rather than maximizing a single metric. Part of this small gap is a length--conciseness artifact of the judge rubric (the char 3-gram reward promotes more complete, longer answers that the conciseness criterion penalizes) rather than a correctness regression, as we isolate in Appendix~\ref{sec:app:exp:metrics:conciseness}.\\
\textbf{Composite Metric.}
The composite score offers the clearest summary of end-to-end utility: \model\ delivers the \textbf{best} overall trade-off among language adherence, lexical faithfulness, and evidence-grounded correctness, and is the top method on \textbf{both in-domain and out-of-domain} languages. This indicates that the gains are not restricted to seen languages, but transfer robustly under language shift, a key requirement in English-evidence deployments. In contrast, prompt-based control yields only marginal improvements, suggesting that prompt engineering alone is insufficient to resolve generation-side failures once English passages are injected.\\
\textbf{Additional observations.}
The \textit{Translate} pipeline attains high language consistency but degrades other metrics and the composite score, reflecting compounding translation-induced mismatches that distort rare entities and weaken evidence anchoring. \textit{SFT} is a revealing outlier: its char 3-gram recall drops well below the untuned base model ($43.55\rightarrow31.83$ on BioASQ), because teacher-forcing on short gold answers instills an ultra-terse extractive style that under-covers the reference content; full-parameter SFT is worse still (Appendix~\ref{sec:app:expimplementation_details:baseline_hparams}, \Cref{tab:sft}). On-policy distillation also clearly beats its offline counterpart: querying the teacher on \emph{student-visited} prefixes provides corrective guidance exactly where the student drifts, yielding stronger composite performance under both \textbf{ID} and \textbf{OOD} settings. Finally, the compact \model\ \textbf{matches or even exceeds} its teacher in composite performance (e.g., 79.68 vs.\ 78.10 for the Qwen-4B/30B pair on Hotpot-ENKB5, \Cref{tab:qwen-hotpotqa}), indicating effective compression of cross-lingual RAG generation quality into small models.

\begin{table*}[]
    \centering
    \footnotesize
    \caption{Ablation studies (\textit{composite ALL-AVG} metric, \%). \textbf{RL-only} is \model\ without the teacher anchor ($\beta{=}0$), i.e., identical to the \textbf{Naive-RL} baseline; \textbf{OPD-only} uses the teacher anchor alone (no task reward). \textbf{Bold}/\underline{underline} denote best/runner-up.
    }
    \vspace{-2mm} 
    \label{table:ablation}%
    \renewcommand{\arraystretch}{0.9} 
    \setlength{\tabcolsep}{4pt} 
    \resizebox{\linewidth}{!}{%
    \begin{tabular}{@{}l|cccc|cc|cc|c@{}}
    \toprule
    \multirow{2}*{Variants}
    &Language&Char 3-gram& LLM-judge & On-Policy  &\multicolumn{2}{c|}{BioASQ-ENKB5} &\multicolumn{2}{c|}{Hotpot-ENKB5} & \multirow{2}*{Average}  \\
    &Reward&Reward& Reward & Distillation & Llama$_{3B}$ & Qwen$_{4B}$ & Llama$_{3B}$  & Qwen$_{4B}$ &    \\
    \midrule
    RL-only
    &$\checkmark$&$\checkmark$ & $\checkmark$ & $\times$  &73.06&81.98&72.66&74.79&75.62 \\ 
    OPD-only
    &$\times$&$\times$ & $\times$ & $\checkmark$ &70.01  &79.52&71.15&76.49&74.29\\
    No language reward
    &$\times$&$\checkmark$&$\checkmark$&$\checkmark$ &\underline{74.14}&\textbf{83.73}&\underline{73.07}&\textbf{79.70}&\underline{77.66} \\
    No 3-gram reward
    &$\checkmark$&$\times$&$\checkmark$&$\checkmark$ &69.79&75.28&70.10&76.01&72.80 \\
    No LLM-judge reward
    &$\checkmark$&$\checkmark$&$\times$&$\checkmark$ &72.55&81.42&71.19&78.46&75.91 \\
    \midrule
    \model
    & $\checkmark$ & $\checkmark$ &$\checkmark$&$\checkmark$   &\textbf{74.84}&\underline{82.85}&\textbf{73.88}&\underline{79.68}&\textbf{77.81} \\  
    \bottomrule
    \end{tabular}}
\end{table*}

\subsection{In-depth Model Analysis}
\label{sec:exp:In-depth}

\stitle{Ablation Studies.}
After establishing strong end-to-end gains in the main results, we ask \emph{which ingredients drive the improvements}. 
\Cref{table:ablation} reports ablations on our composite ALL-AVG metric. The full \model\ achieves the best score (\textbf{77.81}), and its gains come from \emph{combining} reward optimization with on-policy distillation rather than either alone: RL-only reaches 75.62 and OPD-only reaches 74.29, while \model\ improves by \textbf{+2.19} and \textbf{+3.52}, respectively, consistent with teacher anchoring stabilizing on-policy updates on student-visited prefixes.
The reward components contribute unequally. Removing character 3-gram recall causes the largest drop (77.81$\rightarrow$72.80, \textbf{$-5.01$}), indicating this dense, language-robust signal is critical in the English-evidence multilingual regime. Removing the LLM-judge also hurts ($-1.90$), suggesting complementary semantic/evidence-grounding feedback. In contrast, removing the language reward barely moves the aggregate composite ($-0.15$) \emph{or} aggregate LC (91.66 vs.\ 91.92): on these largely high-resource \mbox{ENKB-RAG-5} splits the teacher anchor and char 3-gram reward already keep outputs in-language, so the explicit language reward contributes little \emph{here}. We stress that this ablation covers only the \mbox{ENKB-RAG-5} splits; whether the language reward is more valuable on extreme low-resource OOD languages is not separately isolated, and we make no such claim. The two overlap-oriented signals also interact: removing the char 3-gram reward yields the \emph{highest} LC (94.31 vs.\ 91.92 for full \model; \Cref{table:ablation-Language Consistency}), consistent with the char 3-gram term rewarding overlap with English-derived entities and thereby mildly pulling outputs toward English. More results are in Appendix~\ref{sec:app:exp:ablation studies}.


\begin{figure}[t!]
  \centering
  \begin{minipage}[t]{0.190\linewidth}
    \centering
    \includegraphics[width=1\linewidth]{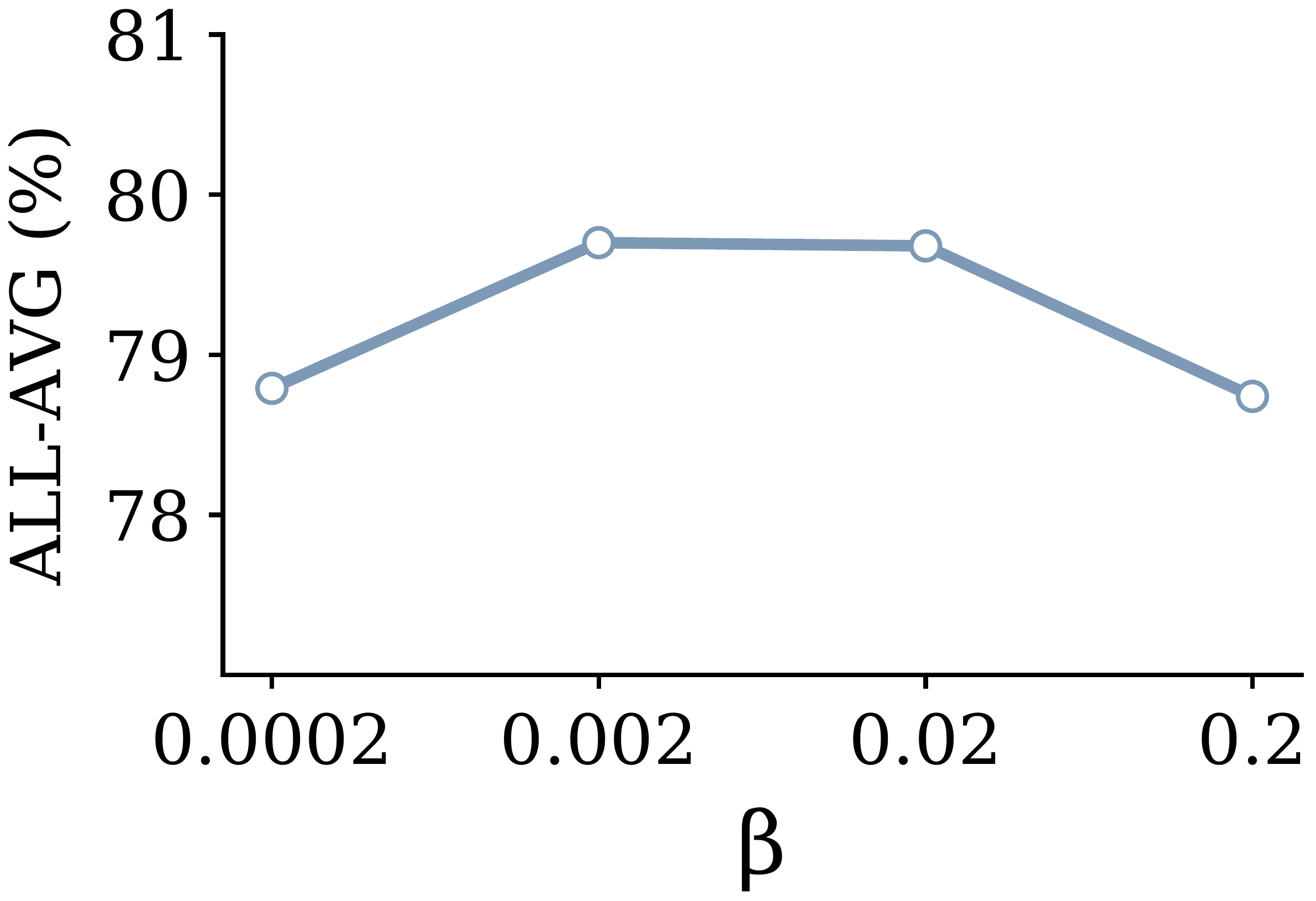}
    \vspace{-5mm}
    \caption{\tiny Teacher weight $\beta$ (Hotpot-ENKB5, Qwen-4B). Moderate $\beta$ ($0.002$--$0.02$) is best; $\beta{=}0.2$ over-regularizes. We use $\beta{=}0.02$.}
    \label{fig:hyper sensitivity}
  \end{minipage}%
  \hfill
  \begin{minipage}[t]{0.218\linewidth}
    \centering
    \includegraphics[width=1\linewidth]{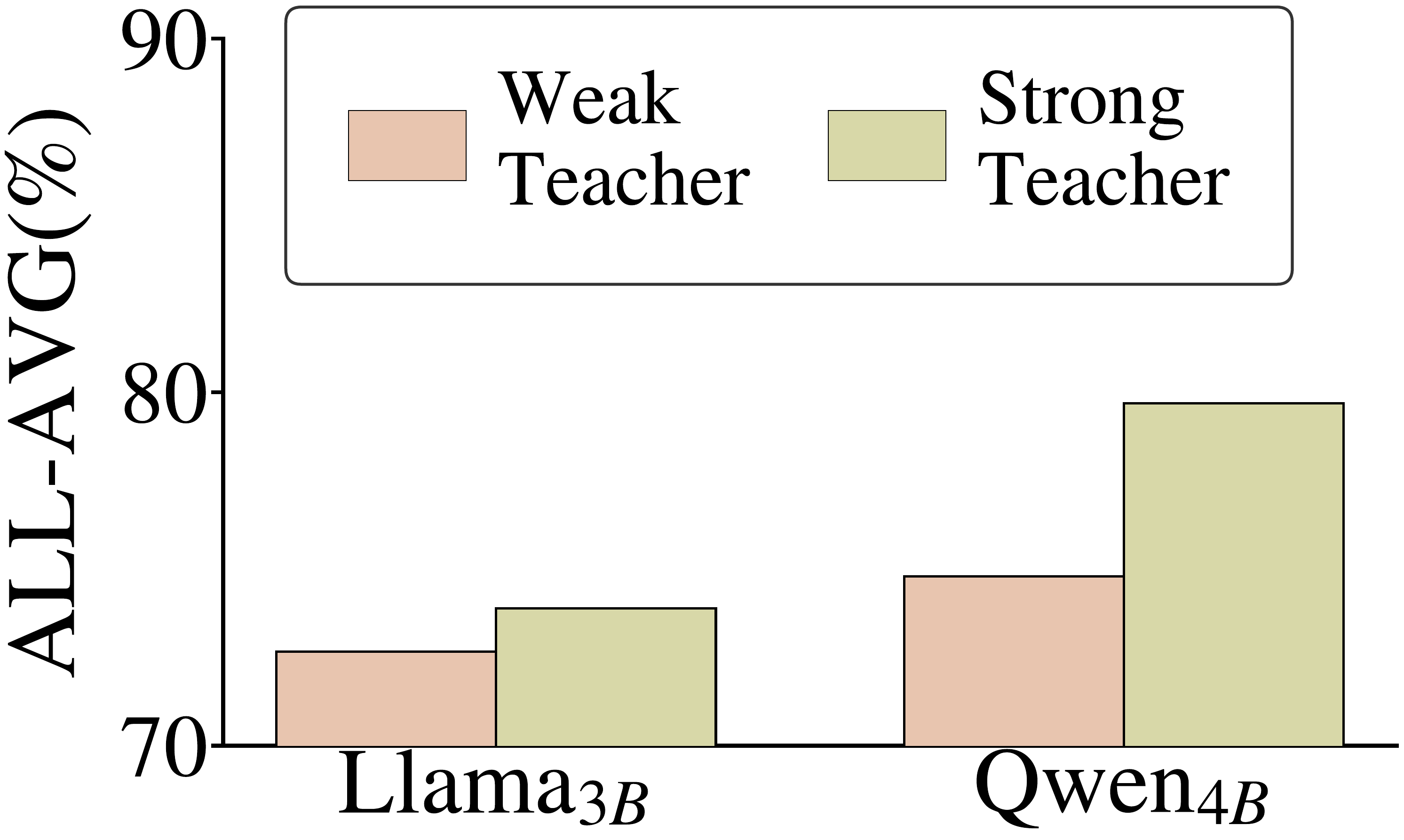}
    \vspace{-5mm}
    \caption{\tiny Teacher quality (Hotpot-ENKB5): strong vs.\ same-size weak teacher. A stronger teacher helps most; a weak teacher still performs competitively.}
    \label{fig:teacher_comparison_hotpotQA}
  \end{minipage}%
  \hfill
  \begin{minipage}[t]{0.210\linewidth}
    \centering
    \includegraphics[width=1\linewidth]{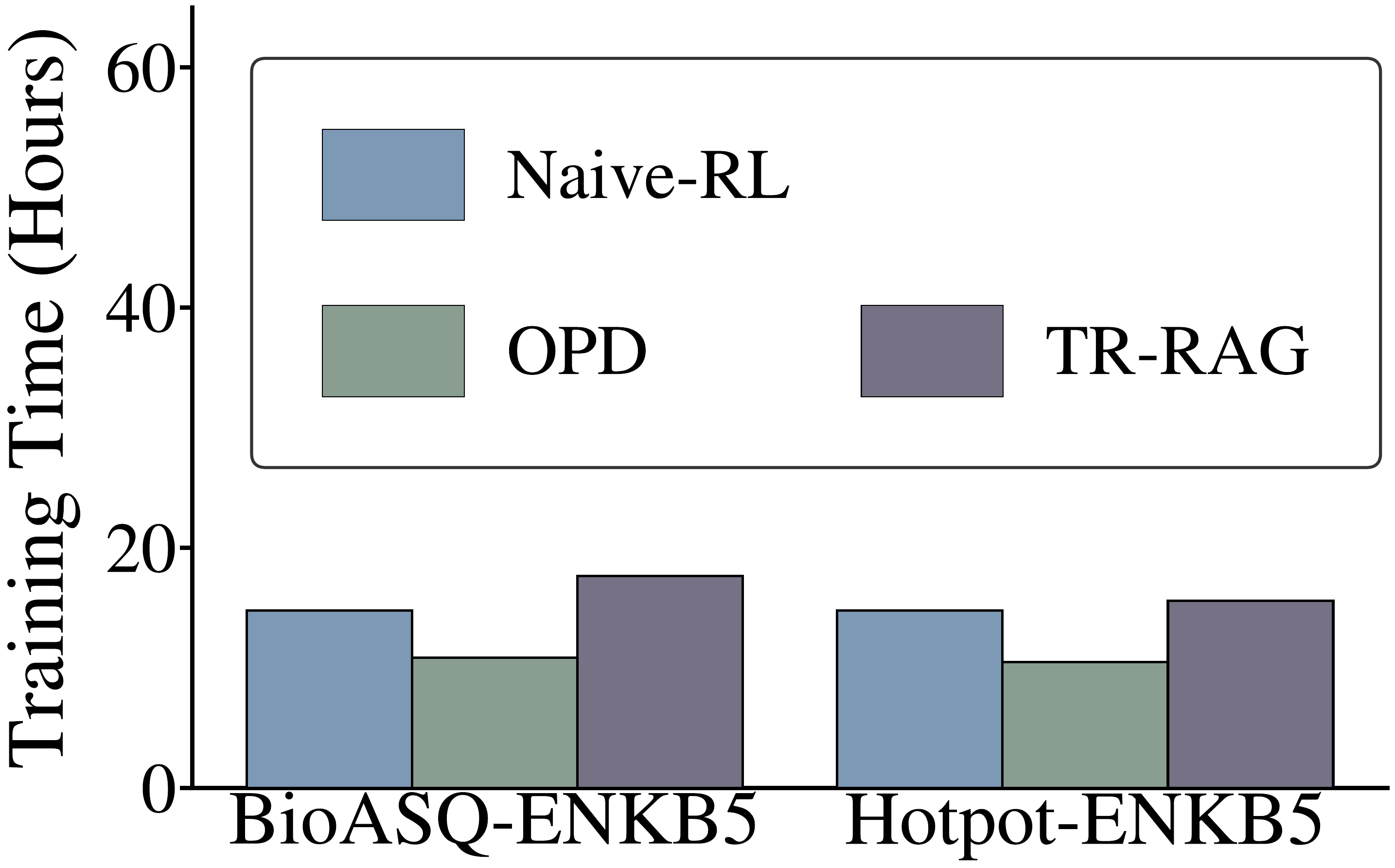}
    \vspace{-5mm}
    \caption{\tiny Training time (Qwen-4B backbone, BioASQ + Hotpot). \model\ $\approx$ Naive-RL/OPD (compact/MoE teacher).}
    \label{fig:qwen_2_time}
  \end{minipage}%
  \hfill
  \begin{minipage}[t]{0.165\linewidth}
    \centering
    \includegraphics[width=1\linewidth]{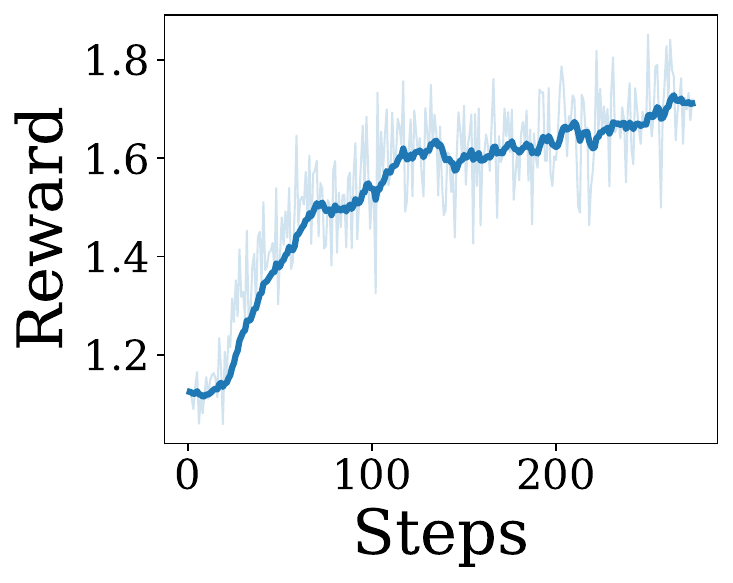}
    \vspace{-5mm}
    \caption{\tiny Task reward $r(x,y)$ (Llama, Hotpot): rises then plateaus ($\sim$250 steps).}
    \label{fig:Task reward}
  \end{minipage}%
  \hfill
  \begin{minipage}[t]{0.175\linewidth}
    \centering
    \includegraphics[width=1\linewidth]{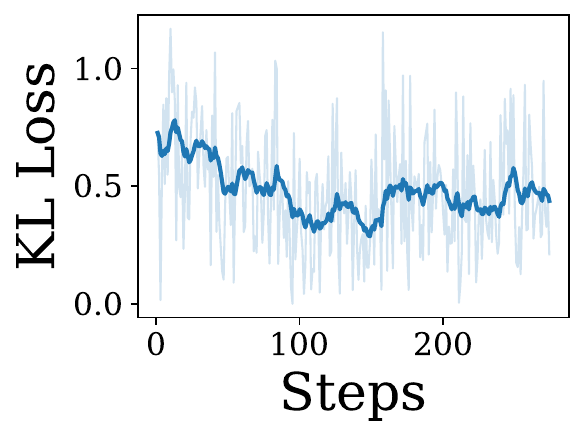}
    \vspace{-5mm}
    \caption{\tiny Reverse-KL $\mathcal{L}_{\text{revKL}}$ (Llama, Hotpot): drops early, stays bounded.}
    \label{fig:Reverse KL loss}
  \end{minipage}
  \vspace{-3mm}
\end{figure}

\stitle{Sensitivity to the Teacher Weight $\beta$.}
Having confirmed that teacher regularization is a key ingredient, we examine how strongly it should be applied.
Fig.~\ref{fig:hyper sensitivity} varies $\beta$ (Eq.~\eqref{eq:final_objective_beta}) on Hotpot-ENKB5 with the Qwen-4B backbone.
Language consistency and char 3-gram recall improve as $\beta$ increases from very small values and remain stable over a moderate range ($0.002$--$0.02$); the LLM-judge score, however, peaks at smaller $\beta$ and decreases monotonically as $\beta$ grows (\Cref{fig:hyper sensitivity-LLM-judge}), reflecting a metric-specific trade-off between teacher alignment and judge optimization. Overall, moderate $\beta$ provides useful token-level stabilization without overly restricting reward-driven exploration.
However, overly large $\beta$ (e.g., $0.2$) degrades performance, consistent with over-regularization that keeps the student too close to the frozen teacher and limits task-specific adaptation. This suggests a practical sweet spot where the teacher anchor is strong enough to prevent drift but light enough to allow reward-driven gains. Based on this sweep, we fix $\beta{=}0.02$ for every backbone--teacher pair in all main experiments (Appendix~\ref{sec:app:exp:exp:hparams}).

\stitle{Impact of Teacher Quality.}
Beyond the weight of the anchor, we ask whether \model\ depends on a particularly strong teacher.
We replace the default teacher with a weaker frozen base model of the same size (Fig.~\ref{fig:teacher_comparison_hotpotQA}; full results in Appendix~\ref{sec:app:exp:teacher_quality}).
\model\ remains consistently beneficial under the weak-teacher setting, while stronger teachers yield the best performance.
This suggests that teacher quality mainly affects the achievable ceiling rather than being a strict prerequisite for gains: even a modest teacher can stabilize on-policy updates and prevent language drift, making the recipe applicable when only same-scale models are available.

\stitle{Efficiency.}
\Cref{fig:qwen_llama_time,fig:qwen_2_time} compare end-to-end training time.
\model's only extra cost over standard RL is a single teacher forward pass on student-visited prefixes (never teacher rollouts).
With a compact or MoE teacher (\texttt{Qwen3-30B-A3B}) this is essentially free: wall-clock matches Naive-RL/OPD.
With a large \emph{dense} teacher (\texttt{Llama-3.3-70B}) the extra forward pass is visible in \Cref{fig:qwen_llama_time} but remains bounded (it is one forward pass, not additional generation), which we consider a worthwhile cost for the robustness gains reported below.

\stitle{Training Dynamics.}
\Cref{fig:Task reward} plots the training-time evolution of the mean task reward $r(x,y)$ over the $\sim$250 update steps of a run: it increases steadily and then plateaus, suggesting stable improvement of our multilingual objective. We evaluate the final checkpoint (Appendix~\ref{app:impl_details}).
\Cref{fig:Reverse KL loss} shows $\mathcal{L}_{\text{revKL}}$ drops early and remains bounded, indicating improved alignment with the frozen teacher on student-visited prefixes while still allowing reward-driven deviations.
Together, these curves match \model's design goal: improving task performance under a stable anchor without uncontrolled policy drift.

\stitle{Qualitative Case Study.}
\Cref{fig:case} makes the earlier metric gains concrete at the example level. Under English evidence, baseline systems exhibit two recurring failures: \emph{language drift}, where the answer begins in Indonesian but slips into copied English spans from the retrieved passages, and \emph{evidence misuse}, where the model latches onto a salient but unsupported entity instead of the one justified by the full evidence chain. By contrast, \model\ stays in fluent Indonesian, preserves the key lexical content needed for the answer, and selects the correct entity, suggesting that the teacher anchor improves both response-language control and evidence-faithful generation. This qualitative pattern mirrors the quantitative gains in language consistency, char 3-gram recall, and composite score. Parallel ja/ko/th/vi cases are in Appendix~\ref{sec:app:exp:case study}.

\begin{figure*}[!t]
   \centering
   \includegraphics[width=1\linewidth]{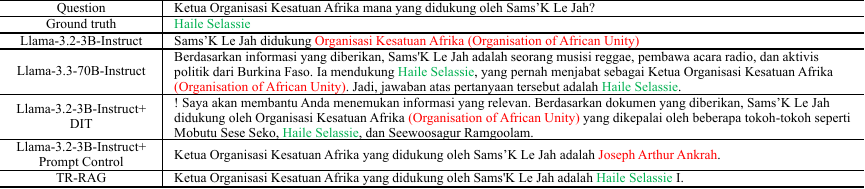}
   \vspace{-2mm}
   \caption{Qualitative case study (\texttt{Llama-3.2-3B-Instruct} backbone, Indonesian, Hotpot-ENKB5). Highlighting is consistent across panels: \emph{red} marks language drift (Indonesian slipping into copied English spans) or evidence-unsupported content, and \emph{green} marks correct, evidence-grounded content in the target language. Baseline systems drift and latch onto a salient but unsupported entity, whereas \model\ stays in fluent Indonesian and returns the entity justified by the full evidence chain. Best viewed in color.}
   \vspace{-2mm}
   \label{fig:case}
\end{figure*}

\subsection{Generalization to Naturally Multilingual MKQA}
\label{sec:exp:mkqa}

We evaluate on the natively multilingual \textbf{MKQA} setup defined in \Cref{sec:exp:setup}. We additionally report a \emph{cross-family} \texttt{Llama-3.3-70B-Instruct} judge to rule out within-family reward hacking.

\begin{table*}[t]
\centering
\footnotesize
\renewcommand{\arraystretch}{0.95}
\setlength{\tabcolsep}{7pt}
\caption{Generalization to \textbf{naturally multilingual MKQA} with English-evidence (\texttt{Llama-3.2-3B-Instruct} backbone). Per-language Composite, Qwen3-Next-80B Judge, and cross-family Llama-3.3-70B Judge are reported. \textbf{Bold}/\underline{underline} denote best/runner-up.
}
\label{tab:mkqa}
\begin{tabular}{l|rrrr|rrrr|r}
\hline
\multirow{2}{*}{Methods} &
\multicolumn{4}{c|}{In-Domain Languages} &
\multicolumn{4}{c|}{Out-of-Domain Languages} &
\multirow{2}{*}{ALL-AVG} \\
\cline{2-9}
& ko & th & vi & ID-AVG
& en & ja & no & OOD-AVG & \\
\hline
\multicolumn{10}{c}{\textit{Metric: Composite (\%)}} \\
\hline
Base     & 49.72 & 50.62 & 53.55 & 51.30 & 67.35 & 53.62 & 47.36 & 56.11 & 53.70 \\
Naive-RL & \textbf{50.80} & \underline{55.83} & \underline{58.00} & \underline{54.87} & \textbf{70.27} & \underline{55.54} & \underline{52.62} & \underline{59.48} & \underline{57.18} \\
\rowcolor{gray!15}
\model   & \underline{50.11} & \textbf{57.62} & \textbf{58.67} & \textbf{55.47} & \underline{69.57} & \textbf{55.95} & \textbf{55.00} & \textbf{60.17} & \textbf{57.82} \\
\hline
\multicolumn{10}{c}{\textit{Metric: Qwen3-Next-80B Judge (\%)}} \\
\hline
Base     & 58.35 & 63.58 & 73.13 & 65.02 & 68.96 & 62.93 & 67.16 & 66.35 & 65.69 \\
Naive-RL & \textbf{62.35} & \underline{71.45} & \textbf{78.50} & \textbf{70.77} & \textbf{73.10} & \underline{67.83} & \underline{74.38} & \underline{71.77} & \underline{71.27} \\
\rowcolor{gray!15}
\model   & \underline{60.05} & \textbf{74.40} & \underline{77.05} & \underline{70.50} & \underline{72.80} & \textbf{69.63} & \textbf{77.93} & \textbf{73.45} & \textbf{71.98} \\
\hline
\multicolumn{10}{c}{\textit{Metric: cross-family Llama-3.3-70B Judge (\%)}} \\
\hline
Base     & 52.50 & 62.85 & 71.77 & 62.37 & 70.86 & 60.70 & 63.80 & 65.12 & 63.75 \\
Naive-RL & \underline{57.90} & \underline{71.25} & \underline{77.95} & \underline{69.03} & \textbf{76.90} & \underline{67.35} & \underline{72.15} & \underline{72.13} & \underline{70.58} \\
\rowcolor{gray!15}
\model   & \textbf{58.10} & \textbf{74.60} & \textbf{79.15} & \textbf{70.62} & \underline{75.20} & \textbf{68.05} & \textbf{75.45} & \textbf{72.90} & \textbf{71.76} \\
\hline
\end{tabular}
\end{table*}

\stitle{Results.}
The trends carry over to native MKQA queries: \model\ wins ALL-AVG on Composite (\textbf{57.82} vs.\ 57.18), the Qwen judge (\textbf{71.98} vs.\ 71.27), and the cross-family Llama judge (\textbf{71.76} vs.\ 70.58), the wider Llama margin ruling out within-family reward hacking, while \emph{matching or slightly improving} on the judge-independent signals (LC 44.50 vs.\ 44.00, char 3-gram 43.05 vs.\ 42.85; Appendix~\ref{sec:app:exp:mkqa}). On the hardest split, Norwegian (base LC only 7\%), \model\ gains \textbf{+2.38}/\textbf{+3.55}/\textbf{+3.30} (composite/Qwen/Llama) over Naive-RL. Conversely, Naive-RL only edges \model\ in ``safe'' settings (e.g., the BioASQ Llama-3B judge, by 1--1.5pp) but collapses elsewhere: on Hotpot-ENKB5/Qwen-4B its Indonesian LC drops to \textbf{28.27\%} (vs.\ \textbf{55.60\%} for \model), and on Khmer (an extreme-script stress test beyond the MKQA languages in \Cref{sec:exp:setup}; App.~\ref{sec:app:exp:mkqa:lowres}), where \emph{both} methods remain capped at the base model's $18\%$ LC ceiling, \model\ is nonetheless slightly better on grounding ($+0.66$/$+0.50$/$+1.85$ on char 3-gram / Qwen / Llama judge over Naive-RL). The teacher anchor thus pays a small ``insurance premium'' in safe settings to prevent the large (up to $\sim$27pp) in-domain LC collapses that reward-only RL can suffer, and to keep improving grounding where reward-only RL stalls at the base ceiling on distant OOD languages (extended analysis in Appendix~\ref{sec:app:exp:robustness}).



\section{Conclusion}
\label{sec:conclusion}
We presented \model, a teacher-regularized RL post-training framework for \emph{English-evidence} cross-lingual RAG that couples reward optimization with prefix-wise reverse-KL anchoring on student-visited prefixes. The frozen-teacher anchor supplies dense stabilization without teacher rollouts, enabling compact students to improve language adherence and evidence-grounded correctness; \model\ achieves the best overall composite performance across three benchmarks and consistently outperforms strong baselines. Future work will extend \model\ to multimodal multilingual settings and explore richer reward designs.


\bibliographystyle{plainnat}
\bibliography{references,new_refs}

@misc{zhang2026selfevaluation,
  title = {Self-Evaluation Is Already There: Eliciting Latent Judge Calibration in Base {LLM}s with Minimal Data},
  author = {Zhang, XiuYu and Shan, Yi and Fang, Junfeng and Liang, Zhenkai},
  year = {2026},
  eprint = {2606.05122},
  archivePrefix = {arXiv},
  primaryClass = {cs.CL},
  url = {https://arxiv.org/abs/2606.05122}
}

@misc{pala2026grail,
  title = {{GRAIL}: Gradient-Reweighted Advantages for Reinforcement Learning with Verifiable Rewards},
  author = {Pala, Tej Deep and Toh, Vernon and Poria, Soujanya},
  year = {2026},
  eprint = {2606.04889},
  archivePrefix = {arXiv},
  primaryClass = {cs.CL},
  url = {https://arxiv.org/abs/2606.04889}
}

@misc{payoungkhamdee2026dudi,
  title = {{DuDi}: Dual-Signal Distillation with Cross-Lingual Verbalizer},
  author = {Payoungkhamdee, Patomporn and Udsa, Tinnakit and Ngui, Jian Gang and Nutanong, Sarana and Aji, Alham Fikri and Limkonchotiwat, Peerat},
  year = {2026},
  eprint = {2606.04694},
  archivePrefix = {arXiv},
  primaryClass = {cs.CL},
  url = {https://arxiv.org/abs/2606.04694}
}

@misc{nourbakhsh2026retrieval,
  title = {When Retrieval Doesn't Help: A Large-Scale Study of Biomedical {RAG}},
  author = {Nourbakhsh, Erfan and Slavin, Rocky and Yang, Ke and Rios, Anthony},
  year = {2026},
  eprint = {2606.04127},
  archivePrefix = {arXiv},
  primaryClass = {cs.CL},
  url = {https://arxiv.org/abs/2606.04127}
}

@misc{wang2026cherrl,
  title = {Reproducing, Analyzing, and Detecting Reward Hacking in Rubric-Based Reinforcement Learning},
  author = {Wang, Xuekang and Hao, Zhuoyuan and Hou, Shuo and Peng, Hao and Li, Juanzi and Wang, Xiaozhi},
  year = {2026},
  eprint = {2606.04923},
  archivePrefix = {arXiv},
  primaryClass = {cs.CL},
  url = {https://arxiv.org/abs/2606.04923}
}

@article{shao2024deepseekmath,
  title = {{DeepSeekMath}: Pushing the Limits of Mathematical Reasoning in Open Language Models},
  author = {Shao, Zhihong and Wang, Peiyi and Zhu, Qihao and Xu, Runxin and Song, Junxiao and Bi, Xiao and Zhang, Haowei and Zhang, Mingchuan and Li, Y. K. and Wu, Y. and Guo, Daya},
  journal = {arXiv preprint arXiv:2402.03300},
  year = {2024}
}

@inproceedings{joulin2017bag,
  title = {Bag of Tricks for Efficient Text Classification},
  author = {Joulin, Armand and Grave, Edouard and Bojanowski, Piotr and Mikolov, Tomas},
  booktitle = {Proceedings of the 15th Conference of the European Chapter of the Association for Computational Linguistics (EACL): Volume 2, Short Papers},
  pages = {427--431},
  year = {2017}
}

@misc{schulman2020kl,
  title = {Approximating {KL} Divergence},
  author = {Schulman, John},
  year = {2020},
  howpublished = {\url{http://joschu.net/blog/kl-approx.html}},
  note = {Blog post}
}

@inproceedings{rafailov2023direct,
  title = {Direct Preference Optimization: Your Language Model is Secretly a Reward Model},
  author = {Rafailov, Rafael and Sharma, Archit and Mitchell, Eric and Ermon, Stefano and Manning, Christopher D. and Finn, Chelsea},
  booktitle = {Advances in Neural Information Processing Systems (NeurIPS)},
  year = {2023}
}

@misc{zhang2026safetyneurons,
  title = {Who Transfers Safety? Identifying and Targeting Cross-Lingual Shared Safety Neurons},
  author = {Zhang, Xianhui and Xie, Chengyu and Zhu, Linxia and Yang, Yonghui and Zhao, Weixiang and Cheng, Zifeng and Wang, Cong and Shen, Fei and Chua, Tat-Seng},
  year = {2026},
  eprint = {2602.01283},
  archivePrefix = {arXiv},
  primaryClass = {cs.CL},
  url = {https://arxiv.org/abs/2602.01283}
}

@article{DBLP:journals/corr/abs-2407-01463,
  author       = {Nadezhda Chirkova and
                  David Rau and
                  Herv{\'{e}} D{\'{e}}jean and
                  Thibault Formal and
                  St{\'{e}}phane Clinchant and
                  Vassilina Nikoulina},
  title        = {Retrieval-augmented generation in multilingual settings},
  journal      = {CoRR},
  volume       = {abs/2407.01463},
  year         = {2024},
  doi          = {10.48550/ARXIV.2407.01463},
  eprinttype    = {arXiv},
  eprint       = {2407.01463},
  bibsource    = {dblp computer science bibliography, https://dblp.org}
}

@article{Qi2025OnTC,
  title={On the Consistency of Multilingual Context Utilization in Retrieval-Augmented Generation},
  author={Jirui Qi and Raquel Fern{\'a}ndez and Arianna Bisazza},
  journal={ArXiv},
  year={2025},
  volume={abs/2504.00597},
}

@article{DBLP:journals/corr/abs-2505-10089,
  author       = {Wei Liu and
                  Sony Trenous and
                  Leonardo F. R. Ribeiro and
                  Bill Byrne and
                  Felix Hieber},
  title        = {{XRAG:} Cross-lingual Retrieval-Augmented Generation},
  journal      = {CoRR},
  volume       = {abs/2505.10089},
  year         = {2025},
  doi          = {10.48550/ARXIV.2505.10089},
  eprinttype    = {arXiv},
  eprint       = {2505.10089},
  bibsource    = {dblp computer science bibliography, https://dblp.org}
}

@article{Park2025InvestigatingLP,
  title={Investigating Language Preference of Multilingual {RAG} Systems},
  author={Jeonghyun Park and Hwanhee Lee},
  journal={ArXiv},
  year={2025},
  volume={abs/2502.11175},
}

@inproceedings{Kamalloo2023EvaluatingOQ,
  title={Evaluating Open-Domain Question Answering in the Era of Large Language Models},
  author={Ehsan Kamalloo and Nouha Dziri and Charles L. A. Clarke and Davood Rafiei},
  booktitle={Annual Meeting of the Association for Computational Linguistics},
  year={2023},
}

@article{Gao2023RetrievalAugmentedGF,
  title={Retrieval-Augmented Generation for Large Language Models: A Survey},
  author={Yunfan Gao and Yun Xiong and Xinyu Gao and Kangxiang Jia and Jinliu Pan and Yuxi Bi and Yi Dai and Jiawei Sun and Qianyu Guo and Meng Wang and Haofen Wang},
  journal={ArXiv},
  year={2023},
  volume={abs/2312.10997},
}

@article{li2025language,
  title={Language Drift in Multilingual Retrieval-Augmented Generation: Characterization and Decoding-Time Mitigation},
  author={Li, Bo and Xu, Zhenghua and Xie, Rui},
  journal={arXiv preprint arXiv:2511.09984},
  year={2025}
}

@inproceedings{asai2021xor,
  title={{XOR QA}: Cross-lingual Open-Retrieval Question Answering},
  author={Asai, Akari and Kasai, Jungo and Clark, Jonathan H and Lee, Kenton and Choi, Eunsol and Hajishirzi, Hannaneh},
  booktitle={Proceedings of the 2021 conference of the North American chapter of the association for computational linguistics: human language technologies},
  pages={547--564},
  year={2021}
}

@article{wei2022chain,
  title={Chain-of-thought prompting elicits reasoning in large language models},
  author={Wei, Jason and Wang, Xuezhi and Schuurmans, Dale and Bosma, Maarten and Xia, Fei and Chi, Ed and Le, Quoc V and Zhou, Denny and others},
  journal={Advances in neural information processing systems},
  volume={35},
  pages={24824--24837},
  year={2022}
}

@inproceedings{DBLP:conf/iclr/AgarwalVZSGGB24,
  author       = {Rishabh Agarwal and
                  Nino Vieillard and
                  Yongchao Zhou and
                  Piotr Stanczyk and
                  Sabela Ramos Garea and
                  Matthieu Geist and
                  Olivier Bachem},
  title        = {On-Policy Distillation of Language Models: Learning from Self-Generated
                  Mistakes},
  booktitle    = {The Twelfth International Conference on Learning Representations,
                  {ICLR} 2024, Vienna, Austria, May 7-11, 2024},
  publisher    = {OpenReview.net},
  year         = {2024},
  bibsource    = {dblp computer science bibliography, https://dblp.org}
}

@inproceedings{DBLP:conf/nips/LewisPPPKGKLYR020,
  author       = {Patrick Lewis and
                  Ethan Perez and
                  Aleksandra Piktus and
                  Fabio Petroni and
                  Vladimir Karpukhin and
                  Naman Goyal and
                  Heinrich K{\"{u}}ttler and
                  Mike Lewis and
                  Wen{-}tau Yih and
                  Tim Rockt{\"{a}}schel and
                  Sebastian Riedel and
                  Douwe Kiela},
  editor       = {Hugo Larochelle and
                  Marc'Aurelio Ranzato and
                  Raia Hadsell and
                  Maria{-}Florina Balcan and
                  Hsuan{-}Tien Lin},
  title        = {Retrieval-Augmented Generation for Knowledge-Intensive {NLP} Tasks},
  booktitle    = {Advances in Neural Information Processing Systems 33: Annual Conference
                  on Neural Information Processing Systems 2020, NeurIPS 2020, December
                  6-12, 2020, virtual},
  year         = {2020},
  bibsource    = {dblp computer science bibliography, https://dblp.org}
}

@article{DBLP:journals/tacl/ClarkPNCGCK20,
  author       = {Jonathan H. Clark and
                  Jennimaria Palomaki and
                  Vitaly Nikolaev and
                  Eunsol Choi and
                  Dan Garrette and
                  Michael Collins and
                  Tom Kwiatkowski},
  title        = {TyDi {QA:} {A} Benchmark for Information-Seeking Question Answering
                  in Typologically Diverse Languages},
  journal      = {Trans. Assoc. Comput. Linguistics},
  volume       = {8},
  pages        = {454--470},
  year         = {2020},
}

@article{DBLP:journals/tacl/LongpreLD21,
  author       = {Shayne Longpre and
                  Yi Lu and
                  Joachim Daiber},
  title        = {{MKQA:} {A} Linguistically Diverse Benchmark for Multilingual Open
                  Domain Question Answering},
  journal      = {Trans. Assoc. Comput. Linguistics},
  volume       = {9},
  pages        = {1389--1406},
  year         = {2021},
}

@article{zhang-etal-2023-miracl,
    title = "{MIRACL}: A Multilingual Retrieval Dataset Covering 18 Diverse Languages",
    author = "Zhang, Xinyu  and
      Thakur, Nandan  and
      Ogundepo, Odunayo  and
      Kamalloo, Ehsan  and
      Alfonso-Hermelo, David  and
      Li, Xiaoguang  and
      Liu, Qun  and
      Rezagholizadeh, Mehdi  and
      Lin, Jimmy",
    journal = "Transactions of the Association for Computational Linguistics",
    volume = "11",
    year = "2023",
    address = "Cambridge, MA",
    publisher = "MIT Press",
    doi = "10.1162/tacl_a_00595",
    pages = "1114--1131",
}

@inproceedings{DBLP:conf/nips/AsaiYKH21,
  author       = {Akari Asai and
                  Xinyan Yu and
                  Jungo Kasai and
                  Hanna Hajishirzi},
  editor       = {Marc'Aurelio Ranzato and
                  Alina Beygelzimer and
                  Yann N. Dauphin and
                  Percy Liang and
                  Jennifer Wortman Vaughan},
  title        = {One Question Answering Model for Many Languages with Cross-lingual
                  Dense Passage Retrieval},
  booktitle    = {Advances in Neural Information Processing Systems 34: Annual Conference
                  on Neural Information Processing Systems 2021, NeurIPS 2021, December
                  6-14, 2021, virtual},
  pages        = {7547--7560},
  year         = {2021},
  bibsource    = {dblp computer science bibliography, https://dblp.org}
}

@article{DBLP:journals/corr/abs-2504-03616,
  author       = {Leonardo Ranaldi and
                  Barry Haddow and
                  Alexandra Birch},
  title        = {Multilingual Retrieval-Augmented Generation for Knowledge-Intensive
                  Task},
  journal      = {CoRR},
  volume       = {abs/2504.03616},
  year         = {2025},
  doi          = {10.48550/ARXIV.2504.03616},
  eprinttype    = {arXiv},
  eprint       = {2504.03616},
  bibsource    = {dblp computer science bibliography, https://dblp.org}
}

@inproceedings{DBLP:conf/bea/KulmizevBBNNPW17,
  author       = {Artur Kulmizev and
                  Bo Blankers and
                  Johannes Bjerva and
                  Malvina Nissim and
                  Gertjan van Noord and
                  Barbara Plank and
                  Martijn Wieling},
  editor       = {Joel R. Tetreault and
                  Jill Burstein and
                  Claudia Leacock and
                  Helen Yannakoudakis},
  title        = {The Power of Character N-grams in Native Language Identification},
  booktitle    = {Proceedings of the 12th Workshop on Innovative Use of {NLP} for Building
                  Educational Applications, BEA@EMNLP 2017, Copenhagen, Denmark, September
                  8, 2017},
  pages        = {382--389},
  publisher    = {Association for Computational Linguistics},
  year         = {2017},
  doi          = {10.18653/V1/W17-5043},
  bibsource    = {dblp computer science bibliography, https://dblp.org}
}

@inproceedings{popovic-2015-chrf,
    title = "chr{F}: character n-gram {F}-score for automatic {MT} evaluation",
    author = "Popovi{\'c}, Maja",
    editor = "Bojar, Ond{\v{r}}ej  and
      Chatterjee, Rajen  and
      Federmann, Christian  and
      Haddow, Barry  and
      Hokamp, Chris  and
      Huck, Matthias  and
      Logacheva, Varvara  and
      Pecina, Pavel",
    booktitle = "Proceedings of the Tenth Workshop on Statistical Machine Translation",
    month = sep,
    year = "2015",
    address = "Lisbon, Portugal",
    publisher = "Association for Computational Linguistics",
    doi = "10.18653/v1/W15-3049",
    pages = "392--395"
}

@inproceedings{DBLP:conf/nips/ZhengC00WZL0LXZ23,
  author       = {Lianmin Zheng and
                  Wei{-}Lin Chiang and
                  Ying Sheng and
                  Siyuan Zhuang and
                  Zhanghao Wu and
                  Yonghao Zhuang and
                  Zi Lin and
                  Zhuohan Li and
                  Dacheng Li and
                  Eric P. Xing and
                  Hao Zhang and
                  Joseph E. Gonzalez and
                  Ion Stoica},
  editor       = {Alice Oh and
                  Tristan Naumann and
                  Amir Globerson and
                  Kate Saenko and
                  Moritz Hardt and
                  Sergey Levine},
  title        = {Judging {LLM}-as-a-Judge with {MT-Bench} and Chatbot Arena},
  booktitle    = {Advances in Neural Information Processing Systems 36: Annual Conference
                  on Neural Information Processing Systems 2023, NeurIPS 2023, New Orleans,
                  LA, USA, December 10 - 16, 2023},
  year         = {2023},
  bibsource    = {dblp computer science bibliography, https://dblp.org}
}

@article{DBLP:journals/corr/abs-2501-12948,
  author       = {DeepSeek{-}AI},
  title        = {{DeepSeek-R1}: Incentivizing Reasoning Capability in {LLMs} via Reinforcement
                  Learning},
  journal      = {CoRR},
  volume       = {abs/2501.12948},
  year         = {2025},
  doi          = {10.48550/ARXIV.2501.12948},
  eprinttype    = {arXiv},
  eprint       = {2501.12948},
  bibsource    = {dblp computer science bibliography, https://dblp.org}
}

@article{DBLP:journals/corr/HintonVD15,
  author       = {Geoffrey E. Hinton and
                  Oriol Vinyals and
                  Jeffrey Dean},
  title        = {Distilling the Knowledge in a Neural Network},
  journal      = {CoRR},
  volume       = {abs/1503.02531},
  year         = {2015},
  eprinttype    = {arXiv},
  eprint       = {1503.02531},
  bibsource    = {dblp computer science bibliography, https://dblp.org}
}

@article{DBLP:journals/corr/abs-2504-18428,
  author       = {Yiming Wang and
                  Pei Zhang and
                  Jialong Tang and
                  Haoran Wei and
                  Baosong Yang and
                  Rui Wang and
                  Chenshu Sun and
                  Feitong Sun and
                  Jiran Zhang and
                  Junxuan Wu and
                  Qiqian Cang and
                  Yichang Zhang and
                  Fei Huang and
                  Junyang Lin and
                  Fei Huang and
                  Jingren Zhou},
  title        = {PolyMath: Evaluating Mathematical Reasoning in Multilingual Contexts},
  journal      = {CoRR},
  volume       = {abs/2504.18428},
  year         = {2025},
  doi          = {10.48550/ARXIV.2504.18428},
  eprinttype    = {arXiv},
  eprint       = {2504.18428},
  bibsource    = {dblp computer science bibliography, https://dblp.org}
}

@article{tsatsaronis2015overview,
  title={An overview of the BIOASQ large-scale biomedical semantic indexing and question answering competition},
  author={Tsatsaronis, George and Balikas, Georgios and Malakasiotis, Prodromos and Partalas, Ioannis and Zschunke, Matthias and Alvers, Michael R and Weissenborn, Dirk and Krithara, Anastasia and Petridis, Sergios and Polychronopoulos, Dimitris and others},
  journal={BMC bioinformatics},
  volume={16},
  number={1},
  pages={138},
  year={2015},
  publisher={Springer}
}

@inproceedings{yang2018hotpotqa,
  title={HotpotQA: A dataset for diverse, explainable multi-hop question answering},
  author={Yang, Zhilin and Qi, Peng and Zhang, Saizheng and Bengio, Yoshua and Cohen, William and Salakhutdinov, Ruslan and Manning, Christopher D},
  booktitle={Proceedings of the 2018 conference on empirical methods in natural language processing},
  pages={2369--2380},
  year={2018}
}

@article{yang2025qwen3,
  title={Qwen3 technical report},
  author={Yang, An and Li, Anfeng and Yang, Baosong and Zhang, Beichen and Hui, Binyuan and Zheng, Bo and Yu, Bowen and Gao, Chang and Huang, Chengen and Lv, Chenxu and others},
  journal={arXiv preprint arXiv:2505.09388},
  year={2025}
}

@article{dubey2024llama,
  title={The llama 3 herd of models},
  author={Dubey, Abhimanyu and Jauhri, Abhinav and Pandey, Abhinav and Kadian, Abhishek and Al-Dahle, Ahmad and Letman, Aiesha and Mathur, Akhil and Schelten, Alan and Yang, Amy and Fan, Angela and others},
  journal={arXiv preprint arXiv:2407.21783},
  year={2024}
}

@article{luo2025mmath,
  title={MMATH: A Multilingual Benchmark for Mathematical Reasoning},
  author={Luo, Wenyang and Zhao, Wayne Xin and Sha, Jing and Wang, Shijin and Wen, Ji-Rong},
  journal={arXiv preprint arXiv:2505.19126},
  year={2025}
}

@inproceedings{yang-etal-2024-self,
    title = "Self-Distillation Bridges Distribution Gap in Language Model Fine-Tuning",
    author = "Yang, Zhaorui  and
      Pang, Tianyu  and
      Feng, Haozhe  and
      Wang, Han  and
      Chen, Wei  and
      Zhu, Minfeng  and
      Liu, Qian",
    editor = "Ku, Lun-Wei  and
      Martins, Andre  and
      Srikumar, Vivek",
    booktitle = "Proceedings of the 62nd Annual Meeting of the Association for Computational Linguistics (Volume 1: Long Papers)",
    month = aug,
    year = "2024",
    address = "Bangkok, Thailand",
    publisher = "Association for Computational Linguistics",
    doi = "10.18653/v1/2024.acl-long.58",
    pages = "1028--1043"
}

@article{qiu2025gated,
  title={Gated Attention for Large Language Models: Non-linearity, Sparsity, and Attention-Sink-Free},
  author={Qiu, Zihan and Wang, Zekun and Zheng, Bo and Huang, Zeyu and Wen, Kaiyue and Yang, Songlin and Men, Rui and Yu, Le and Huang, Fei and Huang, Suozhi and others},
  journal={arXiv preprint arXiv:2505.06708},
  year={2025}
}

@inproceedings{grave2018learning,
  title={Learning word vectors for 157 languages},
  author={Grave, Edouard and Bojanowski, Piotr and Gupta, Prakhar and Joulin, Armand and Mikolov, Tom{\'a}{\v{s}}},
  booktitle={Proceedings of the eleventh international conference on language resources and evaluation (LREC 2018)},
  year={2018}
}

@article{zhang2025think,
  title={Think Natively: Unlocking Multilingual Reasoning with Consistency-Enhanced Reinforcement Learning},
  author={Zhang, Xue and Liang, Yunlong and Meng, Fandong and Zhang, Songming and Huang, Kaiyu and Chen, Yufeng and Xu, Jinan and Zhou, Jie},
  journal={arXiv preprint arXiv:2510.07300},
  year={2025}
}

@article{schulman2017proximal,
  title={Proximal policy optimization algorithms},
  author={Schulman, John and Wolski, Filip and Dhariwal, Prafulla and Radford, Alec and Klimov, Oleg},
  journal={arXiv preprint arXiv:1707.06347},
  year={2017}
}

@article{ouyang2022training,
  title={Training language models to follow instructions with human feedback},
  author={Ouyang, Long and Wu, Jeffrey and Jiang, Xu and Almeida, Diogo and Wainwright, Carroll and Mishkin, Pamela and Zhang, Chong and Agarwal, Sandhini and Slama, Katarina and Ray, Alex and others},
  journal={Advances in neural information processing systems},
  volume={35},
  pages={27730--27744},
  year={2022}
}

@inproceedings{popovic-2017-chrf,
    title = "chr{F}++: words helping character n-grams",
    author = "Popovi{\'c}, Maja",
    editor = "Bojar, Ond{\v{r}}ej  and
      Buck, Christian  and
      Chatterjee, Rajen  and
      Federmann, Christian  and
      Graham, Yvette  and
      Haddow, Barry  and
      Huck, Matthias  and
      Yepes, Antonio Jimeno  and
      Koehn, Philipp  and
      Kreutzer, Julia",
    booktitle = "Proceedings of the Second Conference on Machine Translation",
    month = sep,
    year = "2017",
    address = "Copenhagen, Denmark",
    publisher = "Association for Computational Linguistics",
    doi = "10.18653/v1/W17-4770",
    pages = "612--618"
}

@inproceedings{hu2020xtreme,
  title={Xtreme: A massively multilingual multi-task benchmark for evaluating cross-lingual generalisation},
  author={Hu, Junjie and Ruder, Sebastian and Siddhant, Aditya and Neubig, Graham and Firat, Orhan and Johnson, Melvin},
  booktitle={International conference on machine learning},
  pages={4411--4421},
  year={2020},
  organization={PMLR}
}

@inproceedings{lewis2020mlqa,
  title={{MLQA}: Evaluating Cross-Lingual Extractive Question Answering},
  author={Lewis, Patrick and Oguz, Barlas and Rinott, Ruty and Riedel, Sebastian and Schwenk, Holger},
  booktitle={Proceedings of the 58th annual meeting of the association for computational linguistics},
  pages={7315--7330},
  year={2020}
}

@article{sanh2019distilbert,
  title={{DistilBERT}, a Distilled Version of {BERT}: Smaller, Faster, Cheaper and Lighter},
  author={Sanh, Victor and Debut, Lysandre and Chaumond, Julien and Wolf, Thomas},
  journal={arXiv preprint arXiv:1910.01108},
  year={2019}
}

@article{sun2019patient,
  title={Patient Knowledge Distillation for {BERT} Model Compression},
  author={Sun, Siqi and Cheng, Yu and Gan, Zhe and Liu, Jingjing},
  journal={arXiv preprint arXiv:1908.09355},
  year={2019}
}

@inproceedings{jiao2020tinybert,
  title={{TinyBERT}: Distilling {BERT} for Natural Language Understanding},
  author={Jiao, Xiaoqi and Yin, Yichun and Shang, Lifeng and Jiang, Xin and Chen, Xiao and Li, Linlin and Wang, Fang and Liu, Qun},
  booktitle={Findings of the association for computational linguistics: EMNLP 2020},
  pages={4163--4174},
  year={2020}
}

@article{wang2020minilm,
  title={{MiniLM}: Deep Self-Attention Distillation for Task-Agnostic Compression of Pre-Trained Transformers},
  author={Wang, Wenhui and Wei, Furu and Dong, Li and Bao, Hangbo and Yang, Nan and Zhou, Ming},
  journal={Advances in neural information processing systems},
  volume={33},
  pages={5776--5788},
  year={2020}
}

@inproceedings{kim2016sequence,
  title={Sequence-level knowledge distillation},
  author={Kim, Yoon and Rush, Alexander M},
  booktitle={Proceedings of the 2016 conference on empirical methods in natural language processing},
  pages={1317--1327},
  year={2016}
}

@article{gu2023minillm,
  title={{MiniLLM}: Knowledge Distillation of Large Language Models},
  author={Gu, Yuxian and Dong, Li and Wei, Furu and Huang, Minlie},
  journal={arXiv preprint arXiv:2306.08543},
  year={2023}
}

@article{bengio2015scheduled,
  title={Scheduled sampling for sequence prediction with recurrent neural networks},
  author={Bengio, Samy and Vinyals, Oriol and Jaitly, Navdeep and Shazeer, Noam},
  journal={Advances in neural information processing systems},
  volume={28},
  year={2015}
}

@article{ranzato2015sequence,
  title={Sequence level training with recurrent neural networks},
  author={Ranzato, Marc'Aurelio and Chopra, Sumit and Auli, Michael and Zaremba, Wojciech},
  journal={arXiv preprint arXiv:1511.06732},
  year={2015}
}

@article{williams1992simple,
  title={Simple statistical gradient-following algorithms for connectionist reinforcement learning},
  author={Williams, Ronald J},
  journal={Machine learning},
  volume={8},
  number={3},
  pages={229--256},
  year={1992},
  publisher={Springer}
}

@article{sutton1999policy,
  title={Policy gradient methods for reinforcement learning with function approximation},
  author={Sutton, Richard S and McAllester, David and Singh, Satinder and Mansour, Yishay},
  journal={Advances in neural information processing systems},
  volume={12},
  year={1999}
}

@article{kazemnejad2024vineppo,
  title={{VinePPO}: Refining Credit Assignment in {RL} Training of {LLMs}},
  author={Kazemnejad, Amirhossein and Aghajohari, Milad and Portelance, Eva and Sordoni, Alessandro and Reddy, Siva and Courville, Aaron and Roux, Nicolas Le},
  journal={arXiv preprint arXiv:2410.01679},
  year={2024}
}

@inproceedings{DBLP:conf/emnlp/HuWLXLB0PL23,
  author       = {Zhiqiang Hu and
                  Lei Wang and
                  Yihuai Lan and
                  Wanyu Xu and
                  Ee{-}Peng Lim and
                  Lidong Bing and
                  Xing Xu and
                  Soujanya Poria and
                  Roy Ka{-}Wei Lee},
  editor       = {Houda Bouamor and
                  Juan Pino and
                  Kalika Bali},
  title        = {LLM-Adapters: An Adapter Family for Parameter-Efficient Fine-Tuning
                  of Large Language Models},
  booktitle    = {Proceedings of the 2023 Conference on Empirical Methods in Natural
                  Language Processing, {EMNLP} 2023, Singapore, December 6-10, 2023},
  pages        = {5254--5276},
  publisher    = {Association for Computational Linguistics},
  year         = {2023},
  doi          = {10.18653/V1/2023.EMNLP-MAIN.319},
}

@article{DBLP:journals/corr/abs-2409-19256,
  author       = {Guangming Sheng and
                  Chi Zhang and
                  Zilingfeng Ye and
                  Xibin Wu and
                  Wang Zhang and
                  Ru Zhang and
                  Yanghua Peng and
                  Haibin Lin and
                  Chuan Wu},
  title        = {HybridFlow: {A} Flexible and Efficient {RLHF} Framework},
  journal      = {CoRR},
  volume       = {abs/2409.19256},
  year         = {2024},
  doi          = {10.48550/ARXIV.2409.19256},
  eprinttype    = {arXiv},
  eprint       = {2409.19256},
}

@inproceedings{DBLP:conf/sosp/KwonLZ0ZY0ZS23,
  author       = {Woosuk Kwon and
                  Zhuohan Li and
                  Siyuan Zhuang and
                  Ying Sheng and
                  Lianmin Zheng and
                  Cody Hao Yu and
                  Joseph Gonzalez and
                  Hao Zhang and
                  Ion Stoica},
  editor       = {Jason Flinn and
                  Margo I. Seltzer and
                  Peter Druschel and
                  Antoine Kaufmann and
                  Jonathan Mace},
  title        = {Efficient Memory Management for Large Language Model Serving with
                  PagedAttention},
  booktitle    = {Proceedings of the 29th Symposium on Operating Systems Principles,
                  {SOSP} 2023, Koblenz, Germany, October 23-26, 2023},
  pages        = {611--626},
  publisher    = {{ACM}},
  year         = {2023},
  doi          = {10.1145/3600006.3613165},
}

@article{qi2026lcrl,
  title={Language-Coupled Reinforcement Learning for Multilingual Retrieval-Augmented Generation},
  author={Qi, Rui and Mo, Fengran and Chen, Yufeng and Zhang, Xue and Wang, Shuo and Li, Hongliang and Xu, Jinan and Jiang, Meng and Nie, Jian-Yun and Huang, Kaiyu},
  journal={arXiv preprint arXiv:2601.14896},
  year={2026}
}

@article{qi2026crosearch,
  title={{CroSearch-R1}: Better Leveraging Cross-lingual Knowledge for Retrieval-Augmented Generation},
  author={Qi, Rui and Mo, Fengran and Lu, Sijin and Chen, Yufeng and Nie, Jian-Yun and Huang, Kaiyu},
  journal={arXiv preprint arXiv:2604.25182},
  year={2026}
}

@article{zhang2026rlad,
  title={Reinforcement-Aware Knowledge Distillation for {LLM} Reasoning},
  author={Zhang, Zhaoyang and Jiang, Shuli and Shen, Yantao and Zhang, Yuting and Ram, Dhananjay and Yang, Shuo and Tu, Zhuowen and Xia, Wei and Soatto, Stefano},
  journal={arXiv preprint arXiv:2602.22495},
  year={2026}
}

@article{huang2025ragrl,
  title={{RAG-RL}: Advancing Retrieval-Augmented Generation via {RL} and Curriculum Learning},
  author={Huang, Jerry and Madala, Siddarth and Sidhu, Risham and Niu, Cheng and Peng, Hao and Hockenmaier, Julia and Zhang, Tong},
  journal={arXiv preprint arXiv:2503.12759},
  year={2025}
}

@article{zhao2026opsd,
  title={Self-Distilled Reasoner: On-Policy Self-Distillation for Large Language Models},
  author={Zhao, Siyan and Xie, Zhihui and Liu, Mengchen and Huang, Jing and Pang, Guan and Chen, Feiyu and Grover, Aditya},
  journal={arXiv preprint arXiv:2601.18734},
  year={2026}
}

@article{hwang2025bridge,
  title={Learn Globally, Speak Locally: Bridging the Gaps in Multilingual Reasoning},
  author={Hwang, Jaedong and Tanmay, Kumar and Lee, Seok-Jin and Agrawal, Ayush and Palangi, Hamid and Ayush, Kumar and Fiete, Ila and Liang, Paul Pu},
  journal={arXiv preprint arXiv:2507.05418},
  year={2025}
}

@article{elhady2026xlsc,
  title={Cross-lingual Self-Consistency for Multilingual Reasoning with Language Models},
  author={Elhady, Ahmed and Agirre, Eneko and Artetxe, Mikel},
  journal={arXiv preprint arXiv:2606.01464},
  year={2026}
}

@article{shen2026rlkd,
  title={Reinforcement Learning-based Knowledge Distillation with {LLM}-as-a-Judge},
  author={Shen, Yiyang and Tu, Lifu and Wang, Weiran},
  journal={arXiv preprint arXiv:2604.02621},
  year={2026}
}


\newpage
\appendix
\section*{Appendices}

\section{Limitations and Future Work}
\label{sec:limitations}

We focus on a specific but practically important setting: \emph{English-evidence} cross-lingual generation, where questions and answers are non-English but retrieved evidence is English. This leaves several extensions for future work, including multilingual or mixed-language evidence, cross-lingual retrieval, and tighter joint optimization of retrieval and generation. Our method also relies on stronger teachers and an LLM-based judge during training, which introduces additional cost and potential evaluator bias even though neither is required at inference time. One specific overlap deserves note: for the Llama backbone, our cross-family evaluation judge (\texttt{Llama-3.3-70B-Instruct}) coincides with the distillation teacher, so its scores are not fully independent of the teacher anchor. We partly control for this by reporting judge-independent signals (language consistency and character 3-gram recall) and by the Qwen backbone, whose teacher and primary judge come from different families; a fully neutral third-family judge (e.g., Gemma or Mistral) on an MKQA subset would further strengthen the evaluation and is left to future work. In addition, part of our training data is derived from translated benchmarks, so broader validation on natively authored cross-lingual RAG datasets would strengthen the conclusions. Finally, our current language-consistency reward is intentionally simple, and richer language-control signals may further improve robustness, especially for low-resource languages.
\section{Extended Proofs for Teacher-Regularized RL}
\label{app:proofs}

\subsection{Notation and Setup}
Let $x$ denote an input instance (query--evidence pair) drawn together with a reference answer $a^\star$ from a training distribution $\mathcal{B}$.
Given $x$, the student policy $\pi_\theta$ generates an output sequence
$y=(y_1,\ldots,y_{L(y)})$ with a (possibly trajectory-dependent) length $L(y)$, using the autoregressive factorization
\begin{equation}
\pi_\theta(y\mid x)=\prod_{t=1}^{L(y)}\pi_\theta(y_t\mid s_t),
\qquad s_t=(x,y_{<t}).
\end{equation}
Here $s_t$ denotes the prefix state induced by the realized student trajectory.
The teacher policy $\pi_T$ is \emph{frozen} and is evaluated on the same student-visited prefix states,
i.e., we query $\pi_T(\cdot\mid s_t)$ for $s_t=(x,y_{<t})$ where $y\sim\pi_\theta$.

We use a length-normalized (token-mean) reverse-KL teacher anchor along the student trajectory:
\begin{equation}
\mathcal{L}_{\mathrm{revKL}}(x,y)
=
\frac{1}{L(y)}\sum_{t=1}^{L(y)}
\KL\!\Big(\pi_\theta(\cdot\mid s_t)\ \big\|\ \pi_T(\cdot\mid s_t)\Big).
\label{eq:app:token_mean_revkl}
\end{equation}
This normalization decouples the regularization strength from output length, while remaining well-defined when $L(y)$ varies (e.g., EOS-terminated generation).

Our teacher-regularized objective is
\begin{equation}
\max_{\theta}\;
\tilde{J}(\theta)
=
\mathbb{E}_{(x,a^\star)\sim\mathcal{B}}
\mathbb{E}_{y\sim\pi_\theta(\cdot\mid x)}
\Big[
r(x,y)-\beta\,\mathcal{L}_{\mathrm{revKL}}(x,y)
\Big],
\label{eq:app:main_obj_beta}
\end{equation}
where $r(x,y)$ is the task reward and $\beta>0$ controls the strength of the teacher anchor.
All statements below are written to remain valid when $L(y)$ is random.

\subsection{Reverse KL Decomposition: Teacher Likelihood + Entropy}
\label{app:revkl_decomp}

\begin{lemma}[Per-token reverse KL identity]
\label{lem:revkl_identity}
For any prefix state $s$ and distributions $\pi(\cdot\mid s)$ and $\pi_T(\cdot\mid s)$,
\begin{equation}
-\KL\!\big(\pi(\cdot\mid s)\ \big\|\ \pi_T(\cdot\mid s)\big)
=
\mathbb{E}_{a\sim\pi(\cdot\mid s)}[\log \pi_T(a\mid s)]
+\mathcal{H}\!\big(\pi(\cdot\mid s)\big),
\label{eq:app:revkl_identity}
\end{equation}
where $\mathcal{H}(\pi)=-\mathbb{E}_{a\sim\pi}[\log \pi(a)]$ is entropy.
\end{lemma}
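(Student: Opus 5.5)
The plan is to expand the definition of the reverse KL and split the logarithm of the ratio. By definition,
\begin{equation*}
\KL\!\big(\pi(\cdot\mid s)\,\big\|\,\pi_T(\cdot\mid s)\big)=\mathbb{E}_{a\sim\pi(\cdot\mid s)}\!\left[\log\frac{\pi(a\mid s)}{\pi_T(a\mid s)}\right].
\end{equation*}
Writing $\log\frac{\pi}{\pi_T}=\log\pi-\log\pi_T$ and using linearity of expectation gives $\mathbb{E}_{\pi}[\log\pi(a\mid s)]-\mathbb{E}_{\pi}[\log\pi_T(a\mid s)]$. The first term is $-\mathcal{H}(\pi(\cdot\mid s))$ by the definition of entropy stated in the lemma. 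Negating both sides then yields exactly \eqref{eq:app:revkl_identity}. The argument is stated for an arbitrary prefix state $s$, with $s$ held fixed throughout, so it is purely a per-state identity. Nothing about trajectories, random lengths $L(y)$, or the student policy's parameterization enters.

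The only step needing care is making the linearity split legitimate, i.e.\ avoiding an $\infty-\infty$ form. I would use the discrete, finite vocabulary, which matches the setting since student and teacher share a tokenizer. I would sum only over the support of $\pi(\cdot\mid s)$ and adopt the convention $0\log 0=0$. On that support, $\log\pi(a\mid s)$ is finite, so the entropy is finite and at most $\log|\mathcal{V}|$.

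Two cases follow.
\begin{itemize}
\item If $\pi_T(a\mid s)>0$ on the support of $\pi$, every term is finite. The identity then holds as a finite sum and needs no further argument.
\item If $\pi_T(a\mid s)=0$ for some $a$ with $\pi(a\mid s)>0$, both sides equal $-\infty$. The left side is $-\infty$ because the KL is $+\infty$. The right side is $-\infty$ because the cross-entropy term $\mathbb{E}_\pi[\log\pi_T]$ is $-\infty$ while the entropy stays finite.
\end{itemize}
So the identity holds in the extended reals. In practice teacher softmax outputs are strictly positive, so the first case always applies.

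I expect no real obstacle; the result is a one-line algebraic identity. The only point worth recording explicitly is this support and finiteness remark, so that later uses of the lemma, such as interpreting the anchor as teacher likelihood plus entropy, are well-defined.
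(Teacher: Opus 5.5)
Your proposal is correct and matches the paper's proof: expand $\KL(\pi\|\pi_T)=\mathbb{E}_{a\sim\pi}[\log\pi(a)-\log\pi_T(a)]$, split it by linearity, recognize $-\mathbb{E}_{\pi}[\log\pi]$ as the entropy, and negate. Your support and finiteness remark (finite vocabulary, the convention $0\log 0=0$, and both sides equal to $-\infty$ when $\pi_T$ vanishes on the support of $\pi$) is sound extra rigor that the paper omits, since strictly positive teacher softmax outputs make it unnecessary there.
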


\begin{proof}
By definition,
$\KL(\pi\|\pi_T)=\mathbb{E}_{a\sim\pi}[\log\pi(a)-\log\pi_T(a)]$.
Negating both sides gives
$-\KL(\pi\|\pi_T)=\mathbb{E}_{a\sim\pi}[\log\pi_T(a)]-\mathbb{E}_{a\sim\pi}[\log\pi(a)]
=\mathbb{E}_{a\sim\pi}[\log\pi_T(a)]+\mathcal{H}(\pi)$.
\end{proof}

\begin{corollary}[Token-mean reverse KL decomposition on student prefixes]
\label{cor:token_mean_decomp}
For a sampled trajectory $y\sim\pi_\theta(\cdot\mid x)$,
\begin{equation}
-\mathcal{L}_{\mathrm{revKL}}(x,y)
=
\frac{1}{L(y)}\sum_{t=1}^{L(y)}
\Big(
\mathbb{E}_{a\sim\pi_\theta(\cdot\mid s_t)}[\log \pi_T(a\mid s_t)]
+\mathcal{H}(\pi_\theta(\cdot\mid s_t))
\Big).
\label{eq:app:token_mean_revkl_decomp}
\end{equation}
\end{corollary}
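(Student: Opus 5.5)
The plan is to apply Lemma~\ref{lem:revkl_identity} pointwise at every prefix state visited by the sampled trajectory, and then average using the definition of $\mathcal{L}_{\mathrm{revKL}}$ in Eq.~\eqref{eq:app:token_mean_revkl}. No probabilistic argument over $y$ is needed. The corollary is a deterministic identity that holds for every realized trajectory $y$, so it remains valid when the length $L(y)$ is random.

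\emph{Step 1: fix the trajectory.} Fix $x$ and one realized trajectory $y=(y_1,\ldots,y_{L(y)})$ sampled from $\pi_\theta(\cdot\mid x)$. Conditional on this realization, $L(y)$ is a fixed positive integer, since the trajectory contains at least the EOS token. The prefix states $s_t=(x,y_{<t})$ for $t=1,\ldots,L(y)$ are then fixed as well.

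\emph{Step 2: apply the lemma at each prefix.} For each $t$, apply Lemma~\ref{lem:revkl_identity} with $s=s_t$, $\pi=\pi_\theta(\cdot\mid s_t)$, and the teacher distribution $\pi_T(\cdot\mid s_t)$ queried on that same state. This gives
\[
-\KL\big(\pi_\theta(\cdot\mid s_t)\,\|\,\pi_T(\cdot\mid s_t)\big)=\mathbb{E}_{a\sim\pi_\theta(\cdot\mid s_t)}[\log\pi_T(a\mid s_t)]+\mathcal{H}(\pi_\theta(\cdot\mid s_t)).
\]

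\emph{Step 3: average over positions.} Sum these identities over $t=1,\ldots,L(y)$ and multiply by $1/L(y)$. By linearity of finite sums, the left-hand side becomes exactly $-\mathcal{L}_{\mathrm{revKL}}(x,y)$ by its definition, and the right-hand side becomes the right-hand side of Eq.~\eqref{eq:app:token_mean_revkl_decomp}.

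There is no genuine obstacle here. The only point needing care is finiteness, so that no manipulation involves $\infty-\infty$. If $\pi_T(a\mid s_t)=0$ for some $a$ with $\pi_\theta(a\mid s_t)>0$, both sides equal $-\infty$ and the identity still holds in the extended reals. In practice this case does not arise, because both models use softmax outputs over a shared vocabulary, so all probabilities are strictly positive and every term is finite. The entropy term is always finite because the vocabulary is finite.
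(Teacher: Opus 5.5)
Your proposal is correct and follows the same route as the paper, which treats the corollary as immediate: apply Lemma~\ref{lem:revkl_identity} at each student-visited state $s_t$ and average with the $1/L(y)$ normalization, as a pathwise identity for the realized trajectory. Your handling of the extended-real case (both sides equal to $-\infty$, never $\infty-\infty$) is a harmless addition that the paper does not discuss.
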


\paragraph{Implication.}
Eq.~\eqref{eq:app:token_mean_revkl_decomp} shows the teacher anchor has two effects:
(i) it increases \emph{teacher-consistency} by encouraging student actions that the teacher assigns high probability;
(ii) it adds a \emph{local entropy bonus} that stabilizes exploration under reward optimization.

\paragraph{Important note on realizations vs.\ expectations.}
In general, when $L(y)$ is trajectory-dependent, one should \emph{not} claim that
\[
\mathbb{E}_{y\sim\pi_\theta}\!\left[\frac{1}{L(y)}\sum_{t=1}^{L(y)}\mathbb{E}_{a\sim\pi_\theta(\cdot\mid s_t)}[\log \pi_T(a\mid s_t)]\right]
\]
is exactly equivalent to replacing the inner expectation by the realized token $\log\pi_T(y_t\mid s_t)$.
The realized-token form is a \emph{Monte Carlo estimator} of the per-state expectation,
but the presence of the random normalization $1/L(y)$ means additional care is needed when rewriting objectives.
We therefore use the decomposition in Eq.~\eqref{eq:app:token_mean_revkl_decomp} as the mathematically correct identity,
and treat realized-token quantities only as estimators when discussing implementation.

\subsection{Closed-form Optimum in a Simplified Bandit View (Sum-KL Variant)}
\label{app:closed_form}

To build intuition, consider a contextual bandit abstraction where an ``action'' is the entire sequence $y$,
and consider a \emph{sum-KL} regularizer (no length normalization).
Define a sequence-level teacher ``prior'' by evaluating the teacher on the same prefix states induced by $y$, i.e.,
\begin{equation}
\pi_T(y\mid x)\triangleq \prod_{t=1}^{L(y)}\pi_T(y_t\mid s_t).
\label{eq:app:teacher_seq_prior}
\end{equation}

Define the sequence-level reverse KL
\begin{equation}
\KL\!\big(\pi(\cdot\mid x)\ \big\|\ \pi_T(\cdot\mid x)\big)
=
\mathbb{E}_{y\sim\pi(\cdot\mid x)}
\big[\log \pi(y\mid x)-\log \pi_T(y\mid x)\big].
\label{eq:app:seq_kl_def}
\end{equation}

\begin{lemma}[Autoregressive KL chain rule]
\label{lem:kl_chain_rule}
If $\pi$ and $\pi_T$ share the same autoregressive factorization on the same prefix states,
then the sequence-level KL equals the expected \emph{sum} of per-token KL terms:
\begin{equation}
\KL\!\big(\pi(\cdot\mid x)\ \big\|\ \pi_T(\cdot\mid x)\big)
=
\mathbb{E}_{y\sim\pi(\cdot\mid x)}
\sum_{t=1}^{L(y)}\KL\!\big(\pi(\cdot\mid s_t)\ \big\|\ \pi_T(\cdot\mid s_t)\big).
\label{eq:app:kl_chain_rule}
\end{equation}
\end{lemma}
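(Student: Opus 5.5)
We prove the chain rule directly from the definition of the sequence-level KL in Eq.~\eqref{eq:app:seq_kl_def}. The argument treats the log-ratio as a telescoping sum of per-token log-ratios and then rewrites each term as a per-state KL by the tower property. We assume a finite vocabulary that includes EOS. We also assume generation terminates almost surely with $\mathbb{E}_{y\sim\pi}[L(y)]<\infty$, or else that there is a maximum length cap. Finally, we assume absolute continuity: $\pi(\cdot\mid s)\ll\pi_T(\cdot\mid s)$ at every reachable state, since otherwise both sides equal $+\infty$.

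\textbf{Step 1 (factorize the log-ratio).} By the shared autoregressive factorization and Eq.~\eqref{eq:app:teacher_seq_prior}, for every $y$ we have
\[
\log\pi(y\mid x)-\log\pi_T(y\mid x)=\sum_{t=1}^{L(y)}\big[\log\pi(y_t\mid s_t)-\log\pi_T(y_t\mid s_t)\big].
\]
Both sequence distributions are evaluated on the same prefixes $s_t=(x,y_{<t})$, so no state mismatch arises.

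\textbf{Step 2 (rewrite with a fixed-horizon indicator).} We write the random-length sum as $\sum_{t\ge1}\mathbb{I}[L(y)\ge t]\,\ell_t(y)$, where $\ell_t(y)$ denotes the $t$-th log-ratio. The key observation is that the event $\{L(y)\ge t\}$ is determined by the prefix $y_{<t}$: it holds exactly when no EOS appears among $y_1,\ldots,y_{t-1}$. Hence $L$ acts as a stopping time with respect to the prefix filtration.

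\textbf{Step 3 (tower property).} Conditioning on $y_{<t}$, the token $y_t$ is distributed as $\pi(\cdot\mid s_t)$. Therefore
\[
\mathbb{E}\big[\mathbb{I}[L\ge t]\,\ell_t\big]=\mathbb{E}\big[\mathbb{I}[L\ge t]\,\KL(\pi(\cdot\mid s_t)\,\|\,\pi_T(\cdot\mid s_t))\big].
\]
Summing over $t$ and reassembling the indicators gives Eq.~\eqref{eq:app:kl_chain_rule}.

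\textbf{Main obstacle (interchanging sum and expectation).} The hard step is justifying the exchange of $\sum_t$ and $\mathbb{E}$. The individual log-ratios $\ell_t$ can be negative, so monotone convergence does not apply to them directly. We handle this as follows:
\begin{enumerate}[label=(\roman*)]
\item On the right-hand side, every per-token KL term is nonnegative. Tonelli's theorem therefore lets us swap sum and expectation there without any further condition.
\item With a maximum length cap, all sums are finite and the interchange is trivial.
\item Without a cap, we truncate at horizon $N$ and apply Steps 2--3 to the truncated sum.
\item We then let $N\to\infty$. The right-hand side converges by monotone convergence. The left-hand side converges under an integrability assumption, for example when $\log\pi_T$ is bounded below on the support of $\pi$ together with $\mathbb{E}[L]<\infty$.
\end{enumerate}
We will state this integrability assumption explicitly as a regularity condition.
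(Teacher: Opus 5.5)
Your proposal is correct and follows the same route as the paper: you factor the sequence log-ratio into per-token log-ratios on the shared prefixes, then apply the tower property at each time step and sum. The only difference is that you make explicit two points the paper's proof glosses over: that $\{L(y)\ge t\}$ is determined by $y_{<t}$, so the indicator passes through the conditional expectation, and that exchanging $\mathbb{E}$ with the random-length sum needs justification (trivial under a maximum-length cap, and handled by your truncation argument otherwise).
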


\begin{proof}
By the shared autoregressive factorization on identical prefix states $s_t=(x,y_{<t})$,
\begin{equation*}
\log\frac{\pi(y\mid x)}{\pi_T(y\mid x)}
=\sum_{t=1}^{L(y)}\log\frac{\pi(y_t\mid s_t)}{\pi_T(y_t\mid s_t)}.
\end{equation*}
Taking $\mathbb{E}_{y\sim\pi(\cdot\mid x)}$ of the left-hand side gives $\KL(\pi(\cdot\mid x)\|\pi_T(\cdot\mid x))$ by definition (Eq.~\eqref{eq:app:seq_kl_def}). For the right-hand side, group the summands by time step and apply the tower property: for each $t$, conditioning on the prefix $s_t$ generated by $\pi$, the realized token obeys $y_t\sim\pi(\cdot\mid s_t)$, so
\begin{equation*}
\mathbb{E}_{y\sim\pi}\!\left[\log\frac{\pi(y_t\mid s_t)}{\pi_T(y_t\mid s_t)}\right]
=\mathbb{E}_{y\sim\pi}\!\left[\mathbb{E}_{a\sim\pi(\cdot\mid s_t)}\!\log\frac{\pi(a\mid s_t)}{\pi_T(a\mid s_t)}\right]
=\mathbb{E}_{y\sim\pi}\!\left[\KL\!\big(\pi(\cdot\mid s_t)\,\|\,\pi_T(\cdot\mid s_t)\big)\right].
\end{equation*}
Summing over $t=1,\dots,L(y)$ yields Eq.~\eqref{eq:app:kl_chain_rule}. Treating the end-of-sequence marker as a terminal token in the shared vocabulary makes the factorization (and hence this identity) exact even when the length $L(y)$ varies across trajectories.
\end{proof}

\begin{proposition}[Closed-form optimum for teacher-regularized bandit objective (sum-KL)]
\label{thm:bandit_opt}
Fix an input $x$. Consider optimizing over all distributions $\pi(\cdot\mid x)$:
\begin{equation}
\max_{\pi(\cdot\mid x)}
\;\;
\mathbb{E}_{y\sim\pi(\cdot\mid x)}[r(x,y)]
-\beta\,\KL\!\big(\pi(\cdot\mid x)\ \big\|\ \pi_T(\cdot\mid x)\big).
\label{eq:app:bandit_obj}
\end{equation}
Assuming $\pi_T(y\mid x)>0$ for all $y$ in the support, an optimal solution is
\begin{equation}
\pi^\star(y\mid x)
=
\frac{\pi_T(y\mid x)\exp\!\big(r(x,y)/\beta\big)}
{\sum_{y'}\pi_T(y'\mid x)\exp\!\big(r(x,y')/\beta\big)}.
\label{eq:app:boltzmann_opt}
\end{equation}
\end{proposition}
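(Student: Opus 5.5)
The plan is the standard Gibbs variational argument: rewrite the objective in \eqref{eq:app:bandit_obj} as a negative KL divergence to the proposed $\pi^\star$, up to an additive constant that does not depend on $\pi$.

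Define the partition function
\[
Z(x)=\sum_{y'}\pi_T(y'\mid x)\exp\!\big(r(x,y')/\beta\big).
\]
This is finite and strictly positive because $r$ is bounded (as noted in the footnote to \eqref{eq:reward_total}) and $\pi_T(\cdot\mid x)$ is a probability distribution. It follows that $\pi^\star$ in \eqref{eq:app:boltzmann_opt} is a well-defined distribution, with $\pi^\star(y\mid x)>0$ wherever $\pi_T(y\mid x)>0$. Taking logarithms of \eqref{eq:app:boltzmann_opt} gives the pointwise identity
\[
r(x,y)=\beta\big(\log\pi^\star(y\mid x)-\log\pi_T(y\mid x)+\log Z(x)\big).
\]

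Next, fix an arbitrary $\pi(\cdot\mid x)$ with finite KL to $\pi_T$. For any other $\pi$ the objective equals $-\infty$, so there is nothing to check. Substitute the identity above into $\mathbb{E}_{y\sim\pi}[r(x,y)]$, and expand $\KL(\pi\|\pi_T)$ using its definition \eqref{eq:app:seq_kl_def}. The $\log\pi_T$ terms cancel, and we obtain
\[
\mathbb{E}_{\pi}[r]-\beta\,\KL(\pi\|\pi_T)=\beta\log Z(x)-\beta\,\KL\big(\pi(\cdot\mid x)\,\|\,\pi^\star(\cdot\mid x)\big).
\]
By Gibbs' inequality, $\KL(\pi\|\pi^\star)\ge 0$, with equality iff $\pi=\pi^\star$. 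Since $\beta>0$, the objective is therefore bounded above by $\beta\log Z(x)$, and $\pi^\star$ attains this bound. So $\pi^\star$ is optimal, and in fact unique. As a by-product, the optimal value is $\beta\log Z(x)$.

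The only delicate step is justifying the rearrangement of expectations when the sequence space is countably infinite, which happens because $L(y)$ is unbounded. To handle it, I would note the following. The term $\mathbb{E}_\pi[r]$ is finite since $r$ is bounded. $\log Z(x)$ is a finite constant. Hence on the finite-KL set the difference $\KL(\pi\|\pi_T)-\KL(\pi\|\pi^\star)$ equals $\mathbb{E}_\pi[\log(\pi^\star/\pi_T)]=\mathbb{E}_\pi[r]/\beta-\log Z(x)$, and this expression is finite. The split is therefore legitimate. Lemma~\ref{lem:kl_chain_rule} is not needed for the argument itself. It only serves to connect the sequence-level KL used here to the per-token (sum) form of the anchor.
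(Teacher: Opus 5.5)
Your proof is correct, but it takes a different route from the paper's.

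The paper's route: it forms the Lagrangian with the normalization constraint $\sum_y\pi(y\mid x)=1$. It sets the derivative with respect to each $\pi(y\mid x)$ to zero and reads off $\log\pi=\log\pi_T+r/\beta+c$. This \emph{derives} the Boltzmann form, but strictly it only exhibits a stationary point. Global optimality is left implicit: it follows from strict concavity of $\pi\mapsto\mathbb{E}_\pi[r]-\beta\,\KL(\pi\|\pi_T)$ together with the fact that the ignored nonnegativity constraints are inactive at the interior solution. Termwise differentiation is also only routine when the sequence space is finite.

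Your route: the Gibbs-variational argument \emph{verifies} a guessed answer through the exact identity $\mathbb{E}_\pi[r]-\beta\,\KL(\pi\|\pi_T)=\beta\log Z(x)-\beta\,\KL(\pi\|\pi^\star)$. In one step this gives global optimality, uniqueness, and the optimal value $\beta\log Z(x)$, with no calculus. Because you handle the infinite-KL case separately and justify the split via the bounded difference, it also remains valid on a countably infinite sequence space.

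The price is that you must know $\pi^\star$ in advance and need $Z(x)<\infty$, which you correctly get from boundedness of $r$. The Lagrangian route is the natural way to discover the form; yours is the cleaner way to certify it.
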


\begin{proof}
Introduce Lagrangian with constraint $\sum_y \pi(y\mid x)=1$:
\[
\mathcal{L}(\pi,\lambda)=
\sum_y \pi(y\mid x)r(x,y)
-\beta\sum_y \pi(y\mid x)\log\frac{\pi(y\mid x)}{\pi_T(y\mid x)}
+\lambda\Big(\sum_y \pi(y\mid x)-1\Big).
\]
Take derivative w.r.t.\ $\pi(y\mid x)$ and set to zero:
\[
r(x,y)-\beta\Big(\log\pi(y\mid x)-\log\pi_T(y\mid x)+1\Big)+\lambda=0.
\]
Rearrange to obtain $\log\pi(y\mid x)=\log\pi_T(y\mid x)+r(x,y)/\beta + c$,
where $c$ absorbs constants.
Exponentiate and normalize over $y$ to obtain Eq.~\eqref{eq:app:boltzmann_opt}.
\end{proof}

\paragraph{Remark (standard result).}
Eq.~\eqref{eq:app:boltzmann_opt} is the standard closed form of a KL-regularized (maximum-entropy) objective and recurs throughout the RLHF / control-as-inference literature (e.g., the DPO reward--policy correspondence \citep{rafailov2023direct}); we state it as a proposition, included for self-containedness, rather than as a novel result.

\paragraph{Scope clarification (sum-KL vs.\ mean-KL).}
Proposition~\ref{thm:bandit_opt} provides intuition for a \emph{sum-KL} regularized objective.
Our method uses the \emph{token-mean} regularizer in Eq.~\eqref{eq:app:token_mean_revkl}.
When $L$ is fixed, sum- and mean-normalized variants differ only by a constant scaling of $\beta$.
When $L(y)$ is trajectory-dependent, they are not identical; the bandit analysis should therefore
be interpreted as a simplified view rather than an exact characterization of Eq.~\eqref{eq:app:main_obj_beta}.

\subsection{Gradients: Reward Term vs.\ Teacher Anchor}
\label{app:gradients}

We derive the gradient estimators for Eq.~\eqref{eq:app:main_obj_beta} and then describe the practical
approximation used in our implementation.
For brevity, omit expectation over $(x,a^\star)$.

\paragraph{Reward term (REINFORCE).}
Using the score-function identity,
\begin{equation}
\nabla_\theta\;
\mathbb{E}_{y\sim\pi_\theta(\cdot\mid x)}[r(x,y)]
=
\mathbb{E}_{y\sim\pi_\theta(\cdot\mid x)}
\Big[r(x,y)\sum_{t=1}^{L(y)}\nabla_\theta\log\pi_\theta(y_t\mid s_t)\Big].
\label{eq:app:reinforce}
\end{equation}
A baseline $b(x)$ can be subtracted from $r(x,y)$ to reduce variance without bias.

\paragraph{Teacher anchor gradient (analytic full-vocabulary form).}
For a fixed state $s$, define
$g(\theta;s)=\KL(\pi_\theta(\cdot\mid s)\|\pi_T(\cdot\mid s))$.
Since $\pi_T$ is fixed, $g(\theta;s)$ is a differentiable function of the student logits.
Expanding the KL over the vocabulary $\mathcal{V}$,
\begin{equation}
g(\theta;s)
=
\sum_{a\in\mathcal{V}}
\pi_\theta(a\mid s)\Big(\log\pi_\theta(a\mid s)-\log\pi_T(a\mid s)\Big).
\label{eq:app:kl_vocab}
\end{equation}
Therefore $\nabla_\theta g(\theta;s)$ can be computed by standard automatic differentiation
through the student softmax probabilities (and does \emph{not} require backpropagating through sampling).
The full token-mean regularizer gradient is
\begin{equation}
\nabla_\theta\mathcal{L}_{\mathrm{revKL}}(x,y)
=
\frac{1}{L(y)}\sum_{t=1}^{L(y)}\nabla_\theta g(\theta;s_t).
\label{eq:app:grad_token_mean}
\end{equation}

\paragraph{Note on the omitted score-function term.}
Because the KL penalty $\mathcal{L}_{\mathrm{revKL}}(x,y)$ appears \emph{inside} the expectation over $y\sim\pi_\theta$ in Eq.~\eqref{eq:app:main_obj_beta}, the exact policy gradient of that term includes an additional score-function (REINFORCE) contribution:
$-\beta\,\mathbb{E}_{y\sim\pi_\theta}\!\big[\mathcal{L}_{\mathrm{revKL}}(x,y)\sum_{t}\nabla_\theta\log\pi_\theta(y_t\mid s_t)\big]$.
Following standard practice in RL fine-tuning of LLMs~\citep{DBLP:conf/iclr/AgarwalVZSGGB24}, we intentionally omit this high-variance term and apply the KL penalty exclusively through its direct gradient as an auxiliary loss (cf.\ Eq.~\eqref{eq:app:grad_token_mean}).
This yields a \emph{surrogate} gradient estimator rather than the exact gradient of Eq.~\eqref{eq:app:main_obj_beta}; the resulting update is equivalent to optimizing a modified objective in which the KL penalty is treated as an auxiliary loss rather than a term inside the policy expectation.
This approximation is widely adopted (e.g., in PPO-style KL penalties and GKD) and empirically effective, as verified by our training dynamics (\Cref{fig:Task reward,fig:Reverse KL loss}).

\paragraph{Implementation note (sampled-token approximation).}
Our implementation does not use the analytic full-vocabulary form in Eq.~\eqref{eq:app:kl_vocab} but a
sampled-token approximation, in the consideration of efficiency.
Concretely, following the low-variance $k_3$ KL estimator of \citet{schulman2020kl} (as implemented in \texttt{veRL}~\citep{DBLP:journals/corr/abs-2409-19256}),
the per-token KL at state $s_t$ is estimated only at the sampled token $y_t$:
\begin{equation}
\hat g(\theta;s_t)=\exp(\Delta_t)-\Delta_t-1,
\qquad
\Delta_t=\log\pi_T(y_t\mid s_t)-\log\pi_\theta(y_t\mid s_t),
\label{eq:app:kl_lowvar}
\end{equation}
which is non-negative and satisfies
$\mathbb{E}_{y_t\sim\pi_\theta(\cdot\mid s_t)}[\hat g(\theta;s_t)]=g(\theta;s_t)$,
i.e., it is an unbiased estimate of the per-token KL value.
Gradients are taken directly through $\log\pi_\theta(y_t\mid s_t)$, consistent with the direct-gradient
treatment above (the score-function correction is again omitted, as discussed in the preceding paragraph).

\paragraph{Optional alternative (score-function estimator; not required here).}
If one instead chooses to \emph{estimate} KL terms by sampling fresh actions $a\sim\pi_\theta(\cdot\mid s)$,
a score-function form exists (cf.\ standard KL-gradient identities); we mention it only for completeness.

\subsection{Stability: Bounding Deviation from the Teacher via KL}
\label{app:stability}

We formalize a simple implication of the KL anchor: small KL limits distributional drift.

\begin{lemma}[Pinsker-style bound on bounded payoffs]
\label{lem:pinsker}
Let $f(y)\in[0,1]$ be any bounded function. Then for any two distributions $P,Q$ over $y$,
\begin{equation}
\big|\mathbb{E}_{y\sim P}[f(y)]-\mathbb{E}_{y\sim Q}[f(y)]\big|
\le
\mathrm{TV}(P,Q)
\le
\sqrt{\tfrac{1}{2}\KL(P\|Q)}.
\label{eq:app:pinsker}
\end{equation}
\end{lemma}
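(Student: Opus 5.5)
The statement splits into two inequalities, and I would prove them separately: first the payoff gap is at most the total variation distance, then total variation is bounded via Pinsker's inequality. Throughout I take $\mathrm{TV}(P,Q)=\sup_{A}|P(A)-Q(A)|=\tfrac12\sum_y|P(y)-Q(y)|$ (sums become integrals for general measures). If $\KL(P\|Q)=\infty$ the second inequality is vacuous, so I assume $P\ll Q$.

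For the first inequality, I subtract the constant $\tfrac12$ from $f$. Since $\sum_y (P(y)-Q(y))=0$, this leaves the difference of expectations unchanged:
\begin{equation*}
\mathbb{E}_P[f]-\mathbb{E}_Q[f]=\sum_y \big(P(y)-Q(y)\big)\big(f(y)-\tfrac12\big).
\end{equation*}
Because $|f-\tfrac12|\le\tfrac12$, the absolute value is at most $\tfrac12\sum_y|P(y)-Q(y)|=\mathrm{TV}(P,Q)$. An equivalent route uses the set $A=\{P>Q\}$ and bounds the gap by $P(A)-Q(A)$. Either way this step is routine.

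The second inequality is Pinsker's inequality, which I would cite as standard. For a self-contained argument I would go in two stages. First, reduce to two-point distributions using the data-processing inequality for KL, applied to the indicator of $A=\{P>Q\}$. Writing $p=P(A)$ and $q=Q(A)$, this gives $\mathrm{TV}(P,Q)=p-q$ and $\KL(P\|Q)\ge \KL(\mathrm{Ber}(p)\|\mathrm{Ber}(q))$. Second, prove the binary case $\KL(\mathrm{Ber}(p)\|\mathrm{Ber}(q))\ge 2(p-q)^2$. Fix $p$ and set $h(q)=p\log\frac pq+(1-p)\log\frac{1-p}{1-q}-2(p-q)^2$. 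Then $h(p)=0$ and
\begin{equation*}
h'(q)=(q-p)\Big(\tfrac{1}{q(1-q)}-4\Big),
\end{equation*}
where the second factor is nonnegative because $q(1-q)\le\tfrac14$. So $h$ decreases for $q<p$, increases for $q>p$, and is minimized at $q=p$, giving $h\ge0$. Rearranging yields $\mathrm{TV}\le\sqrt{\tfrac12\KL}$.

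The main obstacle is this binary Pinsker step: making the reduction to two points rigorous and getting the derivative sign right. Everything else is bookkeeping.

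In the paper's application, $P$ and $Q$ would be the sequence-level distributions $\pi_\theta(\cdot\mid x)$ and $\pi_T(\cdot\mid x)$, with the KL given by the chain rule in Lemma~\ref{lem:kl_chain_rule}. If the reward $r$ takes values in a range $[a,b]$ rather than $[0,1]$, I would rescale it, which multiplies the bound by $b-a$, matching the footnote's remark that the bounds hold up to the reward range.
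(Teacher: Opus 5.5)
Your proof is correct and takes the same route as the paper, which simply cites the variational characterization $\sup_{0\le f\le 1}|\mathbb{E}_P f-\mathbb{E}_Q f|=\mathrm{TV}(P,Q)$ for the first inequality and Pinsker's inequality for the second. Your centering argument and your reduction to the Bernoulli case via data processing (with the correct derivative $h'(q)=(q-p)\big(\tfrac{1}{q(1-q)}-4\big)$) only make those two cited facts self-contained.
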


\begin{proof}
The first inequality is standard:
$\sup_{0\le f\le 1}|\mathbb{E}_P f-\mathbb{E}_Q f|=\mathrm{TV}(P,Q)$.
The second follows from Pinsker's inequality.
\end{proof}

\begin{corollary}[Teacher-anchor limits reward deviation (sequence-level)]
Fix $x$ and assume $r(x,y)\in[0,1]$. Then
\begin{equation}
\big|\mathbb{E}_{y\sim\pi_\theta(\cdot\mid x)}[r(x,y)]
-\mathbb{E}_{y\sim\pi_T(\cdot\mid x)}[r(x,y)]\big|
\le
\sqrt{\tfrac{1}{2}\KL(\pi_\theta(\cdot\mid x)\ \|\ \pi_T(\cdot\mid x))}.
\label{eq:app:reward_gap_bound}
\end{equation}
\end{corollary}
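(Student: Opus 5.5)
The corollary follows by applying Lemma~\ref{lem:pinsker} directly at the sequence level. Fix $x$ and take the two distributions over complete output sequences to be $P=\pi_\theta(\cdot\mid x)$ and $Q=\pi_T(\cdot\mid x)$. Here $\pi_T(\cdot\mid x)$ is the sequence-level teacher distribution of Eq.~\eqref{eq:app:teacher_seq_prior}. Set $f(y)=r(x,y)$; since $x$ is fixed, $f$ is a function of $y$ alone. By the assumption $r(x,y)\in[0,1]$, this $f$ is an admissible bounded payoff for the lemma. The first inequality of Eq.~\eqref{eq:app:pinsker} gives
\[
\big|\mathbb{E}_{P}[f]-\mathbb{E}_{Q}[f]\big|\le\mathrm{TV}(P,Q),
\]
and Pinsker's inequality then gives $\mathrm{TV}(P,Q)\le\sqrt{\tfrac12\KL(P\|Q)}$, which is exactly Eq.~\eqref{eq:app:reward_gap_bound}.

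Two points need checking to make this rigorous.

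\textbf{The teacher object is a genuine probability distribution.} $\pi_T(\cdot\mid x)$ must be a probability distribution over the same sequence space as $\pi_\theta(\cdot\mid x)$. I would argue this through the autoregressive factorization, with EOS treated as a terminal token in the shared vocabulary, as in the proof of Lemma~\ref{lem:kl_chain_rule}. Under that convention the product of teacher conditionals along each prefix defines a measure on terminated sequences that sums to one. Strictly this requires the teacher to terminate almost surely, or equivalently a finite maximum generation length, which holds in practice. If it did not, one could restrict to the truncated sequence space, where the lemma applies verbatim.

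\textbf{Degenerate KL.} If $\KL(\pi_\theta\|\pi_T)=\infty$, the bound holds trivially, so no absolute-continuity assumption is needed.

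Since the core is a direct instantiation of Lemma~\ref{lem:pinsker}, I expect the only real obstacle is interpretive: connecting this bound to the quantity the method actually controls. The right-hand side is a \emph{sequence-level} KL. By Lemma~\ref{lem:kl_chain_rule}, this equals the expected \emph{sum} of per-token KLs, not the token-mean $\mathcal{L}_{\mathrm{revKL}}$ of Eq.~\eqref{eq:app:token_mean_revkl}. I would therefore add a remark that, when generation length is bounded by $L_{\max}$, the sum is at most $L_{\max}$ times the token-mean pathwise. This gives
\[
\KL(\pi_\theta(\cdot\mid x)\|\pi_T(\cdot\mid x))\le L_{\max}\,\mathbb{E}_{y\sim\pi_\theta}\big[\mathcal{L}_{\mathrm{revKL}}(x,y)\big],
\]
so the corollary converts into a bound in terms of the trained regularizer at the cost of a $\sqrt{L_{\max}}$ factor.

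Finally, for rewards in a general bounded interval $[r_{\min},r_{\max}]$, which the footnote on length penalties requires, I would rescale $f=(r-r_{\min})/(r_{\max}-r_{\min})$. Applying the same argument to this $f$ multiplies the bound by $r_{\max}-r_{\min}$.
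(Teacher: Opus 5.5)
Your proof is correct and follows the paper's route exactly. The corollary is Lemma~\ref{lem:pinsker} instantiated with $P=\pi_\theta(\cdot\mid x)$, $Q=\pi_T(\cdot\mid x)$ and $f=r(x,\cdot)$, which the paper treats as immediate without writing out a separate proof. Your side remarks match or slightly sharpen the paper's surrounding discussion: normalization of the sequence-level teacher distribution under the EOS convention, the trivial infinite-KL case, the $L_{\max}$ conversion to the token-mean regularizer, and rescaling to a general reward range.
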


\paragraph{Remark (general bounded reward).}
The unit-interval assumption is only a normalization. For any reward bounded in $[r_{\min},r_{\max}]$, applying Lemma~\ref{lem:pinsker} to $f=(r-r_{\min})/(r_{\max}-r_{\min})$ gives the same bound scaled by the range:
$\big|\mathbb{E}_{\pi_\theta}[r]-\mathbb{E}_{\pi_T}[r]\big|\le (r_{\max}-r_{\min})\sqrt{\tfrac12\KL(\pi_\theta\|\pi_T)}$.
This is the relevant form for our deployed reward, whose weighting ($\lambda_{\text{lang}}{=}0.5,\lambda_{\text{3g}}{=}\lambda_{\text{judge}}{=}1$) and length term (Appendix~\ref{app:reward_weights}) put $r$ in a bounded range with $r_{\max}\approx 3.5$ rather than $1$; the qualitative conclusion (small teacher KL bounds reward drift) is unchanged.

\paragraph{Relating token-mean KL to sequence-level KL (when length varies).}
By Lemma~\ref{lem:kl_chain_rule}, the sequence-level KL corresponds to an expected \emph{sum} of per-token KL terms:
\[
\KL(\pi_\theta(\cdot\mid x)\|\pi_T(\cdot\mid x))
=
\mathbb{E}_{y\sim\pi_\theta(\cdot\mid x)}\Big[\sum_{t=1}^{L(y)} \KL_t\Big]
=
\mathbb{E}_{y\sim\pi_\theta(\cdot\mid x)}\Big[L(y)\cdot \mathcal{L}_{\mathrm{revKL}}(x,y)\Big],
\]
where $\KL_t=\KL(\pi_\theta(\cdot\mid s_t)\|\pi_T(\cdot\mid s_t))$.
If generation uses a maximum length $L_{\max}$ (as in standard decoding),
then
\begin{equation}
\KL(\pi_\theta(\cdot\mid x)\|\pi_T(\cdot\mid x))
\le
L_{\max}\cdot
\mathbb{E}_{y\sim\pi_\theta(\cdot\mid x)}\big[\mathcal{L}_{\mathrm{revKL}}(x,y)\big].
\label{eq:app:seqkl_bound_by_mean}
\end{equation}
Thus controlling the expected token-mean KL still provides a meaningful stability control under bounded-length decoding.

\subsection{Why Token-Mean Normalization Helps (Length Bias)}
\label{app:length_norm}

A practical issue in sequence optimization is \emph{length bias}:
a token-sum penalty scales with $L$ and can undesirably favor shorter outputs.
We show a simple relationship between sum- and mean-normalized forms at the trajectory level.

\begin{lemma}[Sum vs.\ mean reverse KL (trajectory level)]
Let
$\mathcal{L}_{\mathrm{sum}}(x,y)=\sum_{t=1}^{L(y)}\KL(\pi_\theta(\cdot\mid s_t)\|\pi_T(\cdot\mid s_t))$
and $\mathcal{L}_{\mathrm{mean}}(x,y)$ be Eq.~\eqref{eq:app:token_mean_revkl}.
Then
\begin{equation}
\mathcal{L}_{\mathrm{sum}}(x,y)=L(y)\cdot \mathcal{L}_{\mathrm{mean}}(x,y).
\label{eq:app:sum_mean_relation}
\end{equation}
\end{lemma}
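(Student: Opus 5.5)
This identity is purely definitional, so I will prove it by unfolding the definition of $\mathcal{L}_{\mathrm{mean}}$ in Eq.~\eqref{eq:app:token_mean_revkl} and multiplying through by $L(y)$. No earlier lemma is needed. In particular, Lemma~\ref{lem:kl_chain_rule} is not required, because the statement holds for every realized trajectory rather than only in expectation.

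\emph{Step 1: fix a trajectory.} I fix an input $x$ and one realized trajectory $y=(y_1,\ldots,y_{L(y)})$ generated by the student. Its length $L(y)\ge 1$ is finite, since EOS counts as a generated token. The prefix states $s_t=(x,y_{<t})$ are then determined for $t=1,\ldots,L(y)$, and each per-token term $\KL_t=\KL(\pi_\theta(\cdot\mid s_t)\|\pi_T(\cdot\mid s_t))$ is a fixed nonnegative number. It may equal $+\infty$ if the supports disagree.

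\emph{Step 2: write out both sides.} By the definition in the lemma, $\mathcal{L}_{\mathrm{sum}}(x,y)=\sum_{t=1}^{L(y)}\KL_t$. By Eq.~\eqref{eq:app:token_mean_revkl}, $\mathcal{L}_{\mathrm{mean}}(x,y)=\frac{1}{L(y)}\sum_{t=1}^{L(y)}\KL_t$. Because $L(y)$ is a positive finite integer, multiplying the second expression by $L(y)$ returns the first, which is Eq.~\eqref{eq:app:sum_mean_relation}. If some $\KL_t=+\infty$, both sides are $+\infty$, so the identity still holds in the extended reals.

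\emph{Main subtlety.} The only point that needs care is that $L(y)$ is random. The identity holds pointwise for each $y$, so it remains true inside any expectation over $y\sim\pi_\theta$. However, it does \emph{not} imply $\mathbb{E}[\mathcal{L}_{\mathrm{sum}}]=\mathbb{E}[L]\cdot\mathbb{E}[\mathcal{L}_{\mathrm{mean}}]$, since $L(y)$ and $\mathcal{L}_{\mathrm{mean}}(x,y)$ are generally correlated; this is the same caution the paper raises earlier about realizations versus expectations. Taking expectations of the pointwise identity and using $L(y)\le L_{\max}$ recovers Eq.~\eqref{eq:app:seqkl_bound_by_mean}, which the lemma is used to support.
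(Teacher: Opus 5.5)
Your proof is correct and takes the same approach as the paper, which simply says the identity is immediate from the definitions: unfolding Eq.~\eqref{eq:app:token_mean_revkl} and multiplying by the positive finite integer $L(y)$ is exactly that argument. One small note on your closing remark: passing from the pointwise identity to Eq.~\eqref{eq:app:seqkl_bound_by_mean} also needs Lemma~\ref{lem:kl_chain_rule}, to identify the sequence-level KL with $\mathbb{E}[\mathcal{L}_{\mathrm{sum}}]$, and the nonnegativity of $\mathcal{L}_{\mathrm{mean}}$; this is how the paper derives that bound in its stability section.
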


\begin{proof}
Immediate from definitions.
\end{proof}

\paragraph{Implication.}
With a fixed $\beta$, a sum penalty yields an effective regularization strength $\beta L(y)$,
which couples optimization to output length. The mean normalization decouples the anchor from length
and yields more stable tuning of $\beta$ across datasets/backbones and decoding settings.

\subsection{Optional: Entropy-Regularized RL View (Identity + Estimator)}
\label{app:lower_bound}

Using Lemma~\ref{lem:revkl_identity} inside Eq.~\eqref{eq:app:token_mean_revkl},
the teacher anchor can be expressed as a sum of two \emph{per-state} terms:
\begin{equation}
-\beta\,\mathcal{L}_{\mathrm{revKL}}(x,y)
=
\frac{\beta}{L(y)}\sum_{t=1}^{L(y)}
\Big(
\mathbb{E}_{a\sim\pi_\theta(\cdot\mid s_t)}[\log \pi_T(a\mid s_t)]
+\mathcal{H}(\pi_\theta(\cdot\mid s_t))
\Big).
\label{eq:app:anchor_entropy_form}
\end{equation}
This highlights an entropy-regularized RL interpretation plus an additional teacher log-likelihood shaping term.

\paragraph{Implementation note (realized-token estimator).}
In practice, one may estimate the expectation
$\mathbb{E}_{a\sim\pi_\theta(\cdot\mid s_t)}[\log \pi_T(a\mid s_t)]$
with the realized token $y_t\sim\pi_\theta(\cdot\mid s_t)$, i.e.,
$\log\pi_T(y_t\mid s_t)$.
This is a Monte Carlo estimator of the \emph{per-state} expectation; however, when $L(y)$ is trajectory-dependent,
we avoid claiming that substituting realized tokens yields an \emph{equivalent} rewritten objective.
We use Eq.~\eqref{eq:app:anchor_entropy_form} as the correct identity and treat realized-token quantities
only as estimators for optimization/intuition.


\section{Pseudocode}
\label{sec:app:pseudocode}
We outline the training procedure of \model\ in Algorithm~\ref{alg:trrag}.

\begin{algorithm}[h]
\algrenewcommand\algorithmicrequire{\textbf{Input:}}
\algrenewcommand\algorithmicensure{\textbf{Output:}}
\small
\caption{\textsc{Training procedure of \model}}
\label{alg:trrag}
\begin{algorithmic}[1]
    \Require Training distribution $\mathcal{B}$ over $(x,a^\star)$ with $x=(q,D)$; student policy $\pi_\theta$; \emph{frozen} teacher $\pi_T$;
    weights $\lambda_{\text{lang}},\lambda_{\text{3g}},\lambda_{\text{judge}}$; teacher weight $\beta$; rollouts per input $K$; minibatch size $b$; advantage-normalization constant $\epsilon{=}10^{-6}$.
    \Ensure Trained student policy parameters $\theta$.
    \State Initialize $\theta$;
    \While{not converged}
        \State sample a minibatch $\{(x_i,a_i^\star)\}_{i=1}^{b} \sim \mathcal{B}$;
        \For{each instance $(x_i,a_i^\star)$ in the minibatch}
            \For{$k=1$ \textbf{to} $K$}
                \State sample on-policy rollout $y_i^{(k)} \sim \pi_\theta(\cdot \mid x_i)$; \RightComment{Eq.~\eqref{eq:rollout_group}}
                \State $r_{\text{lang}} \gets \mathbb{I}[\mathrm{LID}(y_i^{(k)})=\mathrm{LID}(a_i^\star)]$; \RightComment{Eq.~\eqref{eq:reward_lang}}
                \State $r_{\text{3g}} \gets \mathrm{Recall}@3\mathrm{gram}(y_i^{(k)},a_i^\star)$; \RightComment{Eq.~\eqref{eq:reward_3gram}}
                \State $r_{\text{judge}} \gets \mathrm{Judge}(q_i,D_i,a_i^\star,y_i^{(k)})$; \RightComment{Eq.~\eqref{eq:reward_judge}}
                \State $r_i^{(k)} \gets \lambda_{\text{lang}}r_{\text{lang}}+\lambda_{\text{3g}}r_{\text{3g}}+\lambda_{\text{judge}}r_{\text{judge}}$; \RightComment{Eq.~\eqref{eq:reward_total}}
                \State query teacher on student prefixes $\pi_T(\cdot\mid s_{i,t}^{(k)})$ for $t=1,\ldots,L_i^{(k)}$; \RightComment{no teacher rollouts}
                \State $\mathcal{L}_{\text{revKL},i}^{(k)} \gets \frac{1}{L_i^{(k)}}\sum_{t=1}^{L_i^{(k)}}\mathrm{KL}\!\big(\pi_\theta(\cdot\mid s_{i,t}^{(k)})\ \|\ \pi_T(\cdot\mid s_{i,t}^{(k)})\big)$; \RightComment{Eq.~\eqref{eq:token_revkl}}
            \EndFor
            \State $\bar r_i \gets \frac{1}{K}\sum_{k=1}^{K} r_i^{(k)}$, $\sigma_i \gets \sqrt{\frac{1}{K}\sum_{k=1}^{K}(r_i^{(k)}-\bar r_i)^2}$;
            \State $A_i^{(k)} \gets \frac{r_i^{(k)}-\bar r_i}{\sigma_i+\epsilon}$ for $k=1,\ldots,K$;
        \EndFor
        \State update $\theta$ by optimizing the GRPO objective with advantages $\{A_i^{(k)}\}$ and adding the teacher-anchor loss
        $\beta \cdot \frac{1}{bK}\sum_{i,k}\mathcal{L}_{\text{revKL},i}^{(k)}$ (direct gradients);
    \EndWhile
    \State \Return $\theta$.
\end{algorithmic}
\end{algorithm}

\section{Experiments}
\label{sec:app:exp}

\subsection{\mbox{ENKB-RAG-5}: Multilingual Variants of rag-mini-bioasq and HotpotQA}
\label{sec:app:exp:datasets}

\paragraph{Overview and motivation.}
We release \textbf{BioASQ-ENKB5} and \textbf{Hotpot-ENKB5}, multilingual variants of rag-mini-bioasq and HotpotQA
constructed for the \emph{English-evidence} cross-lingual RAG regime studied in this work: the user query and required answer are in a target language, while retrieved evidence passages remain English.
For each example, we keep evidence passages in English and translate the question $q$ and reference answer $a^\star$
into five target languages: Indonesian (\textit{id}), Korean (\textit{ko}), Thai (\textit{th}), Vietnamese (\textit{vi}), and Japanese (\textit{ja}).
We treat \textit{id/ko/th/vi} as in-domain (ID) training languages and evaluate generalization on held-out Japanese (\textit{ja}) as well as the original English (\textit{en}).%
\footnote{We will release the data, preprocessing scripts, and documentation upon publication.}

We adopt translation-based construction as a pragmatic compromise between \emph{task difficulty} and \emph{setting fidelity}.
While several existing cross-lingual/multilingual QA benchmarks are valuable, many are primarily designed for extractive or short factual QA and can be less diagnostic for the \emph{generation-side} failures we target (language drift and brittle evidence use under injected English passages) with modern instruction-tuned LLMs.
In contrast, rag-mini-bioasq and HotpotQA provide more challenging open-domain questions (domain-specific terminology and multi-hop reasoning), where we observe substantial headroom and clear failure modes in English-evidence deployments.

\paragraph{Source data and splits.}
\textbf{BioASQ-ENKB5} is derived from rag-mini-bioasq \citep{tsatsaronis2015overview}, containing 4.7k single-hop questions
and 40k biomedical passages. We use the same 80/20 random split over questions (train/test) as in the main paper.
\textbf{Hotpot-ENKB5} is derived from HotpotQA \citep{yang2018hotpotqa}; we sample 4.4k bridge-type questions for training
and 1.1k for testing.
We preserve original example identifiers to enable traceability to the source benchmarks.

\paragraph{Translation pipeline.}
We translate $(q,a^\star)$ using \texttt{Qwen3-235B-A22B-Instruct-2507} \citep{yang2025qwen3}.
To reduce translation artifacts that may affect evidence grounding (e.g., entity/term drift), the prompt enforces:
(i) faithful meaning preservation,
(ii) no added or dropped facts,
(iii) \emph{verbatim preservation} of numbers, dates, abbreviations, and salient entities/terms whenever possible,
and (iv) output in the target language only (no explanations).
We decode deterministically with \textbf{temperature=0.0}, \textbf{top\_p=1.0}, and \textbf{max\_tokens=4096}.
This translation is a \emph{one-time, offline} preprocessing step, run once per dataset; it is not part of the \model\ training loop or of inference, and therefore adds no recurring cost to the method itself.
Post-processing normalizes whitespace/punctuation, strips surrounding quotes when present, and removes obvious boilerplate
(non-target-language meta text, if any).
The translation prompts are shown in \Cref{fig:prompt-id,fig:prompt-ja,fig:prompt-ko,fig:prompt-th}; the Vietnamese prompt follows the same template with the target language set to Vietnamese.

\paragraph{Potential artifacts and why they matter.}
Constructing multilingual instances via translation can introduce two classes of distribution shift relative to ``native'' multilingual queries:
\textbf{(a) translationese/style shift}, where phrasing becomes more literal or unnatural; and
\textbf{(b) entity/terminology perturbations}, where rare entities, abbreviations, or biomedical terms may be inconsistently transliterated or normalized.
Both can, in principle, interact with our targeted phenomena:
for example, unnatural phrasing may change the model's tendency to copy English spans, and term drift can weaken lexical anchoring between the (translated) query/answer and the English evidence, affecting both grounding and overlap-based signals.
We therefore treat translation artifacts as a \emph{known limitation} of this construction and explicitly include sanity checks and reporting to mitigate and quantify their impact.

\paragraph{Quality mitigation and known risks.}
To mitigate translation artifacts, we rely primarily on \emph{prompt-level} quality control during translation: the translation prompt explicitly enforces faithful meaning preservation, verbatim retention of numbers/dates/abbreviations/entities, and target-language-only output (see \Cref{fig:prompt-id,fig:prompt-ja,fig:prompt-ko,fig:prompt-th}), combined with deterministic decoding (\textbf{temperature=0.0}).
Post-processing normalizes whitespace/punctuation, strips surrounding quotes, and removes obvious boilerplate.
We additionally performed a manual spot-check on a randomly sampled subset of 10 examples per target language to assess
(i) fluency/naturalness,
(ii) whether entities/terms remain semantically consistent,
and (iii) whether the qualitative failure modes studied in the paper (language drift and evidence misuse under English passages) persist beyond a small number of translation outliers.
However, we did \emph{not} apply systematic automatic post-translation filtering (e.g., LID-based filtering, number-preservation checks, or entity-consistency heuristics) on the final training data beyond the prompt-level and post-processing controls described above.
This means that residual translation errors (such as occasional language mixing, entity transliteration inconsistencies, or dropped numeric details) may remain in the dataset.
We acknowledge this as a limitation and note that applying such automatic filters in future work could further improve data quality.
For reproducibility, we release (i) the translation prompts and decoding parameters and (ii) the post-processing scripts.

\paragraph{Discussion: relation to native multilingual-query benchmarks.}
An important complementary evaluation is to test on benchmarks where the \emph{original} questions are authored in non-English languages while evidence remains English.
Existing datasets in this direction (e.g., XOR-TyDi QA \citep{asai2021xor}  and related cross-lingual QA resources\citep{DBLP:journals/tacl/LongpreLD21} ) are valuable but often differ in task structure and difficulty (e.g., emphasis on retrieval or short-answer extraction), which may make them less diagnostic for the open-domain \emph{generation-side} failures we target with strong instruction-tuned models.
We therefore focus our main experiments on challenging open-domain QA where we observe substantial headroom, while treating native multilingual-query benchmarks as a complementary axis for future and expanded evaluation.
We also make our construction pipeline modular so that the same training recipe can be applied to other English-evidence cross-lingual datasets as they become available or as difficulty-matched suites emerge.

\paragraph{Limitations.}
Because no systematic automatic post-translation filtering was applied, \mbox{ENKB-RAG-5} may exhibit translationese, residual terminology inconsistencies, and occasional translation errors.
Accordingly, we interpret results as evidence about \emph{the English-evidence cross-lingual generation problem under a controlled construction}, and we encourage follow-up evaluations on native multilingual-query settings when suitable difficulty-matched benchmarks are available.

\begin{figure}[t]
   \centering
   \fbox{\includegraphics[width=0.8\linewidth]{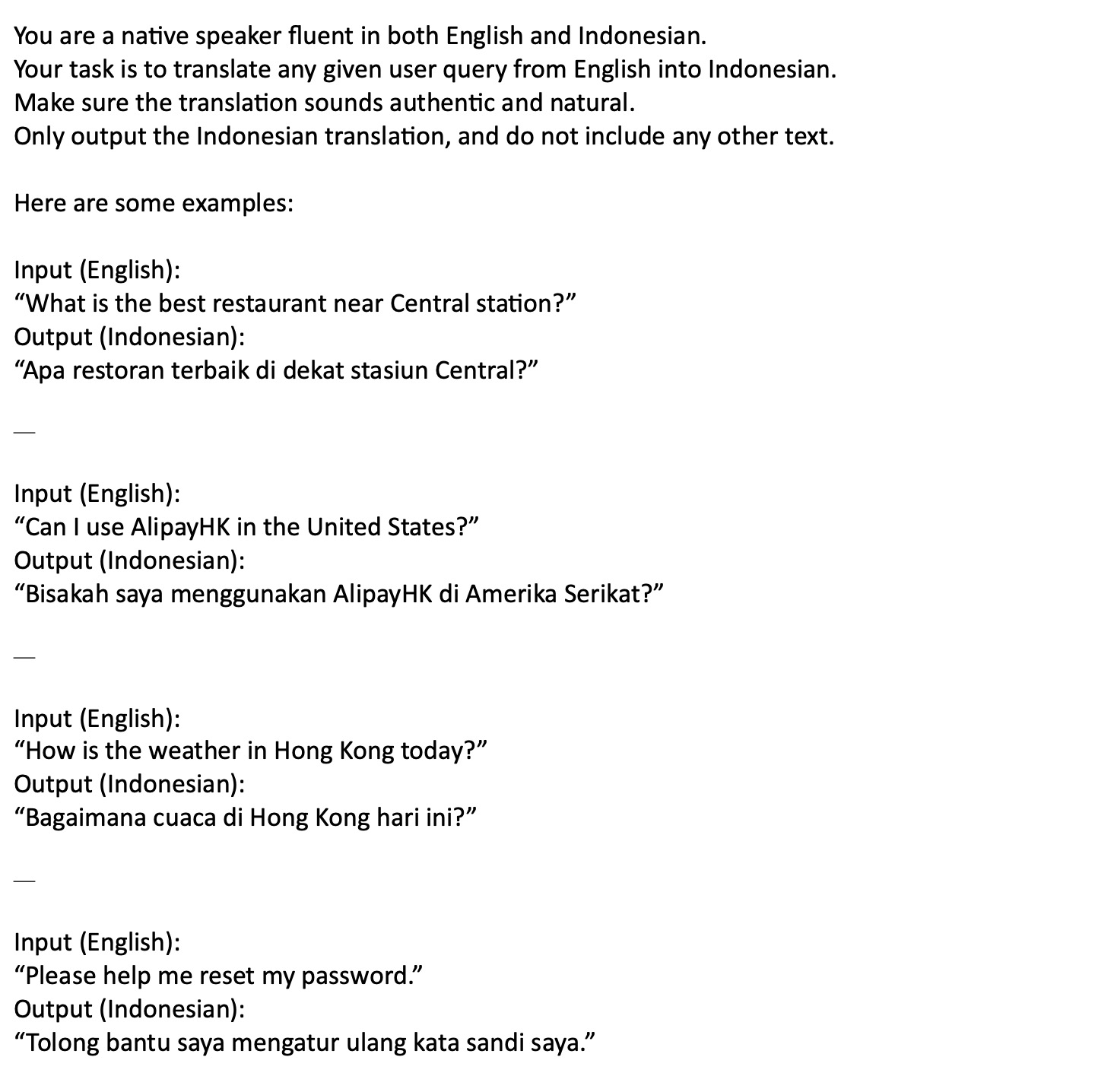}}
   \caption{\textbf{Indonesian translation prompt.} Prompt template used to translate English queries into Indonesian (\textit{id}) when constructing \textbf{BioASQ-ENKB5} and \textbf{Hotpot-ENKB5}.}
   \vspace{-2mm}
   \label{fig:prompt-id}
\end{figure}

\begin{figure}[t]
   \centering
   \fbox{\includegraphics[width=0.8\linewidth]{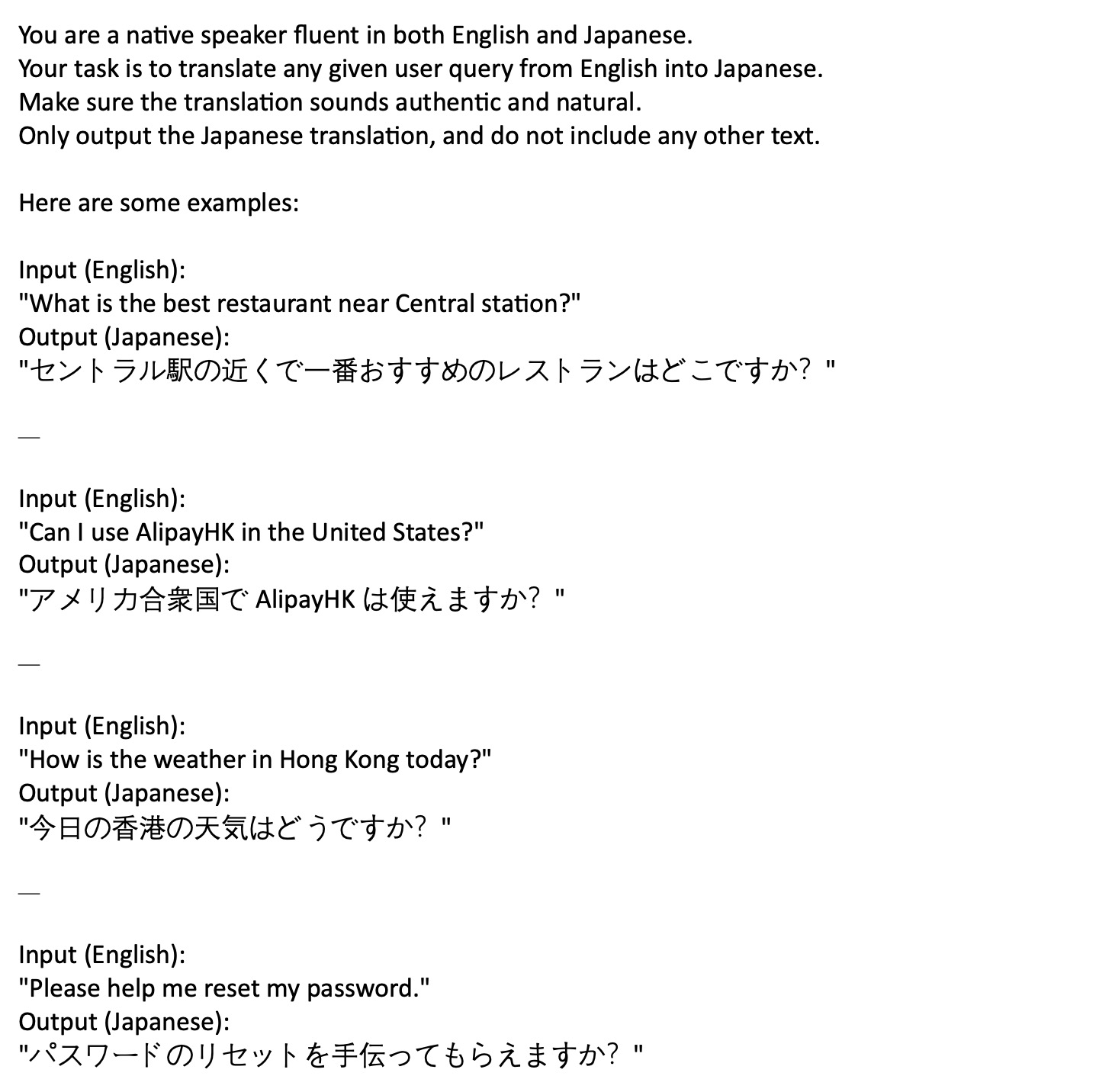}}
   \caption{\textbf{Japanese translation prompt.} Prompt template used to translate English queries into Japanese (\textit{ja}) when constructing \textbf{BioASQ-ENKB5} and \textbf{Hotpot-ENKB5}.}
   \vspace{-2mm}
   \label{fig:prompt-ja}
\end{figure}

\begin{figure}[t]
   \centering
   \fbox{\includegraphics[width=0.8\linewidth]{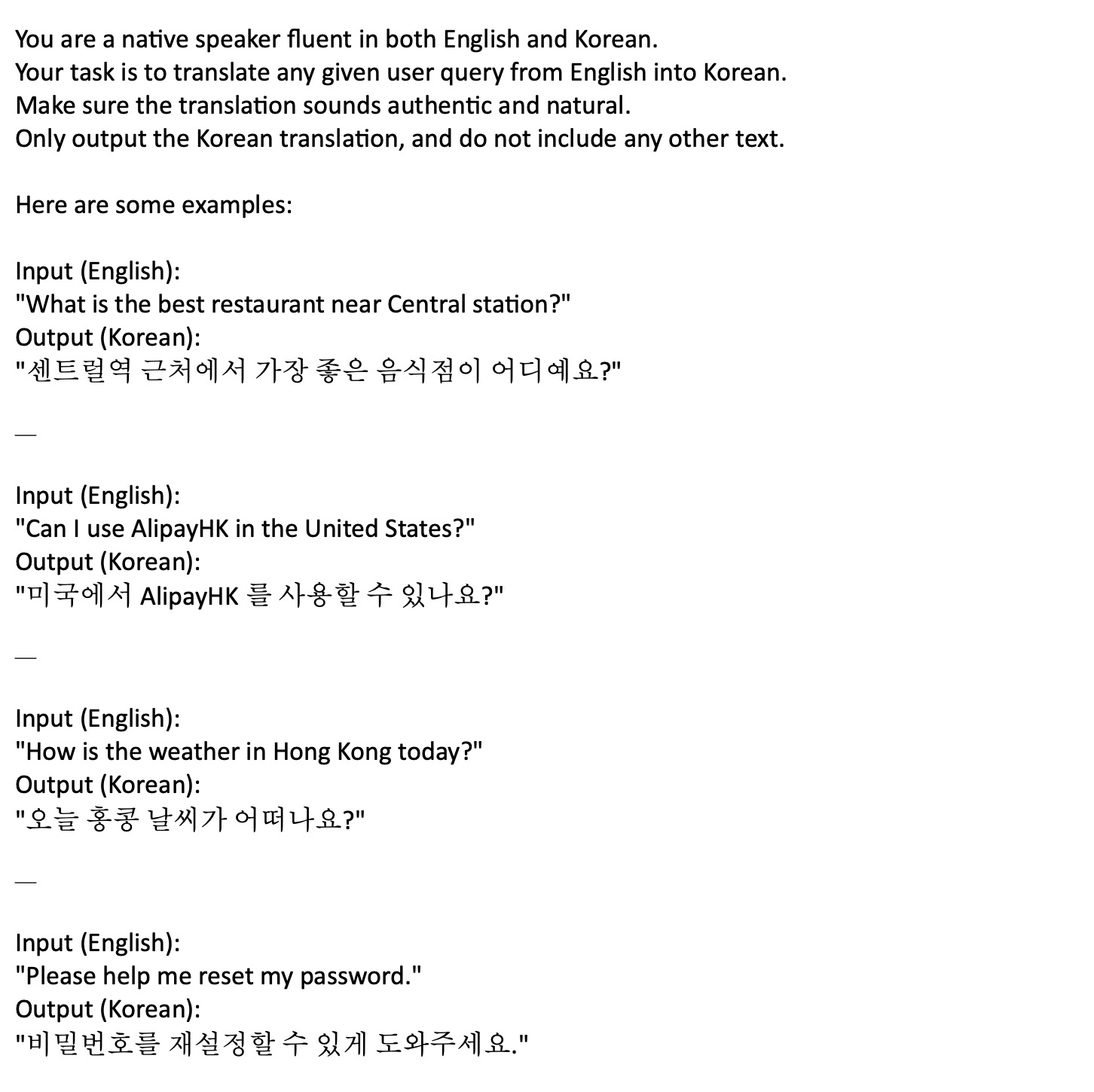}}
   \caption{\textbf{Korean translation prompt.} Prompt template used to translate English queries into Korean (\textit{ko}) when constructing \textbf{BioASQ-ENKB5} and \textbf{Hotpot-ENKB5}.}
   \vspace{-2mm}
   \label{fig:prompt-ko}
\end{figure}


\begin{figure}[t]
   \centering
   \fbox{\includegraphics[width=0.8\linewidth]{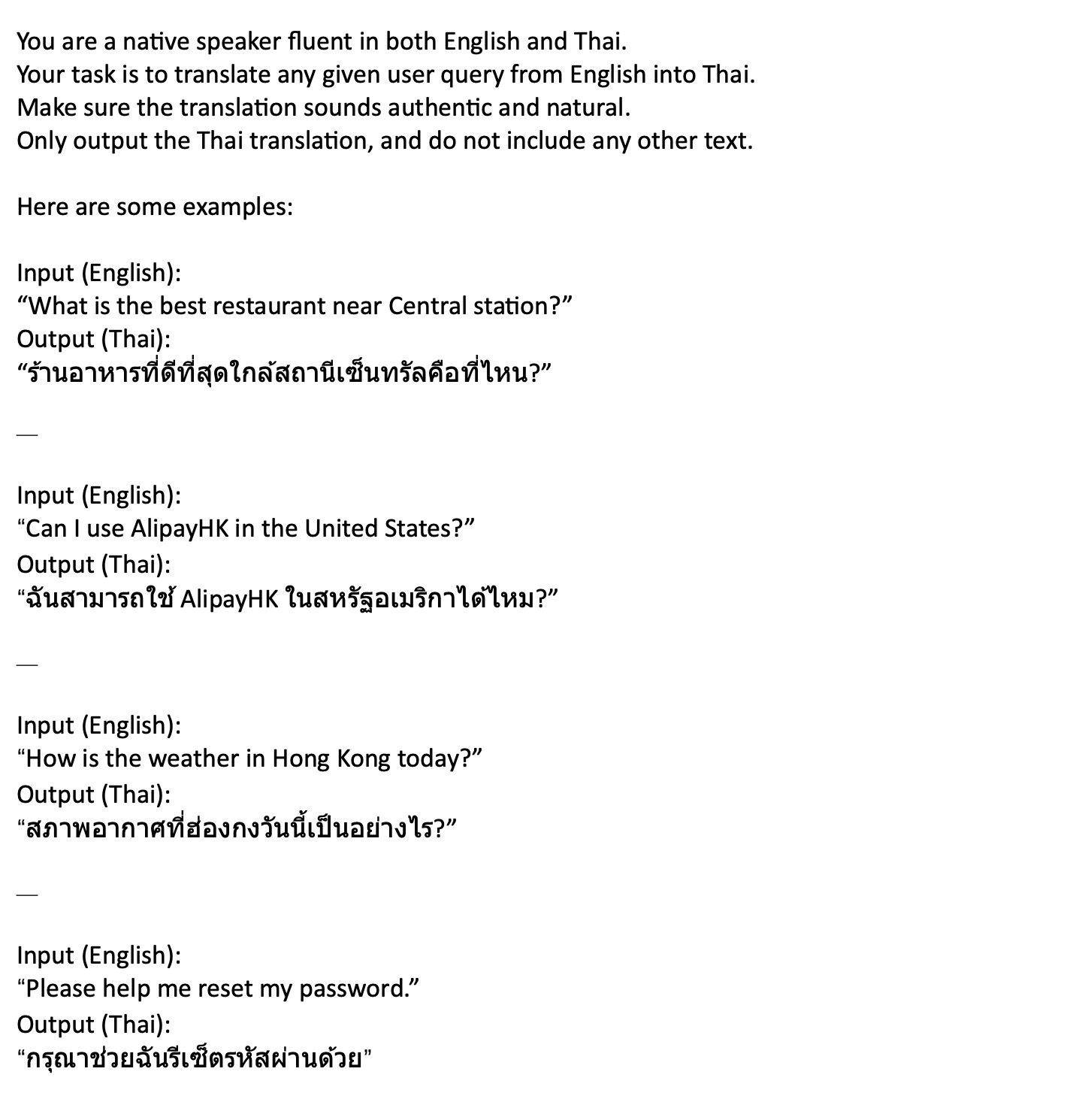}}
   \caption{\textbf{Thai translation prompt.} Prompt template used to translate English queries into Thai (\textit{th}) when constructing \textbf{BioASQ-ENKB5} and \textbf{Hotpot-ENKB5}.}
   \vspace{-2mm}
   \label{fig:prompt-th}
\end{figure}

\begin{figure}[t]
   \centering
   \fbox{\includegraphics[width=0.8\linewidth]{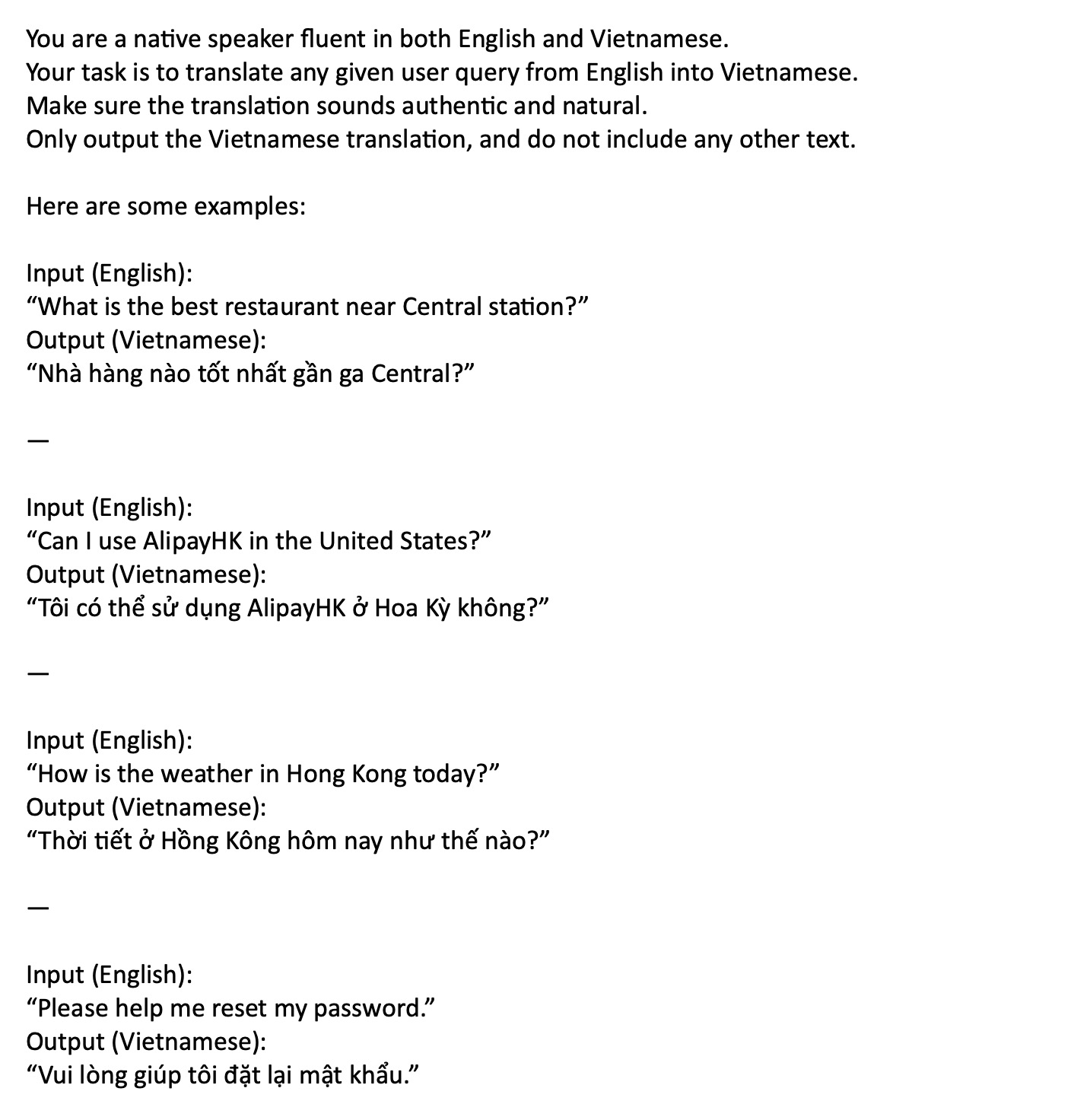}}
   \caption{\textbf{Vietnamese translation prompt.} Prompt template used to translate English queries into Vietnamese (\textit{vi}) when constructing \textbf{BioASQ-ENKB5} and \textbf{Hotpot-ENKB5}.}
   \vspace{-2mm}
   \label{fig:prompt-vi}
\end{figure}


\subsection{Metrics}
\label{sec:app:exp:metrics}

To align with our reward decomposition (Eq.~\eqref{eq:reward_total}), we evaluate three complementary criteria for each generated answer $y$ given $(q,D)$, all mapped to $[0,1]$.
\textbf{Language consistency} measures target-language adherence under English evidence: we apply the fastText \texttt{lid.176} language identification model \citep{joulin2017bag, grave2018learning} and use a binary indicator $r_{\text{lang}}(y)=\mathbb{I}[\mathrm{LID}(y)=\mathrm{LID}(a^\star)]\in\{0,1\}$, matching the training reward (Eq.~\eqref{eq:reward_lang}).
\textbf{Character 3-gram recall} follows Eq.~\eqref{eq:reward_3gram} and quantifies lightweight lexical faithfulness to the reference answer $a^\star$ via character n-gram overlap; this choice is motivated by character n-gram MT metrics such as chrF \citep{popovic-2015-chrf}, which are less sensitive to tokenization and morphology.
\textbf{LLM-judge} evaluates answer quality with access to the question $q$, retrieved context $D$, reference answer $a^\star$, and student answer $y$; it outputs a graded score $r_{\text{judge}}(q,D,a^\star,y)\in[0,1]$ according to a rubric emphasizing (i) faithfulness to $D$ (hallucination checking), (ii) correctness/alignment with $a^\star$ (including entity-level accuracy for BioASQ and reasoning completeness for HotpotQA), and (iii) conciseness.
Our primary judge is \texttt{Qwen3-Next-80B-A3B-Instruct} \citep{yang2025qwen3, qiu2025gated} with deterministic decoding (temperature $=0$); we parse the scalar score from the XML \texttt{<score>} field, following prior work on using strong LLMs as evaluators \citep{DBLP:conf/nips/ZhengC00WZL0LXZ23}. To rule out within-family reward hacking (the training reward also uses a Qwen-family model), we additionally report a \emph{cross-family} judge, \texttt{Llama-3.3-70B-Instruct} \citep{dubey2024llama}, evaluated under the same rubric and prompt (see \Cref{sec:app:exp:metrics:cross_family_judge} for detailed comparison). The LLM-judge prompt is shown in Figure~\ref{fig:judge-prompt}.
Finally, since all metrics lie in $[0,1]$, we define a single summary metric.
For the \mbox{ENKB-RAG-5} benchmarks (which use a single Qwen judge), $\text{Composite}=\frac{1}{3}(r_{\text{lang}}+r_{\text{3g}}+r_{\text{judge}})$.
For the MKQA benchmark, where we additionally report a cross-family Llama judge to rule out within-family reward hacking (\Cref{sec:app:exp:metrics:cross_family_judge}), we include both judges in the summary: $\text{Composite}_{\text{MKQA}}=\frac{1}{4}(r_{\text{lang}}+r_{\text{3g}}+r_{\text{judge}}^{\text{Qwen}}+r_{\text{judge}}^{\text{Llama}})$.
We report the composite alongside the individual metrics to avoid masking trade-offs.

\begin{figure}[!t]
   \centering
   \fbox{\includegraphics[width=0.95\linewidth]{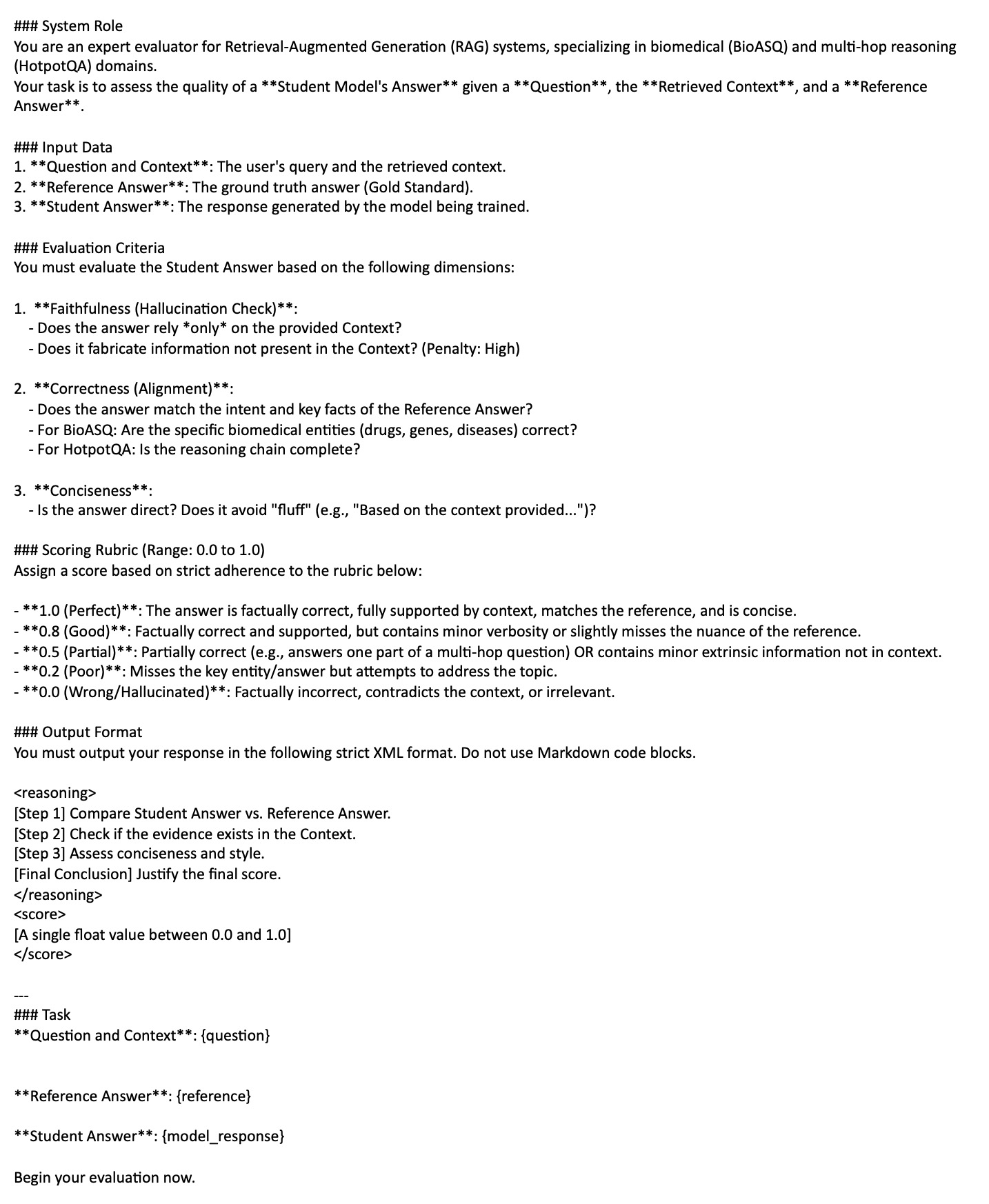}}
   \caption{\textbf{LLM-judge prompt.} We use a rubric-based evaluator prompt to score student answers for RAG, conditioned on the question $q$, retrieved English context $D$, reference answer $a^\star$ (training only), and student output $y$. The prompt instructs the judge to assess (i) \emph{faithfulness} (hallucination checking against $D$), (ii) \emph{correctness} (alignment with $a^\star$, with entity focus for BioASQ and multi-hop completeness for HotpotQA), and (iii) \emph{conciseness}. The judge returns a scalar score in $[0,1]$ following a strict rubric (e.g., 1.0/0.8/0.5/0.2/0.0) and outputs it in a constrained XML format for reliable parsing.}
   \vspace{-2mm}
   \label{fig:judge-prompt}
\end{figure}

\subsubsection{On the inadequacy of exact match for free-form cross-lingual RAG.}
\label{sec:app:exp:metrics:em}

A natural question is why we do not report Exact Match (EM) against the gold answer. The reason is that our setup is open-domain, free-form generation: instruction-tuned student models almost always wrap the answer entity in a full sentence, so a single string match against a short gold answer is uninformative. \Cref{tab:em-zeroshot} reports zero-shot EM (\%) across all three datasets (\texttt{Llama-3.2-3B-Instruct} backbone): every method (Base, Naive-RL, \model) collapses to (near-)zero EM regardless of generation quality.

\begin{table}[h]
\centering
\setlength{\tabcolsep}{4pt}
\caption{Zero-shot Exact Match (\%) across MKQA, HotpotQA, and BioASQ (\texttt{Llama-3.2-3B-Instruct} backbone). EM is uninformative for free-form RAG because all methods generate full sentences rather than extractive spans.}
\label{tab:em-zeroshot}
\begin{tabular}{l|ccc|ccc|ccc}
\toprule
\multirow{2}{*}{Method} & \multicolumn{3}{c|}{MKQA} & \multicolumn{3}{c|}{HotpotQA} & \multicolumn{3}{c}{BioASQ} \\
 & ID & OOD & ALL & ID & OOD & ALL & ID & OOD & ALL \\
\midrule
Base     & 0.33 & 0 & 0.17 & 0 & 0 & 0 & 0 & 0 & 0 \\
Naive-RL & 0    & 0 & 0    & 0 & 0 & 0 & 0 & 0 & 0 \\
\model   & 0    & 0 & 0    & 0 & 0 & 0 & 0 & 0 & 0 \\
\bottomrule
\end{tabular}
\end{table}

This pattern matches prior observations that EM under-estimates open-ended generative QA \citep{Kamalloo2023EvaluatingOQ} and motivates recent multilingual RAG evaluations to adopt character-level overlap and language-consistency metrics instead \citep{DBLP:journals/corr/abs-2407-01463}. We therefore report char 3-gram recall (a chrF-style character n-gram signal), the soft language-ID probability, and an LLM-judge, the same trio used as our reward decomposition.

\subsubsection{LLM-judge reliability and potential overfitting.}
\label{sec:app:exp:metrics:judge_reliability}

We use an LLM-judge (\texttt{Qwen3-Next-80B-A3B-Instruct}) in two roles: (i) as a training reward component $r_{\text{judge}}$ to encourage evidence-grounded correctness, and (ii) as one of the evaluation metrics reported in the tables.
This dual use raises a natural concern that models could overfit to the particular judge and obtain inflated judge scores without corresponding improvements in actual factual correctness.

\paragraph{Mitigations in our training setup.}
First, $r_{\text{judge}}$ is only one component in our reward decomposition; optimization is jointly constrained by the language-adherence and char 3-gram rewards, and, for \model, by the reverse-KL anchor to the \emph{frozen teacher} on student-visited prefixes (Section~\ref{sec:method:opd}). To avoid confusion: this teacher anchor is the \emph{only} KL term we use (we do not add any separate KL-to-initial-policy penalty), and Naive-RL (\model\ with $\beta{=}0$) therefore has no KL term at all, so the ``Naive-RL collapse'' we report is not an artifact of a hidden reference-policy regularizer.
Second, the judge is queried on student-generated outputs but does not provide gradients; the reward is combined with other signals and does not directly train the model to mimic the judge's textual style.
Third, we report multiple complementary metrics beyond the judge score (e.g., correctness/overlap and language adherence), and our main conclusions do not rely on the judge metric alone.

\paragraph{Cross-judge consistency.}
We additionally evaluate the same outputs with an alternative judge model, \texttt{Qwen3-235B-A22B-Instruct-2507} \citep{yang2025qwen3}. \Cref{tab:judge_cross} reports the ALL-AVG judge score on Hotpot-ENKB5 (100 samples/language; \texttt{Llama-3.2-3B-Instruct} backbone).

\begin{table}[h]
\centering
\caption{Two Qwen judges on Hotpot-ENKB5 (\texttt{Llama-3.2-3B-Instruct} backbone, ALL-AVG \%). Identical method ranking; the larger judge assigns a wider gap to \model.}
\label{tab:judge_cross}
\begin{tabular}{l|cc}
\toprule
Method & Qwen3-Next-80B & Qwen-235B \\
\midrule
Base    & 56.51 & 56.41 \\
Naive-RL & \underline{75.88} & \underline{76.50} \\
\rowcolor{gray!15}
\model  & \textbf{76.58} & \textbf{77.77} \\
\bottomrule
\end{tabular}
\end{table}

The two Qwen judges produce identical rankings, and the larger 235B judge assigns a \emph{wider} margin to \model\ (+1.27 vs.\ +0.70), suggesting that our reported improvements are not artifacts of a particular judge.

\subsubsection{Conciseness as a confounder for the 3-gram--judge tension.}
\label{sec:app:exp:metrics:conciseness}

A subtle interaction arises between our character-3-gram reward and the LLM-judge: the 3-gram reward encourages more complete (and therefore longer) responses, while the judge's rubric explicitly rewards \emph{conciseness}. To quantify this confound, we re-evaluate \model\ outputs on MKQA (\texttt{Llama-3.2-3B-Instruct} backbone) with two judge variants: (i) the \emph{original} prompt that includes correctness, faithfulness, and conciseness; and (ii) an \emph{ablated} prompt that drops the conciseness criterion while keeping the rest unchanged.

\begin{table}[h]
\centering
\caption{Conciseness ablation on the LLM judge (MKQA, \model\ outputs, \texttt{Llama-3.2-3B-Instruct} backbone). Removing the conciseness criterion increases the judge score, indicating that the original judge actively penalizes the more complete responses promoted by the 3-gram reward rather than reflecting lower correctness/faithfulness.}
\label{tab:judge_conciseness}
\begin{tabular}{l|cc}
\toprule
Judge variant & Qwen judge & Llama judge \\
\midrule
Original (correctness + faithfulness + conciseness) & 71.50 & 70.45 \\
\rowcolor{gray!15}
Ablated (conciseness removed) & \textbf{73.35} (\textbf{+1.85}) & \textbf{71.80} (\textbf{+1.35}) \\
\bottomrule
\end{tabular}
\end{table}

Removing the conciseness criterion lifts the judge score by $+1.85$ (Qwen) and $+1.35$ (Llama), confirming that the small 3-gram--judge gap is largely a length-conciseness artifact rather than a correctness/faithfulness regression.

\subsubsection{Cross-family judge: full comparison.}
\label{sec:app:exp:metrics:cross_family_judge}

To rule out within-family reward hacking against our Qwen training judge, \Cref{tab:judge_cross_family} pairs every condition with a cross-family \texttt{Llama-3.3-70B-Instruct} judge under the same rubric, broken down by ID/OOD/ALL on MKQA.

\begin{table}[h]
\centering
\setlength{\tabcolsep}{6pt}
\caption{Qwen3-Next-80B (within-family) vs.\ Llama-3.3-70B-Instruct (cross-family) judges on the MKQA English-evidence setup, by ID/OOD/ALL (\%). Method rankings are identical across both judges; the cross-family judge assigns a \emph{larger} ALL-AVG gap to \model.}
\label{tab:judge_cross_family}
\begin{tabular}{l|cc|cc|cc}
\toprule
\multirow{2}{*}{Method} & \multicolumn{2}{c|}{ID-AVG} & \multicolumn{2}{c|}{OOD-AVG} & \multicolumn{2}{c}{ALL-AVG} \\
& Qwen & Llama & Qwen & Llama & Qwen & Llama \\
\midrule
Base     & 65.02 & 62.37 & 66.35 & 65.12 & 65.69 & 63.75 \\
Naive-RL & \textbf{70.77} & \underline{69.03} & \underline{71.77} & \underline{72.13} & \underline{71.27} & \underline{70.58} \\
\rowcolor{gray!15}
\model   & \underline{70.50} & \textbf{70.62} & \textbf{73.45} & \textbf{72.90} & \textbf{71.98} & \textbf{71.76} \\
\bottomrule
\end{tabular}
\end{table}

\Cref{tab:judge_cross_family} shows the within- and cross-family judges produce identical rankings (\model\ $>$ Naive-RL $>$ Base) on every aggregated split. On ALL-AVG, \model\ leads Naive-RL by $+0.71$ under Qwen and $+1.18$ under Llama; the cross-family judge actually \emph{widens} the gap, ruling out judge-family-specific reward hacking. The only minor flip is on ID, where Naive-RL edges \model\ on Qwen (70.77 vs.\ 70.50) but \model\ leads under Llama (70.62 vs.\ 69.03); both gaps are small ($\le 1.6$pp) and do not affect the overall conclusion.

\subsection{Baselines}
\label{sec:app:exp:baselines}
The description for baselines is shown below:
\begin{itemize}
    \item \textbf{Translate}. This baseline reduces the cross-lingual generation problem to a \emph{pivot-English} pipeline. Specifically, given a user query $q$ in language $\ell$, we first translate it into English ($\hat{q}_{en}$), concatenate $\hat{q}_{en}$ with the retrieved English context $D$, and prompt the LLM to produce an English answer $\hat{y}_{en}$. We then translate $\hat{y}_{en}$ back into the target language to obtain the final response $\hat{y}_{\ell}$. Both the query$\rightarrow$en and answer$\rightarrow\ell$ steps use the same \texttt{Qwen3-235B-A22B-Instruct-2507} translator as in dataset construction (Appendix~\ref{sec:app:exp:datasets}) with deterministic decoding (temperature $0$, top-$p$ $1.0$), so that any degradation reflects the pivot-translation \emph{design} rather than a weak translator. Translate is a commonly used heuristic \citep{lewis2020mlqa,hu2020xtreme, asai2021xor} in cross-lingual QA/RAG systems as it avoids direct multilingual decoding under English evidence, but it can introduce compounding translation errors (query intent shifts and answer mistranslations) and may weaken evidence-groundedness due to information loss across two translation steps.
    \item \textbf{Prompt-Control}. Following \citet{DBLP:journals/corr/abs-2504-18428}, we append language-specific control directives to the input prompts, ensuring that the model produces outputs in the same language as the original query. For specific language control directives used for each language, it is shown in Figure~\ref{fig:prompt control}.
    \item \textbf{DIT}. DIT \citep{luo2025mmath} proposes a lightweight prompt-based control for multilingual reasoning. The original method places discourse markers right after the \texttt{<think>} token in reasoning models that natively support extended thinking (e.g., QwQ, DeepSeek-R1). Since \texttt{Qwen3-4B-Instruct-2507} natively supports a thinking mode with \texttt{<think>} tokens, we apply DIT as designed for this backbone. For \texttt{Llama-3.2-3B-Instruct}, which does not natively emit \texttt{<think>} tokens, we adapt DIT by prepending the same discourse markers at the beginning of the model's response, thereby providing an equivalent language-steering cue without introducing out-of-distribution formatting. The used multilingual discourse marks are shown in Figure~\ref{fig:DIT}.
    \item \textbf{Knowledge Distillation}. Following \citet{DBLP:journals/corr/HintonVD15}, we train a compact student model to imitate a stronger teacher by learning from
\emph{soft targets} rather than only one-hot labels.
Given an input $x$, the teacher produces logits $z_T(x)$ and the student produces logits $z_S(x)$; we compute
softened distributions $p_T^{(T)}=\mathrm{softmax}(z_T/T)$ and $p_S^{(T)}=\mathrm{softmax}(z_S/T)$ with temperature $T>1$.
The KD objective minimizes the divergence between the two softened distributions, and when gold labels are available
we linearly combine this distillation loss with the standard cross-entropy.
At inference time, the student uses the normal temperature $T=1$.
Concretely, this is an \emph{offline}, token-level scheme: teacher and student logits are compared on the \emph{gold} (teacher-forced) answer trajectory: soft-target matching on the reference answer, \emph{not} sequence-level distillation (SeqKD) on teacher-\emph{generated} text. This is exactly the offline counterpart that OPD and \model\ improve on by instead querying the teacher on \emph{student-generated} prefixes.
    \item \textbf{SDFT}.
SDFT (Self-Distillation Fine-Tuning \citep{yang-etal-2024-self}) attributes the instability of LLM fine-tuning (especially the trade-off between downstream gains and retaining general instruction-following abilities) to a distribution gap between task-specific datasets and the pre-trained/instruction-tuned model’s original data distribution. To bridge this gap, SDFT constructs a distilled dataset by having the model (as teacher) generate supervision that is closer to its own original distribution, and then uses this distilled data to guide fine-tuning of the target model. Empirically, the paper reports that SDFT reduces catastrophic forgetting and can maintain helpfulness/safety alignment, while achieving comparable or better downstream results than vanilla fine-tuning.
    \item \textbf{Naive-RL}.
This baseline applies standard reward optimization without any teacher-based anchoring.
Given an input $x=(q,D)$, the student policy $\pi_\theta$ (initialized from \texttt{Qwen3-4B-Instruct-2507} or \texttt{Llama-3.2-3B-Instruct} \citep{yang2025qwen3,dubey2024llama})
generates on-policy rollouts $y\sim\pi_\theta(\cdot\mid x)$.
We then update $\theta$ with policy-gradient using only our task reward $r(x,y)$ (language consistency + char 3-gram recall + LLM-judge),
while disabling the teacher regularizer (equivalently setting $\beta=0$ in Eq.~\eqref{eq:final_objective_beta}).
RL-only serves as a strong reference for evaluating whether reward shaping alone can address English-evidence multilingual generation issues such as language drift and brittle evidence use.
    \item \textbf{On-Policy Distillation (OPD)} \citep{DBLP:conf/iclr/AgarwalVZSGGB24}. This baseline uses teacher guidance as the sole training signal, without task-level rewards. The student (initialized from \texttt{Qwen3-4B-Instruct-2507} or \texttt{Llama-3.2-3B-Instruct} \citep{yang2025qwen3,dubey2024llama}) generates on-policy rollouts $y\sim\pi_\theta(\cdot\mid x)$. For each prefix state $s_t=(x,y_{<t})$, we query a higher-capacity teacher, \texttt{Qwen3-30B-A3B-Instruct-2507} (30.5B total / 3.3B activated MoE) or \texttt{Llama-3.3-70B-Instruct} \citep{yang2025qwen3,dubey2024llama}, to obtain $\pi_T(\cdot\mid s_t)$, and update the student by minimizing the same length-normalized reverse-KL anchor $\mathcal{L}_{\text{revKL}}(x,y)$ used by \model\ (Eq.~\eqref{eq:token_revkl}), the token-mean reverse KL over the student's own prefixes.
Importantly, the teacher is never used to generate rollouts; it only provides prefix-wise distributions on the trajectories produced by the student.
OPD-only provides a clean reference for assessing how far prefix-conditioned teacher supervision alone can improve multilingual generation under English evidence, independent of explicit reward optimization.
Note that OPD does not use a task reward (hence \textbf{no $\beta$} teacher-anchor hyperparameter); it trains purely by minimizing $\mathcal{L}_{\text{revKL}}$.

\end{itemize}

\begin{figure}[t]
   \centering
   \includegraphics[width=0.8\linewidth]{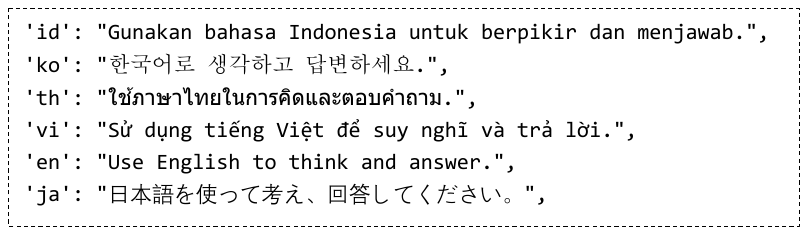}
   \caption{The language control instructions of the Prompt-Control baseline \citep{DBLP:journals/corr/abs-2504-18428, zhang2025think}.}
   \vspace{-2mm}
   \label{fig:prompt control}
\end{figure}

\begin{figure}[t]
   \centering
   \includegraphics[width=0.35\linewidth]{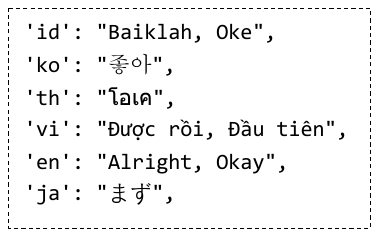}
   \caption{The multilingual discourse marks for each language of the DIT baseline. \citep{luo2025mmath, zhang2025think}.}
   \vspace{-2mm}
   \label{fig:DIT}
\end{figure}

\subsection{Implementation details.}
\label{app:impl_details}

\subsubsection{Training recipe (no task-specific SFT by default).}
Unless otherwise specified, we \emph{do not} apply an additional supervised fine-tuning (SFT) stage on our task data before reinforcement learning (RL).
Instead, we start RL directly from an off-the-shelf instruction-tuned checkpoint (e.g., \texttt{Llama-3.2-3B-Instruct}), i.e., an instruction-following model without any \emph{task-specific} warm-start.
This design follows recent evidence that large-scale RL can be effective without an explicit SFT stage \citep[e.g.,][]{DBLP:journals/corr/abs-2501-12948}, and we find it particularly suitable for \emph{English-evidence cross-lingual RAG}.

\subsubsection{Why task-SFT may underperform in our setting.}
Empirically, adding a task-specific SFT stage before RL can underperform the original instruction-tuned checkpoint in our setting.
A key reason is \emph{exposure bias} (prefix mismatch): SFT optimizes token-level likelihood under \emph{teacher forcing} and thus primarily trains on \emph{ground-truth} prefixes.
At inference time, however, the model conditions on its \emph{own} generated prefixes; early deviations (e.g., language drift or suboptimal evidence usage) shift the state distribution, causing subsequent decoding to operate out-of-distribution and compounding errors.
Consequently, improving likelihood on gold prefixes does not necessarily yield better behavior on the \emph{student-visited} prefixes that dominate actual generation.

This motivates our default recipe: \emph{direct on-policy RL with teacher regularization on student-visited prefixes}.
\model\ queries a frozen teacher only on student-generated prefixes and applies a prefix-wise reverse-KL anchor, providing dense stabilization exactly where the student policy operates during rollout and mitigating exposure-bias-induced compounding errors.
We still include task-SFT as a baseline, but do not rely on it as a warm-start in our main recipe.

\subsubsection{Systems and optimization.}
\label{app:systems}
We implement our method using the open-source \texttt{veRL} \citep{DBLP:journals/corr/abs-2409-19256} framework on a single node with $8\times$ NVIDIA H100 (80GB) GPUs.
Unless otherwise specified, we set the maximum prompt/response lengths to 8{,}192 / 2{,}048 tokens, filter overlong prompts, and enforce strict truncation (raise an error on overflow).
We use GRPO with $K=8$ rollouts per prompt, sampling temperature $0.9$, and a PPO-style clipping ratio of $0.2$ (the \texttt{veRL} default); the advantage-normalization constant is $\epsilon=10^{-6}$.
The student (actor) is \emph{full fine-tuned} with learning rate $1\times 10^{-6}$ and a 10\% warmup ratio, using PPO-style updates with micro-batch size 2 per GPU (mini-batch size 32).
Each run is trained for approximately 250 GRPO update steps (the mean task reward plateaus by then; see \Cref{fig:Task reward}), and we evaluate the \emph{final} checkpoint rather than selecting on a development metric. At \emph{evaluation}, student models decode greedily (temperature $0$) with up to $2{,}048$ new tokens.
We enable gradient checkpointing and train the actor with FSDP (no parameter/optimizer offloading), while the teacher model uses FSDP with parameter offloading enabled.
Rollouts are generated with \texttt{vLLM} \citep{DBLP:conf/sosp/KwonLZ0ZY0ZS23} (tensor model parallel size 1), using a maximum of 81{,}920 batched tokens and GPU memory utilization 0.3.
For log-prob computation, we use micro-batch size 8 per GPU for the actor and 16 per GPU for the teacher model.
We disable padding removal in the model implementation and use sequence parallel size 1.
The teacher-anchor coefficient $\beta$ (Eq.~\eqref{eq:final_objective_beta}) controls the weight on the $\mathcal{L}_{\text{revKL}}$ loss term.

\subsubsection{Reward weights.}
\label{app:reward_weights}

Our total reward combines three components (Eq.~\eqref{eq:reward_total}): $r_{\text{lang}}$, $r_{\text{3g}}$, and $r_{\text{judge}}$.
We choose weights to (i) keep the three terms on comparable scales and (ii) avoid over-optimizing any single proxy.

\paragraph{Practical heuristic.}
All three reward components are bounded in $[0,1]$ by construction.
We start from equal weights $(1,1,1)$ and adjust only $\lambda_{\text{lang}}$ downward to prevent the language-adherence term from dominating early training (which can reduce evidence usage), while keeping the grounding-related terms balanced.
We validate this choice with a small sanity check on a held-out development split by comparing a few candidate settings,
$\lambda_{\text{lang}}\in\{0.25,0.5,1.0\}$ with $\lambda_{\text{3g}}{=}\lambda_{\text{judge}}{=}1$,
under the same training budget.

\paragraph{Final choice.}
We use $\lambda_{\text{lang}}{=}0.5$, $\lambda_{\text{3g}}{=}1$, $\lambda_{\text{judge}}{=}1$ for all experiments, which provides a stable trade-off between language adherence and evidence-grounded correctness.

\paragraph{Length penalty.}
A length penalty is incorporated into the language consistency reward aiming to control the length of LLM's answer. Concretely, the implemented language-consistency component is $\lambda_{\text{lang}}\,r_{\text{lang}}(y,a^\star)+r_{\text{len}}(y)$, where the bounded length score
$r_{\text{len}}(y)=\max\{0,\,1-10^{-4}\max(0,\,|y|-3500)\}$
equals $1$ for answers within 3{,}500 characters and decays linearly beyond this threshold, discouraging excessively long outputs while leaving shorter responses unaffected.

\paragraph{Language-consistency check on short texts.}
When computing the training reward $r_{\text{lang}}$ (Eq.~\eqref{eq:reward_lang}), the fastText LID comparison between the generated answer and the reference answer is skipped for very short strings, on which LID is unreliable: if either text is shorter than 10 characters, the response is treated as language-consistent.

\subsubsection{Compute and randomness reporting.}
All large-scale RL runs are computationally expensive; following common practice in recent LLM RL work \citep{DBLP:conf/iclr/AgarwalVZSGGB24, DBLP:journals/corr/abs-2501-12948, zhang2025think}, we report point estimates from a single (default) random seed.

\subsubsection{Baseline hyperparameters and tuning}
\label{sec:app:expimplementation_details:baseline_hparams}

\paragraph{Shared setup.}
Unless otherwise specified, we keep the data preprocessing, decoding/truncation rules, context lengths, and the RL rollout configuration consistent across RL-based methods, and we report any method-specific coefficients explicitly.

\paragraph{Teacher-anchor coefficient $\beta$.}
The coefficient $\beta$ (Eq.~\eqref{eq:final_objective_beta}) controls the strength of the reverse-KL teacher anchor $\mathcal{L}_{\text{revKL}}$.
We select $\beta$ via the grid described in Appendix~\ref{sec:app:exp:exp:hparams}; this yields $\beta=0.02$, which we use for all backbone--teacher pairs and in every reported \model\ run.

\paragraph{OPD baseline (no $\beta$).}
Our OPD baseline uses only the teacher-distillation loss $\mathcal{L}_{\text{revKL}}$ and does \emph{not} use a task reward (hence \textbf{no $\beta$} teacher-anchor hyperparameter).
OPD follows the same shared RL optimization/rollout configuration as other RL baselines, with its method-specific objective defined in the main text.

\paragraph{SFT baselines.}
When we include a task-SFT baseline, we fine-tune the same instruction-tuned checkpoint using LoRA with
learning rate $1\times 10^{-4}$,
LoRA rank $r=16$ and $\alpha=32$,
for 3 epochs,
with per-device batch size 2 on 8 GPUs (single node), using the same tokenizer and data preprocessing pipeline as other methods.
We use this configuration as it follows standard hyperparameter choices widely adopted in instruction-tuning practice for LoRA-based SFT \citep{DBLP:conf/emnlp/HuWLXLB0PL23}.
The SFT results reported in our tables are obtained by fine-tuning a \emph{separate} model on the training split of each in-domain language (one specialized model per language), rather than training a single model jointly on the mixture of all in-domain languages; we leave a systematic comparison with joint multilingual SFT to future work.
Where feasible, we additionally run a full-parameter SFT baseline in a representative setting (BioASQ-ENKB5 with \texttt{Llama-3.2-3B-Instruct} and \texttt{Qwen3-4B-Instruct-2507}), using the same set of hyperparameters as that of \model. As shown in \Cref{tab:sft}, its performance is \textbf{even worse} and does not alter the trends reported in the main tables.

\begin{table}[tbp]
    \centering
    \caption{SFT with different tuning approaches, in BioASQ-ENKB5, ALL-AVG as metric.}
    \label{tab:sft}
    \begin{tabular}{c|c|c}
    \hline
    Tuning approach/Backbone & \texttt{Llama-3.2-3B-Instruct} & \texttt{Qwen3-4B-Instruct-2507} \\ \hline
    SFT (LoRA) & 63.14 & 68.46 \\ 
    SFT (\textbf{Full fine-tuning}) & 60.84 & 66.55 \\ \hline
    \model & 74.84 & 82.85 \\ \hline 
    \end{tabular}
\end{table}

\paragraph{KD baselines.}
For the knowledge distillation baseline, we adopt standard soft-target KD hyperparameters, distillation temperature $T{=}2$ and an equal ($\alpha{=}0.5$) mixing of the distillation and cross-entropy losses, commonly used in the distillation literature \citep{DBLP:journals/corr/HintonVD15, sun2019patient}, adapted to our decoder-only generative setting, and keep them fixed across datasets/backbones.



\subsection{Main experiments}
\label{sec:app:exp:experiment:main}
For space reasons, only \Cref{tab:llama-bioasq} is kept in the main text. We provide here the remaining three of the four BioASQ/Hotpot main-experiment tables: BioASQ-ENKB5 with the Qwen-4B student (\Cref{tab:qwen-bioasq}), and Hotpot-ENKB5 with both the Llama-3B and Qwen-4B students (\Cref{tab:llama-hotpotqa,tab:qwen-hotpotqa}). All three reproduce the qualitative conclusions reported in the main text.

\paragraph{On the language-consistency plateau in Hotpot-ENKB5 (a LID artifact).}
A striking pattern in \Cref{tab:llama-hotpotqa,tab:qwen-hotpotqa} is that \emph{every} method (including the frozen 70B/30B teachers) clusters at almost identical LC values on several languages (e.g., \textit{id}~$\approx$~55\%, \textit{vi}~$\approx$~71\%, and \textit{en} pinned at 92.90/92.91). Ten systems agreeing to two decimal places cannot reflect genuine model differences; it is a systematic artifact of the fastText LID classifier on this data. Two well-documented fastText failure modes explain it: Indonesian is frequently confused with Malay (\textit{id}/\textit{ms}), and short answers that contain Latin-script English entities (common for HotpotQA bridge answers) are often mislabeled as English. Because we construct the data so that the reference-answer language equals the query language ($\mathrm{LID}(a^\star)\equiv\ell(q)$; Section~\ref{sec:method:reward}), a cleaner alternative that sidesteps LID noise on short gold strings is to score $\mathbb{I}[\mathrm{LID}(y)=\ell(q)]$ against the \emph{known} query language directly (optionally with light manual verification); we report the $\mathrm{LID}(a^\star)$ form for consistency with the training reward and treat this Hotpot LC column as saturated/uninformative rather than as evidence of method parity. It does not affect the char 3-gram, judge, or Composite comparisons, on which the methods separate clearly.

\begin{table*}[!t]
\centering
\scriptsize
\renewcommand{\arraystretch}{0.95}  
\setlength{\tabcolsep}{3pt} 
\caption{Performance comparison in BioASQ-ENKB5, with \texttt{Qwen3-4B-Instruct-2507} as the backbone, and \texttt{Qwen3-30B-A3B-Instruct-2507} as the teacher. \textbf{Bold} is the best, and \underline{underline} is the runner-up. The teacher (\texttt{Qwen3-30B-A3B-Instruct-2507}) and base (\texttt{Qwen3-4B-Instruct-2507}) rows are reference-only and are \emph{not} ranked.
\vspace{-2mm}
}
\resizebox{\linewidth}{!}{%
\begin{tabular}{l|rrrrr|rrr|r}
\hline
\multirow{2}{*}{Methods} &
\multicolumn{5}{c|}{In-Domain Languages} &
\multicolumn{3}{c|}{Out-of-Domain Languages} &
\multirow{2}{*}{ALL-AVG} \\
\cline{2-9}
& id & ko & th & vi & ID-AVG
& en & ja & OOD-AVG & \\
\hline
\multicolumn{10}{c}{\textit{Metric: language consistency (LC, \%)}} \\
\hline
\texttt{Qwen3-30B-A3B-Instruct-2507}   &88.45&97.88&99.15&97.24&95.68 &99.78&99.25&99.52 &96.96 \\
\hline
\rowcolor{gray!8}
\texttt{Qwen3-4B-Instruct-2507}   &89.30&95.76&\underline{99.25}&98.09&95.60 &\underline{99.78}&99.25&\underline{99.52} &96.91 \\
Translate  &90.04&\textbf{98.52}&\textbf{99.26}&\textbf{98.41}&\textbf{96.56} &\underline{99.78}&\underline{99.26}&\underline{99.52} &\textbf{97.55} \\
Prompt-Control    &\textbf{90.25}&96.93&\textbf{99.26}&\textbf{98.41}&96.21 &\textbf{99.79}&\textbf{99.36}&\textbf{99.58} &97.33 \\
DIT          &88.67&97.25&\textbf{99.26}&97.99&95.79 &\textbf{99.79}&98.62&99.21 &96.93 \\
SFT                  &84.75&96.93&97.14&95.97&93.70 &99.05&98.62&98.84 &95.41 \\
Knowledge distillation                        &88.24&97.35&98.94&97.46&95.50 &\textbf{99.79}&97.88&98.84 &96.61 \\
SDFT         &87.81&97.35&98.94&97.45&95.39 &\underline{99.78}&98.30&99.04 &96.61 \\
Naive-RL      &\underline{90.15}&\underline{98.20}&\textbf{99.26}&\underline{98.31}&96.48 &99.58&\textbf{99.36}&99.47 &97.48 \\
On-Policy Distillation   &89.61&97.24&\underline{99.25}&96.92&95.76 &99.68&98.62&99.15 &96.89 \\
\rowcolor{gray!15}
\model &\underline{90.15}&\textbf{98.52}&\textbf{99.26}&98.20&\underline{96.53} &99.58&\textbf{99.36}&99.47 &\underline{97.51} \\

\hline
\multicolumn{10}{c}{\textit{Metric: char 3-gram recall (\%)}} \\
\hline
\texttt{Qwen3-30B-A3B-Instruct-2507}   &81.10&48.74&59.19&77.17&66.55 &82.43&45.66&64.05 &65.72 \\
\hline
\rowcolor{gray!8}
\texttt{Qwen3-4B-Instruct-2507}   &75.06&42.66&52.82&69.65&60.05 &77.56&38.50&58.03 &59.38 \\
Translate &72.76&37.37&47.84&69.52&56.87 &77.56&35.75&56.66 &56.80 \\
Prompt-Control    &76.02&43.36&53.01&72.55&61.24 &77.91&39.42&58.67 &60.38 \\
DIT          &77.98&44.35&52.67&73.00&62.00 &84.05&40.29&62.17 &62.06 \\
SFT                 &41.57&26.17&32.83&41.55&35.53 &50.43&25.89&38.16 &36.41 \\
Knowledge distillation                        &80.06&46.54&58.01&76.97&65.40 &82.30&\underline{42.31}&62.31 &64.37 \\
SDFT         &79.60&45.59&56.88&75.49&64.39 &81.74&41.36&61.55 &63.44 \\
Naive-RL       &\underline{81.14}&46.02&58.70&76.94&65.70 &82.83&42.49&\underline{62.66} &64.69 \\
On-Policy Distillation   &80.80&\underline{47.62}&\underline{58.83}&\underline{77.28}&\underline{66.13} &\underline{84.50}&40.45&62.48 &\underline{64.91} \\
\rowcolor{gray!15}
\model &\textbf{85.15}&\textbf{49.73}&\textbf{62.72}&\textbf{81.51}&\textbf{69.78} &\textbf{87.43}&\textbf{45.93}&\textbf{66.68} &\textbf{68.75} \\
\hline

\multicolumn{10}{c}{\textit{Metric: LLM-judge (\%)}} \\
\hline
\texttt{Qwen3-30B-A3B-Instruct-2507}   &79.65&76.63&79.17&79.01&78.62 &81.21&76.62&78.92 &78.72 \\
\hline
\rowcolor{gray!8}
\texttt{Qwen3-4B-Instruct-2507}   &79.50&73.82&77.34&80.24&77.73 &83.61&76.41&80.01 &78.49 \\
Translate &77.74&64.49&71.33&75.72&72.32 &83.61&70.06&76.84 &73.83 \\
Prompt-Control    &78.81&73.74&76.96&78.62&77.03 &83.96&75.65&79.81 &77.96 \\
DIT          &75.49&73.31&75.16&74.52&74.62 &76.27&73.32&74.80 &74.68 \\
SFT                  &75.74&70.02&70.23&72.67&72.17 &79.45&73.31&76.38 &73.57 \\
Knowledge distillation                        &76.78&72.06&75.66&76.89&75.35 &81.04&74.23&77.64 &76.11 \\
SDFT         &77.55&72.49&75.78&77.16&75.75 &80.41&74.32&77.37 &76.29 \\
Naive-RL       &\textbf{86.26}&\textbf{80.30}&\textbf{82.86}&\textbf{84.82}&\textbf{83.56} &\textbf{86.72}&\textbf{81.72}&\textbf{84.22} &\textbf{83.78} \\
On-Policy Distillation   &77.28&73.60&76.12&77.54&76.14 &80.12&75.87&78.00 &76.76 \\

\rowcolor{gray!15}
\model &\underline{83.97}&\underline{79.83}&\underline{80.73}&\underline{83.21}&\underline{81.94}&\underline{85.91}&\underline{80.19}&\underline{83.05} &\underline{82.31} \\
\hline

\multicolumn{10}{c}{\textit{Metric: Composite (\%)}} \\
\hline

\texttt{Qwen3-30B-A3B-Instruct-2507}   &83.07&74.42&79.17&84.47&80.28 &87.81&73.84&80.83 &80.46 \\
\hline
\rowcolor{gray!8}
\texttt{Qwen3-4B-Instruct-2507}   &81.29&70.75&76.47&82.66&77.79 &86.98&71.39&79.19 &78.26 \\
Translate &80.18&66.79&72.81&81.22&75.25 &86.98&68.36&77.67 &76.06 \\
Prompt-Control    &81.69&71.34&76.41&83.19&78.16 &87.22&71.48&79.35 &78.56 \\
DIT          &80.71&71.64&75.70&81.84&77.47 &86.70&70.74&78.72 &77.89 \\
SFT                  &67.35&64.37&66.73&70.06&67.13 &76.31&65.94&71.13 &68.46 \\
Knowledge distillation                        &81.69&71.98&77.54&83.77&78.75 &87.71&71.47&79.59 &79.03 \\
SDFT         &81.65&71.81&77.20&83.37&78.51 &87.31&71.33&79.32 &78.78 \\
Naive-RL       &\underline{85.85}&\underline{74.84}&\underline{80.27}&\underline{86.69}&\underline{81.91} &\underline{89.71}&\underline{74.52}&\underline{82.12} &\underline{81.98} \\
On-Policy Distillation   &82.56&72.82&78.07&83.91&79.34 &88.10&71.65&79.87 &79.52\\
\rowcolor{gray!15}
\model &\textbf{86.42}&\textbf{76.03}&\textbf{80.90}&\textbf{87.64}&\textbf{82.75} &\textbf{90.97}&\textbf{75.16}&\textbf{83.07} &\textbf{82.85} \\
\hline
\end{tabular}}
\vspace{-2mm}
\label{tab:qwen-bioasq}
\end{table*}

\begin{table*}[!t]
\centering
\scriptsize
\renewcommand{\arraystretch}{0.95}  
\setlength{\tabcolsep}{3pt} 
\caption{Performance comparison in Hotpot-ENKB5, with \texttt{Llama-3.2-3B-Instruct} as the backbone, and \texttt{Llama-3.3-70B-Instruct} as the teacher. \textbf{Bold} is the best, and \underline{underline} is the runner-up. The teacher (\texttt{Llama-3.3-70B-Instruct}) and base (\texttt{Llama-3.2-3B-Instruct}) rows are reference-only and are \emph{not} ranked. See the note in \Cref{sec:app:exp:experiment:main} on the language-consistency plateau in this table.
}
\resizebox{\linewidth}{!}{%
\begin{tabular}{l|rrrrr|rrr|r}
\hline
\multirow{2}{*}{Methods} &
\multicolumn{5}{c|}{In-Domain Languages} &
\multicolumn{3}{c|}{Out-of-Domain Languages} &
\multirow{2}{*}{ALL-AVG} \\
\cline{2-9}
& id & ko & th & vi & ID-AVG
& en & ja & OOD-AVG & \\
\hline
\multicolumn{10}{c}{\textit{Metric: language consistency (\%)}} \\
\hline

\texttt{Llama-3.3-70B-Instruct}   &55.00&99.22&99.09&70.40&80.93 &92.63&99.59&96.11 &85.99 \\
\hline
\rowcolor{gray!8}
\texttt{Llama-3.2-3B-Instruct}   &\underline{56.72}&99.63&99.27&\textbf{72.09}&\textbf{81.93} &92.90&99.81&96.36 &\textbf{86.54} \\
Translate &55.72&97.27&98.91&70.91&80.70 &\underline{92.90}&99.55&96.23 &85.88 \\
Prompt-Control    &55.09&99.63&\underline{99.81}&\underline{71.18}&81.43 &\underline{92.90}&\underline{99.90}&\underline{96.40} &86.42 \\
DIT          &55.81&99.36&\underline{99.81}&70.72&81.43 &\underline{92.90}&99.72&96.32 &86.39 \\
SFT                  &\textbf{63.73}&90.00&95.18&70.55&79.87 &89.64&92.00&90.82 &83.52 \\
Knowledge distillation                        &55.55&\underline{99.72}&\underline{99.81}&\underline{71.18}&\underline{81.57} &\underline{92.90}&\underline{99.90}&\underline{96.40} &\underline{86.51} \\
SDFT         &55.00&99.54&99.72&71.09&81.34 &\underline{92.90}&\underline{99.90}&\underline{96.40} &86.36 \\
Naive-RL      &54.64&\textbf{99.73}&\textbf{99.82}&71.00&81.30 &\textbf{92.91}&\textbf{99.91}&\textbf{96.41} &86.34 \\
On-Policy Distillation   &54.73&99.64&\textbf{99.82}&71.09&81.32 &\textbf{92.91}&\textbf{99.91}&\textbf{96.41} &86.35 \\
\rowcolor{gray!15}
\model &54.45&\underline{99.72}&\underline{99.81}&\underline{71.18}&81.29 &\underline{92.90}&\underline{99.90}&\underline{96.40} &86.37 \\

\hline
\multicolumn{10}{c}{\textit{Metric: char 3-gram recall (\%)}} \\
\hline

\texttt{Llama-3.3-70B-Instruct}   &78.89&44.26&44.57&71.21&59.73 &90.66&49.73&70.20 &63.22 \\
\hline
\rowcolor{gray!8}
\texttt{Llama-3.2-3B-Instruct}   &44.99&18.81&24.18&44.38&33.09 &56.22&25.16&40.69 &35.62 \\
Translate &40.06&11.37&18.28&35.13&26.21 &56.22&18.07&37.15 &29.86 \\
Prompt-Control    &48.67&20.73&26.20&49.60&36.30 &62.57&27.99&45.28 &39.29 \\
DIT          &50.37&17.83&19.85&51.89&34.99 &58.25&24.19&41.22 &37.06 \\
SFT                 &57.21&\textbf{34.02}&39.31&56.14&46.67 &75.12&\underline{42.16}&58.64 &50.66 \\
Knowledge distillation                        &63.04&27.72&34.63&59.10&46.12 &76.16&35.37&55.77 &49.34 \\
SDFT         &67.25&29.87&35.37&64.87&49.34 &81.00&38.09&59.55 &52.74 \\
Naive-RL       &69.87&29.58&\underline{43.15}&\underline{66.87}&\underline{52.37} &\underline{83.15}&37.56&\underline{60.36} &\underline{55.03} \\
On-Policy Distillation   &\underline{70.01}&31.98&35.61&65.42&50.76 &79.72&39.39&59.56 &53.69 \\
\rowcolor{gray!15}
\model &\textbf{72.20}&\underline{33.77}&\textbf{45.37}&\textbf{68.50}&\textbf{54.96} &\textbf{83.47}&\textbf{43.47}&\textbf{63.47} &\textbf{57.80} \\
\hline
\multicolumn{10}{c}{\textit{Metric: LLM-judge (\%)}} \\
\hline

\texttt{Llama-3.3-70B-Instruct}   &83.74&81.25&81.31&85.13&82.86 &87.06&84.30&85.68 &83.80 \\
\hline
\rowcolor{gray!8}
\texttt{Llama-3.2-3B-Instruct}   &54.87&49.50&50.60&57.65&53.16 &59.53&50.33&54.93 &53.75 \\
Translate &44.83&19.93&25.47&39.62&32.46 &59.53&27.35&43.44 &36.12 \\
Prompt-Control    &56.93&46.84&49.11&58.96&52.96 &62.09&49.44&55.77 &53.90 \\
DIT          &52.53&43.15&35.40&55.62&46.68 &60.15&45.97&53.06 &48.80 \\
SFT                  &69.03&52.35&57.19&69.57&62.04 &81.77&59.13&70.45 &64.84 \\
Knowledge distillation                        &72.40&60.79&64.29&73.22&67.68 &79.11&63.69&71.40 &68.92 \\
SDFT         &75.02&65.99&70.53&77.74&72.32 &82.61&68.90&75.76 &73.47 \\
Naive-RL       &\textbf{79.72}&\textbf{69.80}&\underline{72.38}&\underline{80.44}&\underline{75.59} &\underline{84.91}&\underline{72.43}&\underline{78.67} &\underline{76.61} \\
On-Policy Distillation   &76.31&66.10&69.41&78.15&72.49 &82.29&68.24&75.27 &73.42 \\
\rowcolor{gray!15}
\model &\underline{79.66}&\underline{69.74}&\textbf{73.65}&\textbf{81.52}&\textbf{76.14} &\textbf{85.68}&\textbf{74.76}&\textbf{80.22} &\textbf{77.50} \\
\hline

\multicolumn{10}{c}{\textit{Metric: Composite (\%)}} \\
\hline

\texttt{Llama-3.3-70B-Instruct}   &72.54&74.91&74.99&75.58&74.51 &90.12&77.87&84.00 &77.67 \\
\hline
\rowcolor{gray!8}
\texttt{Llama-3.2-3B-Instruct}   &52.19&55.98&58.02&58.04&56.06 &69.55&58.43&64.00 &58.70 \\
Translate &46.87&42.86&47.55&48.55&46.46 &69.55&48.32&58.94 &50.62 \\
Prompt-Control    &53.56&55.73&58.37&59.91&56.90 &72.52&59.11&65.82 &59.87 \\
DIT          &52.90&53.45&51.69&59.41&54.36 &70.43&56.63&63.53 &57.42 \\
SFT                  &63.32&58.79&63.89&65.42&62.86 &82.18&64.43&73.30 &66.34 \\
Knowledge distillation                        &63.66&62.74&66.24&67.83&65.12 &82.72&66.32&74.52 &68.25 \\
SDFT         &65.76&65.13&68.54&71.23&67.67 &85.50&68.96&77.23 &70.86 \\
Naive-RL       &\underline{68.08}&\underline{66.37}&\underline{71.78}&\underline{72.77}&\underline{69.75} &\underline{86.99}&\underline{69.97}&\underline{78.48} &\underline{72.66} \\
On-Policy Distillation   &67.02&65.91&68.28&71.55&68.19 &84.97&69.18&77.08 &71.15 \\
\rowcolor{gray!15}
\model &\textbf{68.77}&\textbf{67.74}&\textbf{72.94}&\textbf{73.73}&\textbf{70.80} &\textbf{87.35}&\textbf{72.71}&\textbf{80.03} &\textbf{73.88} \\
\hline

\end{tabular}}

\label{tab:llama-hotpotqa}
\end{table*}

\begin{table*}[t]
\centering
\scriptsize
\renewcommand{\arraystretch}{0.95}  
\setlength{\tabcolsep}{3pt} 
\caption{Performance comparison in Hotpot-ENKB5, with \texttt{Qwen3-4B-Instruct-2507} as the backbone, and \texttt{Qwen3-30B-A3B-Instruct-2507} as the teacher. \textbf{Bold} is the best, and \underline{underline} is the runner-up. The teacher (\texttt{Qwen3-30B-A3B-Instruct-2507}) and base (\texttt{Qwen3-4B-Instruct-2507}) rows are reference-only and are \emph{not} ranked. See the note in \Cref{sec:app:exp:experiment:main} on the language-consistency plateau in this table.
}
\resizebox{\linewidth}{!}{%
\begin{tabular}{l|rrrrr|rrr|r}
\hline
\multirow{2}{*}{Methods} &
\multicolumn{5}{c|}{In-Domain Languages} &
\multicolumn{3}{c|}{Out-of-Domain Languages} &
\multirow{2}{*}{ALL-AVG} \\
\cline{2-9}
& id & ko & th & vi & ID-AVG
& en & ja & OOD-AVG & \\
\hline
\multicolumn{10}{c}{\textit{Metric: language consistency (\%)}} \\
\hline

\texttt{Qwen3-30B-A3B-Instruct-2507}   &55.54&99.63&99.81&71.09&81.52 &92.90&99.99&96.45 &86.49 \\
\hline
\rowcolor{gray!8}
\texttt{Qwen3-4B-Instruct-2507}   &55.45&\underline{99.72}&\underline{99.81}&70.90&81.47 &\underline{92.90}&99.72&96.31 &86.42 \\
Translate &55.09&\textbf{99.73}&\textbf{99.82}&71.18&81.46 &\underline{92.90}&\textbf{99.91}&\textbf{96.41} &86.44 \\
Prompt-Control    &\underline{56.54}&\underline{99.72}&\underline{99.81}&70.81&\underline{81.72} &\underline{92.90}&99.81&96.36 &\textbf{86.60} \\
DIT          &56.00&99.18&\underline{99.81}&\underline{71.27}&81.57 &\underline{92.90}&99.54&96.22 &86.45 \\
SFT                  &\textbf{69.36}&93.09&95.18&\textbf{77.45}&\textbf{83.77} &91.09&90.91&91.00 &86.18 \\
Knowledge distillation                        &54.81&\underline{99.72}&\underline{99.81}&71.00&81.34 &\underline{92.90}&\underline{99.90}&\underline{96.40} &86.36 \\
SDFT         &54.90&\underline{99.72}&\underline{99.81}&71.09&81.38 &\underline{92.90}&99.81&96.36 &86.37 \\
Naive-RL      &28.27&92.55&92.00&50.55&65.84 &82.18&93.00&87.59 &73.09 \\
On-Policy Distillation   &54.90&99.45&\underline{99.81}&71.00&81.29 &\underline{92.90}&\underline{99.90}&\underline{96.40} &86.33 \\
\rowcolor{gray!15}
\model &55.60&\textbf{99.73}&\textbf{99.82}&71.09&81.41 &\textbf{92.91}&\textbf{99.91}&\textbf{96.41} &\underline{86.51} \\

\hline
\multicolumn{10}{c}{\textit{Metric: char 3-gram recall (\%)}} \\
\hline

\texttt{Qwen3-30B-A3B-Instruct-2507}   &83.02&54.37&53.03&78.16&67.15 &91.46&57.45&74.46 &69.58 \\
\hline
\rowcolor{gray!8}
\texttt{Qwen3-4B-Instruct-2507}   &79.75&44.46&43.27&74.42&60.48 &89.36&49.88&69.62 &63.52 \\
Translate &78.41&40.43&43.81&72.59&58.81 &89.36&48.34&68.85 &62.16 \\
Prompt-Control    &\underline{82.93}&45.23&44.10&\underline{77.86}&62.53 &\underline{90.88}&50.10&70.49 &65.18 \\
DIT          &\textbf{86.57}&44.78&43.01&\textbf{79.10}&63.37 &96.76&47.69&\underline{72.23} &66.32 \\
SFT                 &65.71&43.24&45.11&64.56&54.66 &76.02&46.41&61.22 &56.84 \\
Knowledge distillation                        &80.20&45.97&47.33&74.75&62.06 &89.43&51.20&70.32 &64.81 \\
SDFT         &80.60&46.15&47.88&75.72&62.59 &90.32&52.13&71.23 &65.47 \\
Naive-RL       &79.70&\underline{50.12}&\textbf{55.40}&73.53&\underline{64.69} &89.67&\underline{53.34}&71.51 &\underline{66.96} \\
On-Policy Distillation   &80.25&48.09&48.78&76.39&63.38 &\textbf{91.30}&51.37&71.34 &66.03 \\
\rowcolor{gray!15}
\model &79.96&\textbf{51.41}&\underline{55.22}&74.80&\textbf{65.35} &90.83&\textbf{55.54}&\textbf{73.19} &\textbf{67.96} \\
\hline
\multicolumn{10}{c}{\textit{Metric: LLM-judge (\%)}} \\
\hline

\texttt{Qwen3-30B-A3B-Instruct-2507}   &79.98&74.78&76.64&79.52&77.73 &84.23&74.21&79.22 &78.23 \\
\hline
\rowcolor{gray!8}
\texttt{Qwen3-4B-Instruct-2507}   &76.32&69.39&73.14&77.09&73.98 &81.59&72.11&76.85 &74.94 \\
Translate &75.53&57.49&65.03&73.68&67.93 &81.59&64.85&73.22 &69.70 \\
Prompt-Control    &74.86&69.21&71.24&75.09&72.60 &80.13&71.15&75.64 &73.61 \\
DIT          &74.67&70.72&72.14&74.58&73.03 &77.50&71.33&74.42 &73.49 \\
SFT                  &77.79&63.61&65.49&78.40&71.32 &86.23&66.84&76.54 &73.06 \\
Knowledge distillation                        &77.85&69.24&72.52&79.08&74.67 &83.88&71.36&77.62 &75.66 \\
SDFT         &77.96&69.84&73.18&79.06&75.01 &83.90&71.80&77.85 &75.96 \\
Naive-RL       &\textbf{86.41}&\underline{79.68}&\underline{80.90}&\textbf{87.15}&\underline{83.54} &\underline{90.07}&\underline{81.61}&\underline{85.84} &\underline{84.30} \\
On-Policy Distillation   &78.53&73.22&74.25&79.48&76.37 &84.31&72.90&78.61 &77.12 \\
\rowcolor{gray!15}
\model &\underline{85.74}&\textbf{80.27}&\textbf{81.88}&\underline{86.79}&\textbf{83.67} &\textbf{91.42}&\textbf{81.94}&\textbf{86.68} &\textbf{84.67} \\
\hline

\multicolumn{10}{c}{\textit{Metric: Composite (\%)}} \\
\hline

\texttt{Qwen3-30B-A3B-Instruct-2507}   &72.85&76.26&76.49&76.26&75.46 &89.53&77.22&83.37 &78.10 \\
\hline
\rowcolor{gray!8}
\texttt{Qwen3-4B-Instruct-2507}   &70.51&71.18&72.07&74.14&71.97 &87.95&73.90&80.93 &74.96 \\
Translate &69.68&65.88&69.55&72.48&69.40 &87.95&71.03&79.49 &72.76 \\
Prompt-Control    &71.44&71.39&71.72&74.59&72.28 &87.97&73.69&80.83 &75.13 \\
DIT          &\underline{72.41}&71.56&71.65&74.98&72.65 &89.05&72.85&80.95 &75.42 \\
SFT                  &70.95&66.65&68.59&73.47&69.92 &84.45&68.05&76.25 &72.03 \\
Knowledge distillation                        &70.95&71.64&73.22&74.94&72.69 &88.74&74.15&81.45 &75.61 \\
SDFT         &71.15&71.90&73.62&75.29&72.99 &89.04&74.58&81.81 &75.93 \\
Naive-RL       &64.79&\underline{74.12}&\underline{76.10}&70.41&71.36 &87.31&\underline{75.98}&81.65 &74.79 \\
On-Policy Distillation   &71.23&73.59&74.28&\underline{75.62}&\underline{73.68} &\underline{89.50}&74.72&\underline{82.11} &\underline{76.49} \\
\rowcolor{gray!15}
\model &\textbf{73.57}&\textbf{77.14}&\textbf{78.97}&\textbf{77.56}&\textbf{76.81} &\textbf{91.72}&\textbf{79.13}&\textbf{85.43} &\textbf{79.68} \\
\hline

\end{tabular}}

\label{tab:qwen-hotpotqa}
\end{table*}

\newpage
\clearpage

\subsection{Ablation studies}
\label{sec:app:exp:ablation studies}

More ablation studies of other three metrics: \emph{language consistency}, \emph{char 3-gram recall}, and \emph{LLM-judge} score are shown in \Cref{table:ablation-Language Consistency,table:ablation-char 3-gram recall,table:ablation-LLM-judge}.

\begin{table*}[tbp]
    \centering
    \caption{Ablation studies (\textit{\textbf{Language Consistency}, ALL-AVG} metric, \%).\\
    }
    \vspace{-2mm} 
    \label{table:ablation-Language Consistency}%
    \renewcommand{\arraystretch}{1} 
    \setlength{\tabcolsep}{2pt} 
    \resizebox{\linewidth}{!}{%
    \begin{tabular}{@{}l|cccc|cc|cc|c@{}}
    \toprule
    \multirow{2}*{Variants}
    &Language&Char 3-gram& LLM-judge & On-Policy  &\multicolumn{2}{c|}{BioASQ-ENKB5} &\multicolumn{2}{c|}{Hotpot-ENKB5} & \multirow{2}*{Average}  \\
    &Reward&Reward& Reward & Distillation & Llama$_{3B}$ & Qwen$_{4B}$ & Llama$_{3B}$  & Qwen$_{4B}$ &    \\
    \midrule
    RL-only
    &$\checkmark$&$\checkmark$ & $\checkmark$ & $\times$  &97.02&97.48&86.34&73.09&88.48 \\ 
    OPD-only
    &$\times$&$\times$ & $\times$ & $\checkmark$ &97.16&96.89&86.35&86.37&91.69 \\
    No language reward
    &$\times$&$\checkmark$&$\checkmark$&$\checkmark$ &96.54&97.46&86.40&86.24&91.66  \\
    No 3-gram reward
    &$\checkmark$&$\times$&$\checkmark$&$\checkmark$ &96.44&96.05&\textbf{92.29}&\textbf{92.44}&\textbf{94.31}  \\
    No LLM-judge reward
    &$\checkmark$&$\checkmark$&$\times$&$\checkmark$ &\underline{97.37}&\textbf{97.55}&\underline{86.56}&\underline{86.62}&\underline{92.03} \\
    \midrule
    \model
    & $\checkmark$ & $\checkmark$ &$\checkmark$&$\checkmark$   &\textbf{97.42}&\underline{97.51}&86.33&86.41&91.92  \\  
    \bottomrule
    \end{tabular}}
\end{table*}

\begin{table*}[tbp]
    \centering
    \caption{Ablation studies (\textit{\textbf{char 3-gram recall}, ALL-AVG} metric, \%).\\
    }
    \vspace{-2mm} 
    \label{table:ablation-char 3-gram recall}%
    \renewcommand{\arraystretch}{1} 
    \setlength{\tabcolsep}{2pt} 
    \resizebox{\linewidth}{!}{%
    \begin{tabular}{@{}l|cccc|cc|cc|c@{}}
    \toprule
    \multirow{2}*{Variants}
    &Language&Char 3-gram& LLM-judge & On-Policy  &\multicolumn{2}{c|}{BioASQ-ENKB5} &\multicolumn{2}{c|}{Hotpot-ENKB5} & \multirow{2}*{Average}  \\
    &Reward&Reward& Reward & Distillation & Llama$_{3B}$ & Qwen$_{4B}$ & Llama$_{3B}$  & Qwen$_{4B}$ &    \\
    \midrule
    RL-only
    &$\checkmark$&$\checkmark$ & $\checkmark$ & $\times$  &51.77&64.69&55.03&66.96&59.61 \\ 
    OPD-only
    &$\times$&$\times$ & $\times$ & $\checkmark$ &48.01&64.91&53.69&66.03&58.16 \\
    No language reward
    &$\times$&$\checkmark$&$\checkmark$&$\checkmark$ &55.23&\underline{69.27}&55.74&67.66&61.98  \\
    No 3-gram reward
    &$\checkmark$&$\times$&$\checkmark$&$\checkmark$ &34.49&43.06&41.71&49.85&42.28  \\
    No LLM-judge reward
    &$\checkmark$&$\checkmark$&$\times$&$\checkmark$ &\textbf{62.99}&\textbf{71.76}&\textbf{65.98}&\textbf{75.30}&\textbf{69.01} \\
    \midrule
    \model
    & $\checkmark$ & $\checkmark$ &$\checkmark$&$\checkmark$   &\underline{58.09}&68.75&\underline{57.80}&\underline{67.96}&\underline{63.15}  \\  
    \bottomrule
    \end{tabular}}
\end{table*}

\begin{table*}[tbp]
    \centering
    \caption{Ablation studies (\textit{\textbf{LLM-judge}, ALL-AVG} metric, \%).\\
    }
    \vspace{-2mm} 
    \label{table:ablation-LLM-judge}%
    \renewcommand{\arraystretch}{1} 
    \setlength{\tabcolsep}{2pt} 
    \resizebox{\linewidth}{!}{%
    \begin{tabular}{@{}l|cccc|cc|cc|c@{}}
    \toprule
    \multirow{2}*{Variants}
    &Language&Char 3-gram& LLM-judge & On-Policy  &\multicolumn{2}{c|}{BioASQ-ENKB5} &\multicolumn{2}{c|}{Hotpot-ENKB5} & \multirow{2}*{Average}  \\
    &Reward&Reward& Reward & Distillation & Llama$_{3B}$ & Qwen$_{4B}$ & Llama$_{3B}$  & Qwen$_{4B}$ &    \\
    \midrule
    RL-only
    &$\checkmark$&$\checkmark$ & $\checkmark$ & $\times$  &70.38&83.78&76.62&84.30&78.77 \\ 
    OPD-only
    &$\times$&$\times$ & $\times$ & $\checkmark$ &64.85&76.76&73.42&77.12&73.04 \\
    No language reward
    &$\times$&$\checkmark$&$\checkmark$&$\checkmark$ &\underline{70.63}&\underline{84.45}&\underline{77.08}&\underline{85.20}&\underline{79.34}  \\
    No 3-gram reward
    &$\checkmark$&$\times$&$\checkmark$&$\checkmark$ &\textbf{78.46}&\textbf{86.73}&76.32&\textbf{85.74}&\textbf{81.81}  \\
    No LLM-judge reward
    &$\checkmark$&$\checkmark$&$\times$&$\checkmark$ &57.28&74.95&61.03&73.47&66.68 \\
    \midrule
    \model
    & $\checkmark$ & $\checkmark$ &$\checkmark$&$\checkmark$   &69.02&82.31&\textbf{77.50}&84.67&78.38  \\  
    \bottomrule
    \end{tabular}}
\end{table*}


\subsection{Sensitivity to the teacher weight $\beta$}
\label{sec:app:exp:exp:hparams}
The main hyperparameter in TR-RAG is the teacher-anchor weight $\beta$ (Eq.~\eqref{eq:final_objective_beta}), which trades off task reward optimization against the reverse-KL teacher anchor computed on student-generated prefixes.
Intuitively, larger $\beta$ enforces closer alignment to the teacher and typically stabilizes training under early prefix deviations (e.g., language drift), while smaller $\beta$ allows more reward-driven exploration but may increase variance.
We select $\beta$ on a held-out development split from the in-domain training languages by grid search over $\beta\in\{0.2,\,0.02,\, 0.002,\,0.0002\}$, and use the same $\beta$ for all runs within a backbone--teacher pair.
Unless stated otherwise, we keep all remaining hyperparameters fixed across baselines and variants, including the rollout budget ($K$ samples per prompt for GRPO), maximum generation length, optimizer and learning-rate schedule, and the reward weights $(\lambda_{\text{lang}},\lambda_{\text{3g}},\lambda_{\text{judge}})$.
We conduct this experiment, using \texttt{Qwen3-4B-Instruct-2507} as backbone, in Hotpot-ENKB5. Results are shown in \Cref{fig:hyper sensitivity,fig:hyper sensitivity-LC,fig:hyper sensitivity-3-gram,fig:hyper sensitivity-LLM-judge}. 
As can be seen, language consistency (\Cref{fig:hyper sensitivity-LC}) and char 3-gram recall (\Cref{fig:hyper sensitivity-3-gram}) improve as we increase $\beta$ from very small values and remain stable in a moderate range (e.g., $0.002$--$0.02$). The LLM-judge score (\Cref{fig:hyper sensitivity-LLM-judge}), however, peaks at smaller $\beta$ values and strictly decreases as $\beta$ grows, reflecting a metric-specific trade-off: stronger teacher anchoring improves lexical faithfulness and language adherence at a small cost to the judge's preferred output style. A large $\beta$ (e.g., $0.2$) degrades all metrics, consistent with over-regularization that keeps the student too close to the frozen teacher and limits task-specific adaptation. Overall, TR-RAG is not overly sensitive within a broad mid-range of $\beta$, while avoiding excessively large values is important. Guided by these curves, we set $\beta=0.02$ for every backbone--teacher pair in all reported experiments.

\subsection{Impact of Teacher Quality.}
\label{sec:app:exp:teacher_quality}
To probe the boundary of \model’s reliance on teacher quality, we run a controlled ablation that replaces the frozen teacher with weaker base (non–instruction-tuned) checkpoints while keeping the student backbone and training recipe fixed. Concretely, for the Llama-3.2-3B-Instruct student, we compare a weak teacher (frozen Llama-3.2-3B) against a strong teacher (Llama-3.3-70B-Instruct); for the Qwen3-4B-Instruct-2507 student, we compare a weak teacher (frozen Qwen3-4B) against a strong teacher (Qwen3-30B-A3B-Instruct-2507). As shown in \Cref{fig:teacher_comparison_bioasq,fig:teacher_comparison_hotpotQA}, across both backbones, \model\ remains beneficial under the weak-teacher setting and improves over non-teacher baselines, while stronger teachers systematically yield larger gains, indicating that reverse-KL anchoring is not purely “teacher-capability transfer” but also a stabilizing regularizer on student-visited prefixes. At the same time, these results clarify the method’s limits: if the teacher is severely misaligned in the target language (e.g., hallucinations or poor cross-lingual competence), the anchor could propagate bias; in practice, we recommend selecting teachers with at least reasonable target-language reliability and, when feasible, validating with a small held-out subset or an alternative teacher/judge to guard against teacher-induced errors.

\begin{figure}[tbp]
  \centering
  \begin{minipage}[t]{0.39\linewidth}
    \centering
    \includegraphics[width=1\linewidth]{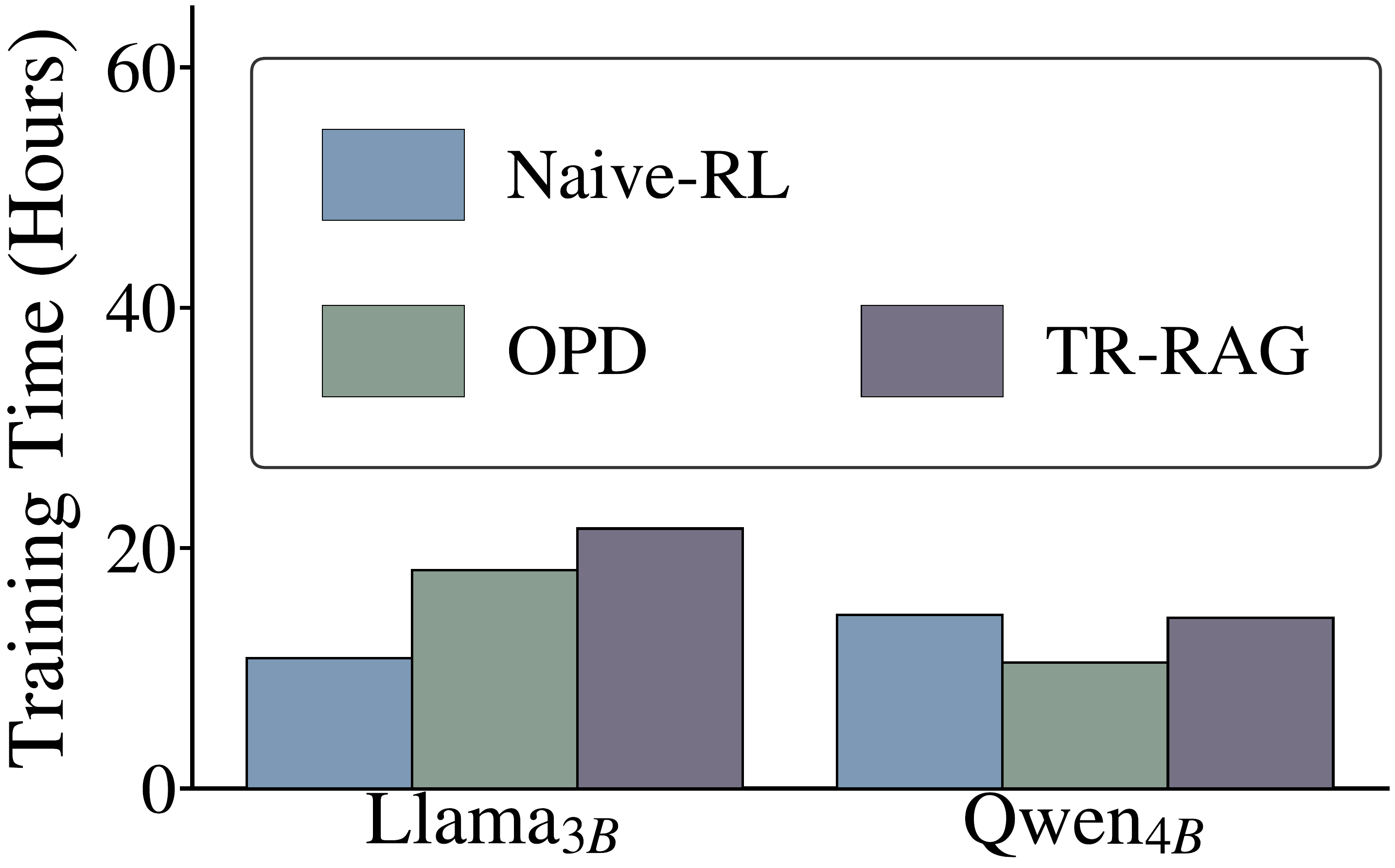}
    \caption{
    Training time (\texttt{Llama-3.2-3B} \& \texttt{Qwen3-4B} backbones), Hotpot-ENKB5.}
    \label{fig:qwen_llama_time}
  \end{minipage}
  \hfill
  \begin{minipage}[t]{0.39\linewidth}
    \centering
    \includegraphics[width=1\linewidth]{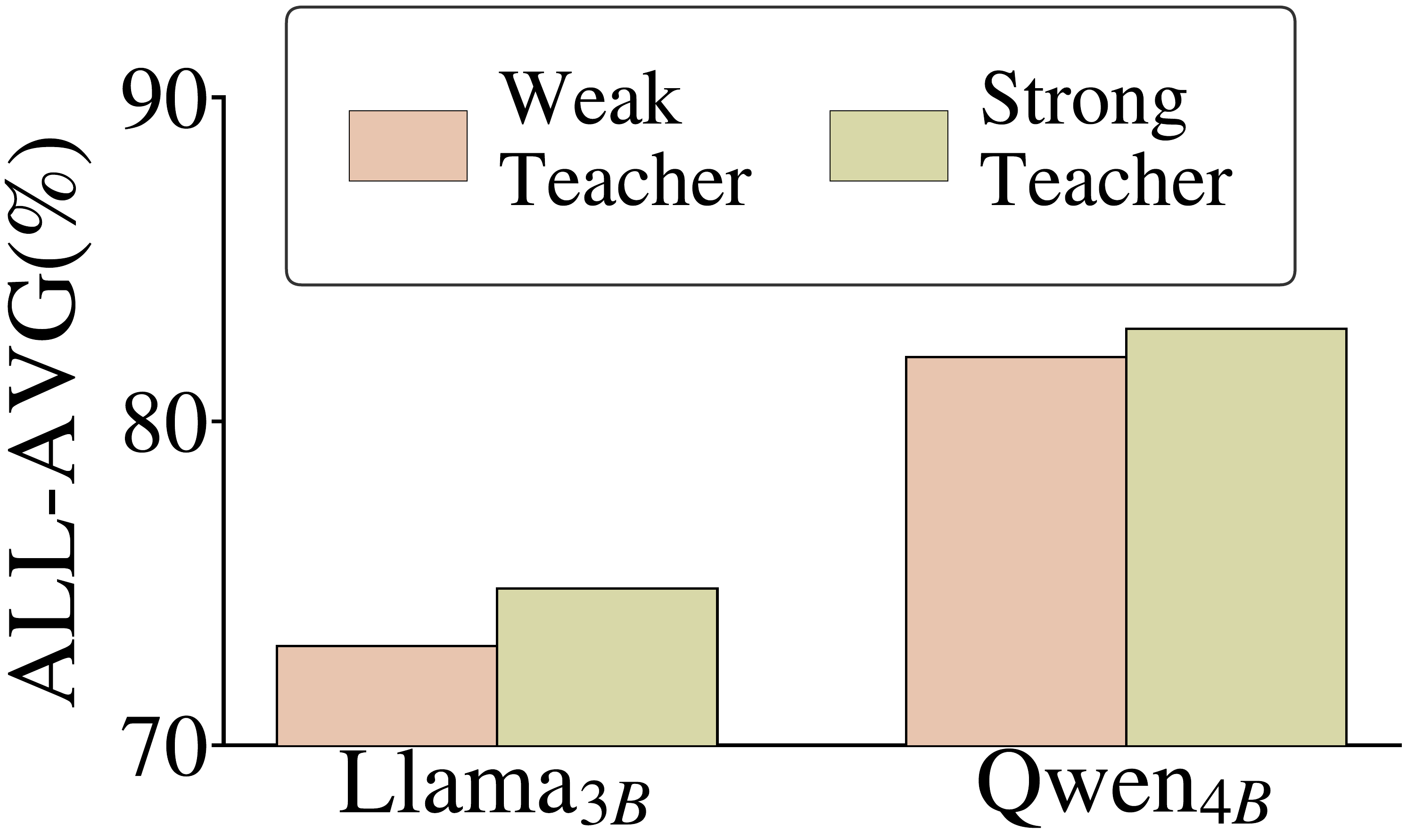}
    \caption{
    Teacher choice comparison, in BioASQ-ENKB5.}
    \label{fig:teacher_comparison_bioasq}
  \end{minipage}
\end{figure}

\begin{figure}[tbp]
  \centering
  \begin{minipage}[t]{0.31\linewidth}
    \centering
    \includegraphics[width=1\linewidth]{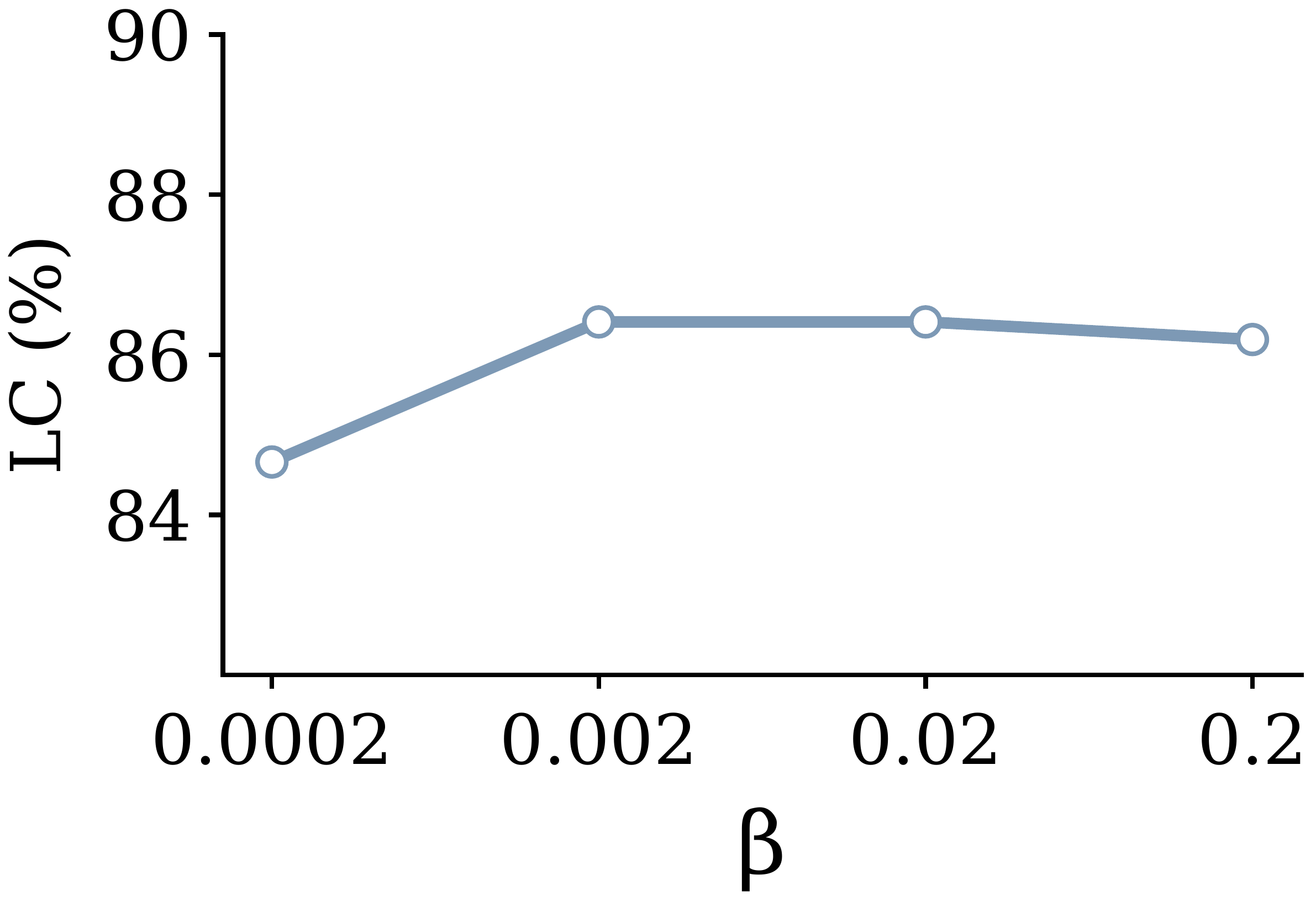}
    \caption{
    Teacher weight $\beta$ in \model, \emph{language consistency} as metric}
    \label{fig:hyper sensitivity-LC}
  \end{minipage}
  \hfill
  \begin{minipage}[t]{0.31\linewidth}
    \centering
    \includegraphics[width=1\linewidth]{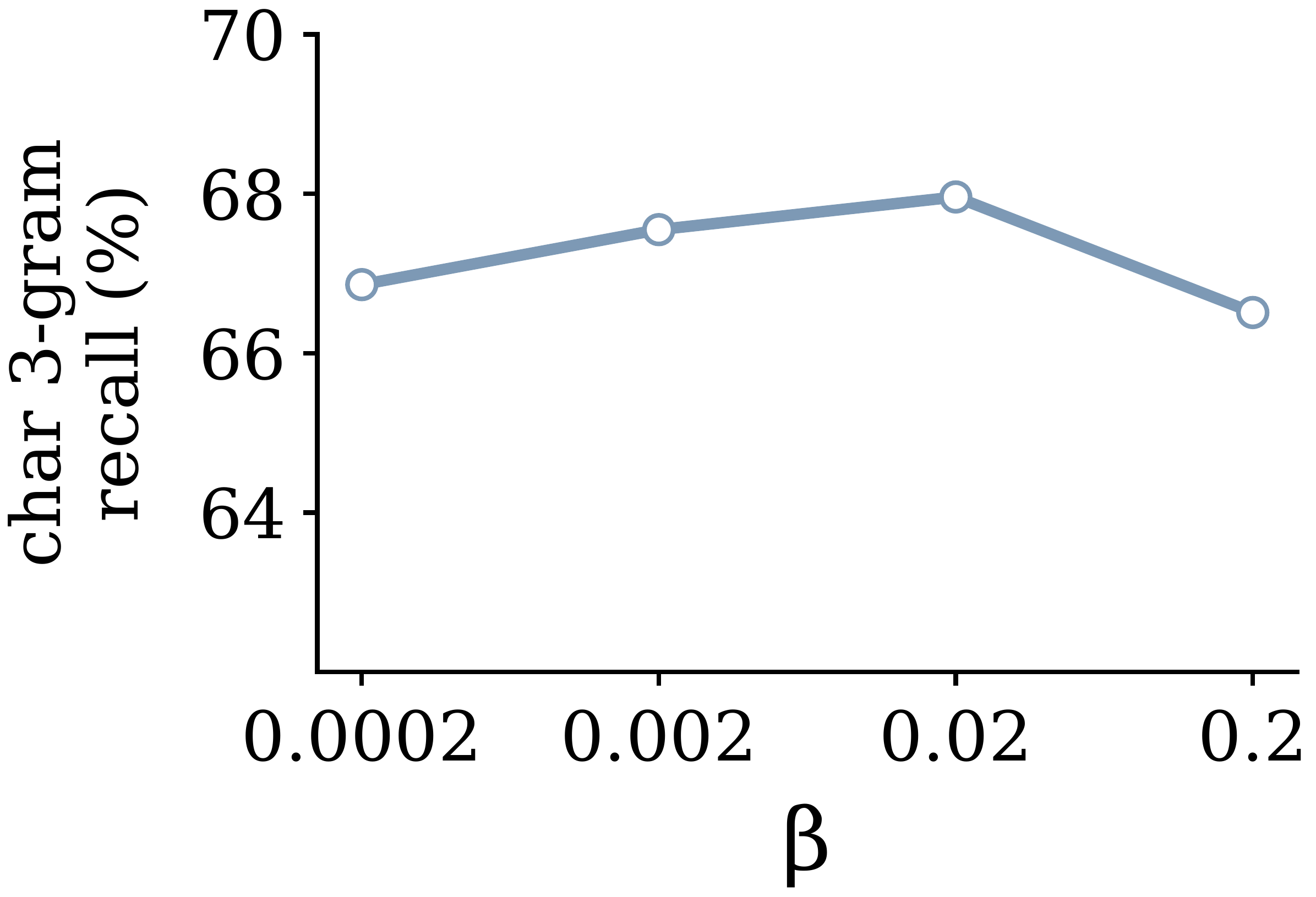}
    \caption{
    Teacher weight $\beta$ in \model, \emph{char 3-gram recall} as metric}
    \label{fig:hyper sensitivity-3-gram}
  \end{minipage}
  \hfill
  \begin{minipage}[t]{0.31\linewidth}
    \centering
    \includegraphics[width=1\linewidth]{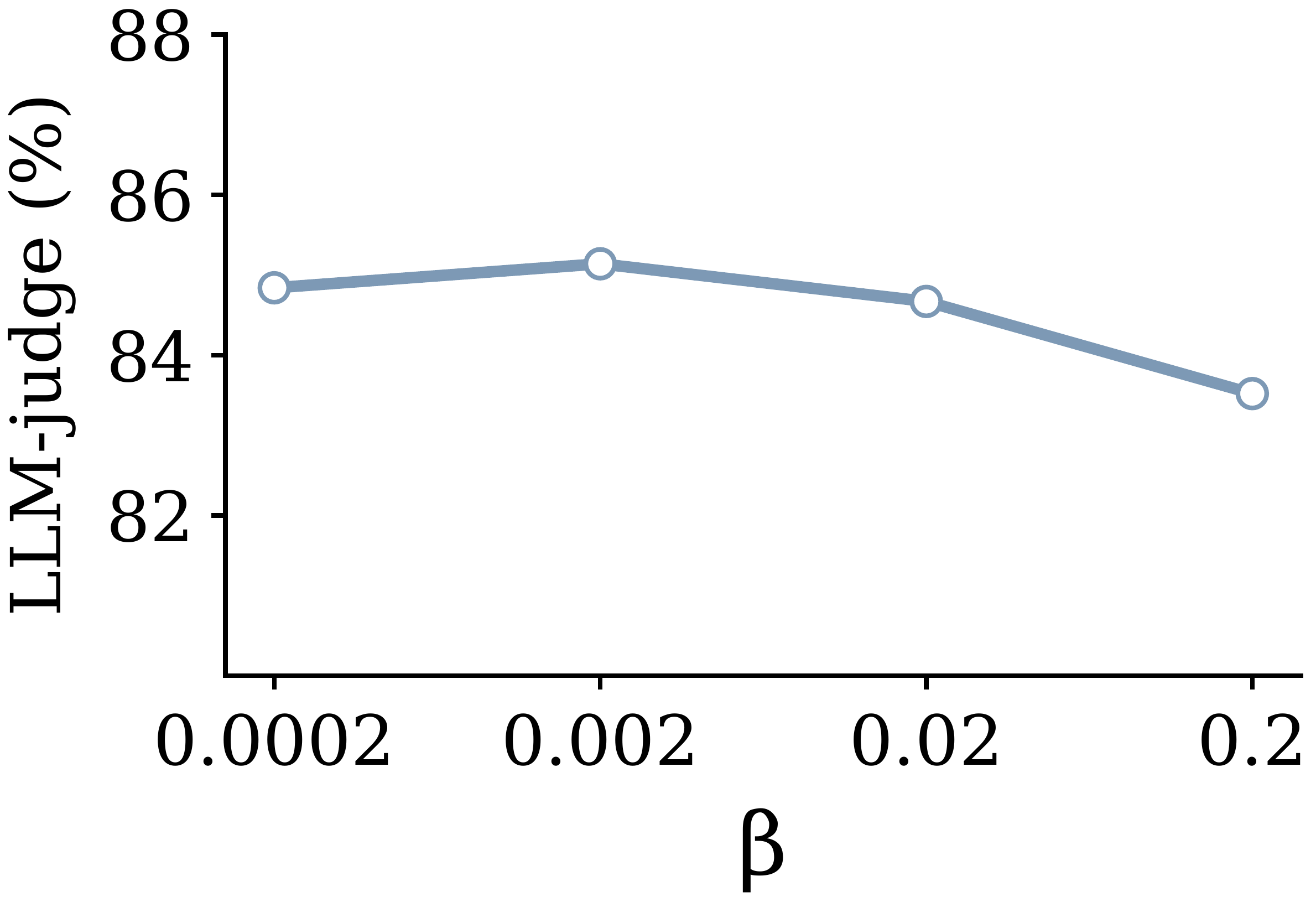}
    \caption{
    Teacher weight $\beta$ in \model, \emph{LLM-judge} score as metric}
    \label{fig:hyper sensitivity-LLM-judge}
  \end{minipage}
  \label{fig:hyper 3-metric}
\end{figure} 


\subsection{Naturally Multilingual MKQA: Detailed Results and Stress Tests}
\label{sec:app:exp:mkqa}

This appendix provides the supporting numbers for the MKQA generalization experiments referenced in \Cref{sec:exp:mkqa} and serves the low-resource OOD discussion in the main text. We use the original native-speaker MKQA questions \citep{DBLP:journals/tacl/LongpreLD21} paired with English Wikipedia as the sole evidence source, with the same \texttt{Llama-3.2-3B-Instruct} backbone and reward configuration as our \mbox{ENKB-RAG-5} runs.

\paragraph{Aggregated metrics (ALL-AVG).}
Beyond the per-language Composite, Qwen judge, and Llama judge already reported in \Cref{tab:mkqa}, \Cref{tab:mkqa-allavg} reports the two judge-independent signals (language consistency and char 3-gram recall) averaged over all 6 MKQA evaluation languages.

\begin{table}[h]
\centering
\setlength{\tabcolsep}{6pt}
\caption{Aggregated MKQA results (ALL-AVG over ko/th/vi/en/ja/no, \%; \texttt{Llama-3.2-3B-Instruct} backbone). \model\ matches or improves Naive-RL on every metric, including the two judge-independent signals (LC and char 3-gram).}
\label{tab:mkqa-allavg}
\begin{tabular}{l|ccccc}
\toprule
Method & Composite & LC & Char 3-gram & Qwen judge & Llama judge \\
\midrule
Base     & 53.70 & 44.00 & 41.39 & 65.69 & 63.75 \\
Naive-RL & \underline{57.18} & 44.00 & \underline{42.85} & \underline{71.27} & \underline{70.58} \\
\rowcolor{gray!15}
\model   & \textbf{57.82} & \textbf{44.50} & \textbf{43.05} & \textbf{71.98} & \textbf{71.76} \\
\bottomrule
\end{tabular}
\end{table}

\subsubsection{Stress tests on extremely low-resource OOD languages.}
\label{sec:app:exp:mkqa:lowres}

The most informative MKQA splits for assessing teacher anchoring are languages on which the base \texttt{Llama-3.2-3B-Instruct} model itself nearly fails. We highlight two such cases: \textbf{Norwegian (no)}, a Germanic language typologically distant from all training languages (Koreanic, Tai--Kadai, Austroasiatic) with base LC of only $7\%$; and \textbf{Khmer (km)}, an Austroasiatic language with a unique Khmer script that is severely underrepresented in standard LLM pre-training corpora, where the base model achieves only $18\%$ LC and $2.20\%$ char 3-gram recall.

\begin{table}[h]
\centering
\footnotesize
\setlength{\tabcolsep}{6pt}
\caption{Low-resource OOD stress tests on MKQA: Norwegian (no, base LC $7\%$) and Khmer (km, base LC $18\%$, unique script). \texttt{Llama-3.2-3B-Instruct} backbone. \model\ is the only method that improves grounding (char 3-gram, both judges) on both stress tests; on Norwegian it also lifts LC.}
\label{tab:mkqa-lowres}
\begin{tabular}{l|l|cccc}
\toprule
Language & Method & LC (\%) & Char 3-gram (\%) & Qwen judge (\%) & Llama judge (\%) \\
\midrule
\multirow{3}{*}{Norwegian (no)}
 & Base     & 7 & 51.46 & 67.16 & 63.80 \\
 & Naive-RL & 7 & \underline{56.96} & \underline{74.38} & \underline{72.15} \\
\rowcolor{gray!15}\cellcolor{white}
 & \model   & \textbf{9} & \textbf{57.63} & \textbf{77.93} & \textbf{75.45} \\
\midrule
\multirow{3}{*}{Khmer (km)}
 & Base     & 18 & 2.20 & 10.70 & 15.95 \\
 & Naive-RL & 18 & \underline{12.97} & \underline{30.25} & \underline{31.75} \\
\rowcolor{gray!15}\cellcolor{white}
 & \model   & \textbf{18} & \textbf{13.63} & \textbf{30.75} & \textbf{33.60} \\
\bottomrule
\end{tabular}
\end{table}

Two observations are consistent across both languages:
(i) \textbf{Naive-RL stalls at the base LC ceiling.} On Norwegian it remains at $7\%$ LC and on Khmer at $18\%$, indicating that unconstrained reward optimization cannot recover language adherence when the base model lacks the underlying linguistic competence.
(ii) \textbf{The teacher anchor stabilizes grounding even when LC is capacity-bottlenecked.} \model\ achieves the largest OOD composite gain among all evaluated languages on Norwegian (+2.38 over Naive-RL) and the largest cross-family Llama-judge gain on Khmer (+1.85), without dragging the model further toward English. The Khmer ceiling at $18\%$ LC reflects a hard capacity limit of the 3B backbone for that script; nonetheless, content quality (char 3-gram, judge scores) still improves under the prefix-wise anchor.


\subsection{Qualitative Case Study}
\label{sec:app:exp:case study}


The main text presents the Indonesian case (\Cref{fig:case}); \Cref{fig:case_ja,fig:case_ko,fig:case_thai,fig:case_viet} give parallel Japanese, Korean, Thai, and Vietnamese examples on Hotpot-ENKB5 with the \texttt{Llama-3.2-3B-Instruct} backbone. Highlighting follows the same convention as \Cref{fig:case}: red marks language drift or evidence-unsupported content, and green marks correct, evidence-grounded content in the target language.

Across languages, the same two patterns recur. \textbf{(i) Language drift.} Baseline systems start in the target language but slip into copied English spans from the retrieved passages (most conspicuously for Korean and Thai, whose non-Latin scripts make the switch obvious), whereas \model\ sustains the target language throughout the answer. \textbf{(ii) Evidence misuse.} Baselines tend to latch onto a salient-but-unsupported entity, while \model\ follows the full evidence chain to the entity the question actually asks about. These qualitative differences mirror the aggregate gains in language consistency and char 3-gram recall.

\paragraph{A case where the student surpasses its teacher.} The Japanese example (\Cref{fig:case_ja}) is especially instructive: the frozen 70B teacher \emph{abstains} (it declines to commit to an answer), yet the \model\ student, anchored to that same teacher only on its own visited prefixes, produces the correct, evidence-grounded Japanese answer. This concretely shows that the reverse-KL anchor acts as a \emph{stabilizer on student-visited prefixes} rather than as pure capability transfer: the student can exceed the teacher on a given instance, consistent with our teacher-quality ablation (\Cref{sec:app:exp:teacher_quality}), where even a same-size weak teacher still yields gains.

\begin{figure*}[tbp]
   \centering
   \includegraphics[width=1\linewidth]{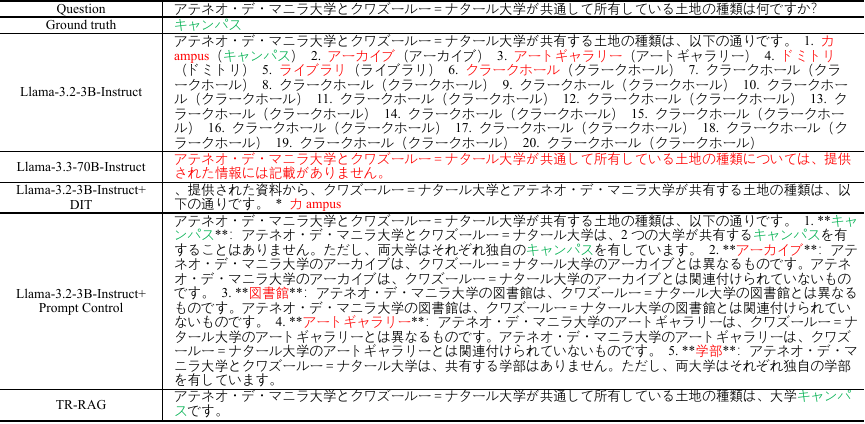}
   \caption{Case study. Llama-3.2-3B-Instruct as the backbone, Japanese as language, in Hotpot-ENKB5.}
   \vspace{-2mm}
   \label{fig:case_ja}
\end{figure*}

\begin{figure*}[tbp]
   \centering
   \includegraphics[width=1\linewidth]{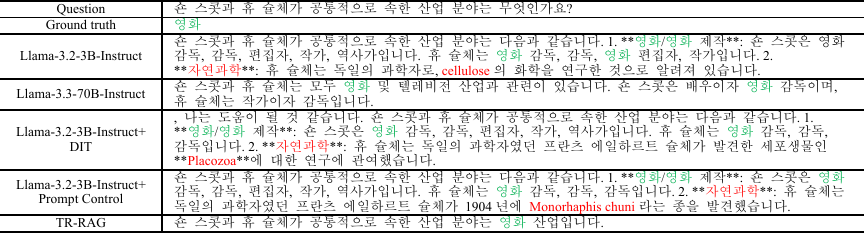}
   \caption{Case study. Llama-3.2-3B-Instruct as the backbone, Korean as language, in Hotpot-ENKB5.}
   \vspace{-2mm}
   \label{fig:case_ko}
\end{figure*}

\begin{figure*}[tbp]
   \centering
   \includegraphics[width=1\linewidth]{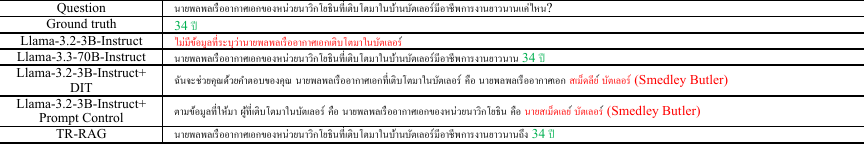}
   \caption{Case study. Llama-3.2-3B-Instruct as the backbone, Thai as language, in Hotpot-ENKB5.}
   \vspace{-2mm}
   \label{fig:case_thai}
\end{figure*}

\begin{figure*}[tbp]
   \centering
   \includegraphics[width=1\linewidth]{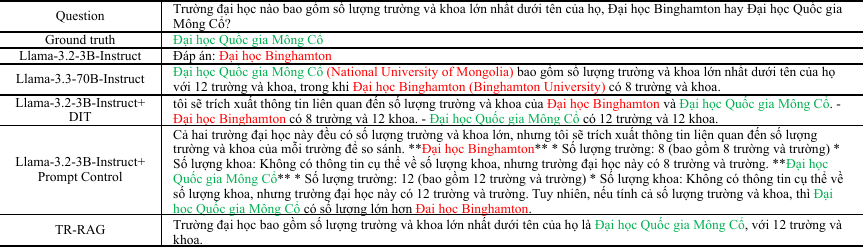}
   \caption{Case study. Llama-3.2-3B-Instruct as the backbone, Vietnamese as language, in Hotpot-ENKB5.}
   \vspace{-2mm}
   \label{fig:case_viet}
\end{figure*}

\subsection{Robustness as an Insurance Premium: Extended Discussion}
\label{sec:app:exp:robustness}

A consistent observation across \Cref{tab:llama-bioasq,tab:qwen-bioasq,tab:llama-hotpotqa,tab:qwen-hotpotqa,tab:mkqa} is that Naive-RL occasionally edges \model\ on the LLM-judge metric in ``safe'' settings (e.g., BioASQ Llama-3B by 1--1.5pp), while \model\ wins on every other axis. This is a \emph{designed} trade-off rather than a flaw: an unconstrained policy that is free to optimize the judge alone will, by construction, reach a marginally higher peak than the same policy under a teacher anchor, \emph{provided it does not collapse}.

But the same Naive-RL pipeline collapses outside this safe regime: on Hotpot-ENKB5 with Qwen-4B (\Cref{tab:qwen-hotpotqa}), Naive-RL's Indonesian LC drops to \textbf{28.27\%} (vs.\ \textbf{55.60\%} for \model), roughly $27$pp below both \model\ and the base model, and on the typologically distant Norwegian/Khmer stress tests (\Cref{tab:mkqa-lowres}; Appendix~\ref{sec:app:exp:mkqa:lowres}) the base LC is only 7\%/18\%, where Naive-RL stalls at that ceiling while \model\ still improves grounding. The teacher anchor therefore functions as a \emph{safety net} whose cost is a small \emph{insurance premium}: we give up 1--1.5pp on the judge in safe settings in exchange for protection against these large in-domain LC collapses and against base-ceiling stalls on distant OOD languages.

A second, complementary effect is precision: on lexically demanding domains (e.g., BioASQ entities such as \texttt{BRCA1} or \texttt{Acetaminophen}) the judge's smoother signal lets Naive-RL paraphrase, while the prefix-wise anchor keeps \model\ on the exact terminology, yielding consistently higher char 3-gram recall.

\newpage
\clearpage

\section{Related Work}
\label{sec:app:related}

\paragraph{Cross-lingual RAG and generation-side failures.}
Retrieval-augmented generation (RAG) grounds LLMs with external evidence and is a standard recipe for knowledge-intensive generation \citep{DBLP:conf/nips/LewisPPPKGKLYR020, DBLP:conf/nips/AsaiYKH21, Gao2023RetrievalAugmentedGF}.
Recent biomedical RAG evidence further cautions that retrieval alone does not guarantee better answers: \citet{nourbakhsh2026retrieval} report small and inconsistent gains from retrieval across open-weight models and biomedical QA settings, pointing to the generator's ability to use evidence as a central bottleneck.
Cross-lingual open-retrieval QA \citep{DBLP:journals/tacl/ClarkPNCGCK20, asai2021xor, DBLP:journals/tacl/LongpreLD21} and multilingual IR benchmarks such as MIRACL \citep{zhang-etal-2023-miracl} reveal the difficulty of retrieving and answering across languages, but most prior work focuses on \emph{retrieval quality}; we instead hold retrieval fixed and target \emph{generation-side} post-training under English-only evidence.
When the evidence language differs from the user language, multilingual RAG surfaces generation-side failures (response language drift, code-switching, and brittle evidence use \citep{DBLP:journals/corr/abs-2505-10089, DBLP:journals/corr/abs-2407-01463, li2025language}), even if evidence is homogenized into a pivot language \citep{DBLP:journals/corr/abs-2504-03616, Qi2025OnTC}.
At the mechanistic level, \citet{zhang2026safetyneurons} show that multilingual LLMs route cross-lingual behavior through a tiny set of shared English-centric neurons; non-high-resource languages lack autonomous internal pathways and instead piggyback on English representations, a structural bias that helps explain why injecting English evidence amplifies language drift.
Concurrent benchmarks XRAG \citep{DBLP:journals/corr/abs-2505-10089} and CrossRAG \citep{DBLP:journals/corr/abs-2504-03616} characterize these failures but do not propose \emph{training} recipes.
Recent concurrent work applies RL to multilingual or cross-lingual RAG: LcRL \citep{qi2026lcrl} introduces language-coupled group sampling and an anti-consistency penalty within GRPO for search-augmented multilingual generation; CroSearch-R1 \citep{qi2026crosearch} uses multi-turn cross-lingual retrieval with RL to dynamically integrate knowledge across languages; and RAG-RL \citep{huang2025ragrl} trains a reasoning reader with GRPO and curriculum learning for monolingual RAG.
These methods focus on retrieval-side optimization or monolingual settings.
In the broader multilingual RL space, BRIDGE \citep{hwang2025bridge} combines multilingual GRPO with a language-consistency reward for reasoning fidelity, and \citet{elhady2026xlsc} enforce cross-lingual self-consistency via unsupervised RL without parallel data; both target multilingual reasoning rather than RAG generation under language-mismatched evidence.
\textbf{Our work is, to our knowledge, the first to address cross-lingual generation-side failures via a dedicated post-training framework that couples reward optimization with on-policy distillation under English-only evidence}.

\paragraph{Reward-based RL, knowledge distillation, and on-policy distillation for LLMs.}
Preference optimization and RLHF fine-tune LLMs with reward signals \citep{schulman2017proximal, ouyang2022training}; recent on-policy variants stabilize training under noisy sequence-level rewards.
A recurring challenge is that early decoding deviations cascade (\emph{exposure bias} / prefix mismatch), making offline SFT/KD unable to correct errors on \emph{student-visited} prefixes, while sequence-level rewards (e.g., discrete language checks or judge scores) yield delayed, high-variance gradients.
Knowledge distillation (KD) transfers capability from a larger teacher to a smaller student \citep{DBLP:journals/corr/HintonVD15}, with a classic token-level line (DistilBERT \citep{sanh2019distilbert}, TinyBERT \citep{jiao2020tinybert}, MiniLM \citep{wang2020minilm}) and a sequence-level variant (SeqKD \citep{kim2016sequence}).
More recently, on-policy distillation addresses prefix mismatch by querying the teacher on student-generated prefixes: MiniLLM \citep{gu2023minillm} optimizes a sequence-level reverse-KL objective; GKD \citep{DBLP:conf/iclr/AgarwalVZSGGB24} applies token-level KL regularization on on-policy samples without teacher rollouts.
Subsequent work further explores the RL--distillation interface: RLAD \citep{zhang2026rlad} proposes Trust Region Ratio Distillation, replacing standalone KL regularization with a PPO/GRPO-style likelihood-ratio objective anchored to a teacher--student mixture, achieving advantage-aware distillation on student rollouts; OPSD \citep{zhao2026opsd} eliminates the need for an external teacher entirely by conditioning a single model on privileged information (e.g., verified reasoning traces) to supervise its own on-policy trajectories; and \citet{shen2026rlkd} leverage LLM-as-a-judge rewards for label-free RL-based knowledge distillation, enabling efficient distillation over unlabeled data. All three target monolingual reasoning benchmarks.
For multilingual small models, \citet{payoungkhamdee2026dudi} combine sequence-level and token-level distillation signals with a cross-lingual verbalizer; our setting instead embeds teacher anchoring inside reward-optimized RAG generation under language-mismatched evidence.
Concurrent RLVR work also revisits credit assignment under sequence-level verifiable rewards: \citet{pala2026grail} propose gradient-reweighted advantages so that policy-gradient updates focus more on influential reasoning tokens rather than uniformly broadcasting one scalar advantage across the full response.
\textbf{Our method builds on GKD's on-policy distillation principle but makes three key departures that, taken together, address a problem none of the above works target}: (i)~we \emph{combine} the teacher anchor with GRPO-style task-reward optimization in a single objective rather than using distillation alone: the ablations in \Cref{table:ablation} confirm that this combination is strictly better than either component in isolation on the composite metric (+2.19 over RL-only, +3.52 over OPD-only); (ii)~we design a \emph{cross-lingual reward decomposition} (language consistency + character 3-gram recall + LLM-judge) tailored to English-evidence multilingual generation, where the LC reward, script-agnostic char 3-gram metric, and cross-lingual teacher selection are all domain-specific design choices meaningless in the monolingual settings prior work considers; and (iii)~we demonstrate empirically that the teacher anchor serves as a ``safety net'' against \emph{catastrophic language drift}, i.e., large in-domain LC collapses (up to $\sim$27pp, where reward-only RL drifts below even the base model) and base-ceiling stalls on distant OOD languages that pure reward-only RL cannot prevent (\Cref{sec:app:exp:robustness}); a failure mode unique to cross-lingual generation under language-mismatched evidence.

\paragraph{Evaluation signals for cross-lingual generation.}
LLM-as-a-judge \citep{DBLP:conf/nips/ZhengC00WZL0LXZ23} provides graded, evidence-conditioned evaluation for open-ended generation; character n-gram metrics such as chrF/chrF++ \citep{popovic-2015-chrf, popovic-2017-chrf} are robust across diverse scripts; and fastText-based language identification \citep{joulin2017bag, grave2018learning} monitors response-language adherence.
Recent judge-calibration work shows that self-evaluation signals can be elicited from base LLMs with minimal data to predict external LLM-judge scores \citep{zhang2026selfevaluation}.
At the same time, rubric-based RL can expose policies to LLM-judge biases and reward hacking; CHERRL \citep{wang2026cherrl} provides a controlled environment for reproducing and detecting such failures.
\textbf{Unlike prior work that uses these signals only for offline evaluation}, we repurpose all three as \emph{online reward components} within a unified RL objective, providing complementary training pressure on language adherence, lexical faithfulness, and semantic correctness simultaneously.



\end{document}